%% file: neurips_2026.tex
\documentclass{article}

 \usepackage[preprint]{neurips_2026}

\usepackage[utf8]{inputenc} 
\usepackage[T1]{fontenc}    
\usepackage{hyperref}       
\usepackage{url}    
\usepackage{makecell}
\usepackage{graphicx}
\usepackage{booktabs}       
\usepackage{amsfonts}       
\usepackage{nicefrac}       
\usepackage{microtype}      
\usepackage{xcolor}         
\usepackage{amsmath}
\usepackage{amssymb}
\usepackage{mathtools}
\usepackage{amsthm}
\usepackage{algorithm}
\usepackage{algpseudocode}
\newtheorem{theorem}{Theorem}[section]
\newtheorem{proposition}[theorem]{Proposition}
\newtheorem{lemma}[theorem]{Lemma}

\theoremstyle{definition}
\newtheorem{definition}[theorem]{Definition}

\theoremstyle{remark}

\newtheorem{fact}{Fact}

\newtheorem{assump}{Assumption}

\DeclareMathOperator*{\argmax}{argmax}

\newcommand{\expec}{\mathbb{E}}
\newcommand{\real}{\mathbb{R}}
\newcommand{\cA}{\mathcal{A}}

\newcommand{\cS}{\mathcal{S}}
\newcommand{\cF}{\mathcal{F}}
\newcommand{\cM}{\mathcal{M}}

\newcommand{\cP}{P}
\newcommand{\cR}{\mathcal{R}}

\newcommand{\T}{\mathbf{T}}
\newcommand{\cT}{\mathcal{T}}
\newcommand{\cX}{\mathcal{X}}

\newcommand{\NN}{\mathbb{N}}

\newcommand*{\infn}[1]{\left\|{#1}\right\|_{\infty}}

\renewcommand{\sp}[1]{\|{#1}\|_\text{\rm sp}}
\newcommand{\prob}{\mathbb{P}}

\DeclareMathOperator{\rint}{rint}

\usepackage{wrapfig}
\usepackage{todonotes}
\usepackage{caption}

\title{Learning Policy from a Single Trajectory in Average-Reward Markov Decision Process}

\author{Jongmin Lee\\
Seoul National University\\
\texttt{dlwhd2000@snu.ac.kr} \\
\And
Ernest K. Ryu \\
UCLA \\
\texttt{eryu@math.ucla.edu} 
\And
Vaneet Aggarwal \\
Purdue University \\
\texttt{vaneet@purdue.edu} 
}

\begin{document}

\maketitle

\begin{abstract}
While there is an extensive body of work characterizing the sample complexity of discounted cumulative-reward MDPs, finite sample analyses for average-reward MDPs have been limited, and most existing works rely on restrictive assumptions such as ergodicity or access to a generative model. In this work, we establish the first finite sample complexity guarantees from a single trajectory for weakly communicating average-reward MDPs. To this end, we study the dynamics of a single trajectory in weakly communicating MDPs  and based on this analysis, we develop novel model-free methods. Notably, our value-based and policy-based methods provide finite sample complexity guarantees of $\widetilde{O}(1/\varepsilon^2)$ and $\widetilde{O}(1/\varepsilon^4)$ from a single trajectory in weakly communicating MDPs, respectively. Furthermore, we introduce the first model-free method that requires no prior knowledge of problem-dependent quantities for communicating MDPs.
\end{abstract}

\section{Introduction}

Average-reward Markov decision processes (MDPs) provide a fundamental framework for modeling sequential decision-making problems whose objective is to maximize long-term steady-state performance. Under the average-reward criterion, characterizing the sample complexity for finding an $\varepsilon$-optimal policy has been an active topic in the reinforcement learning (RL) theory literature. However, most existing finite sample analyses rely either on restrictive sampling models, such as access to a generative model or i.i.d. offline data, or on restrictive structural assumptions, such as ergodicity. Overall, finite sample complexity analysis for average-reward RL from a single trajectory remains   unexplored.

\paragraph{Contribution.}
In this work, we establish finite sample complexity analyses from a single trajectory in weakly communicating average-reward MDPs for the first time (see Table~\ref{table:main}). To guarantee a PAC (probably approximately correct) bound for obtaining $\varepsilon$-optimal policy in this setup, we analyze the structure of weakly communicating MDPs through recurrent-transient classification and develop theoretical machinery based on probabilistic arguments with the stopping time and strong Markov property. Incorporating this machinery, our value-based method SAVIC and our policy-based method SCPMA achieve sample complexities of $\widetilde{O}(1/\varepsilon^2)$ and $\widetilde{O}(1/\varepsilon^4)$, respectively. We also introduce a first model-free method, SAVIC+ that requires no prior knowledge of problem-dependent quantities and achieves $\widetilde{O}(1/\varepsilon^2)$ sample complexity from a single trajectory for communicating MDPs, and develop a novel policy evaluation method for weakly communicating MDPs based on anchoring mechanism and variance-reduction techniques.

\begin{table}[t]
    \centering
    \resizebox{\columnwidth}{!}{%
    \begin{tabular}{ccccc}
        \toprule
        \textbf{Prior works} & \makecell{\textbf{MDP class}} & \makecell{\textbf{Complexity}} & \makecell{\textbf{Method type}} &
        \makecell{\textbf{Additional assumptions}}  \\
        \midrule
        Li et al.\ \cite{li2025stochastic} & ergodic & $\varepsilon^{-2}$ & model-free & -  \\
        \midrule
        Tuynman et al.\ \cite{tuynman2024finding} & communicating & $\varepsilon^{-2}$ & model-based & -  \\
        \midrule
        Lee et al.\ \citep{lee2025finite} & weakly communicating & $\varepsilon^{-8}$ & model-free &
        \makecell{finite coverage \&\\ ergodicity on behavior policy} \\
        \midrule
        Our work & weakly communicating & $\varepsilon^{-2}, \varepsilon^{-4}$ & model-free & -  \\
        \bottomrule
    \end{tabular}
    }
    \vspace{0.1in}
     \caption{Comparison of analyses for finding an $\varepsilon$-optimal policy from a single trajectory in average-reward MDPs. (Only the dependence on $\varepsilon$ is shown for the sample complexity.) For ergodic MDPs, \cite{li2025stochastic} establish finite sample complexity for stochastic first-order policy optimization methods that combine policy mirror descent with variance-reduced temporal-difference learning.  For communicating MDPs, \cite{tuynman2024finding} propose a model-based fixed-confidence algorithm without any prior knowledge of problem-dependent quantities and establish corresponding lower bounds. For weakly communicating MDPs, \cite{lee2025finite} study the sample complexity of average-reward fitted Q-iteration using function approximation and offline data generated from a single trajectory under additional finite coverage assumption and ergodicity assumption on the behavior policy. }
    \label{table:main}
\end{table}

\vspace{-0.05in}

\subsection{Prior works}
\paragraph{Average-reward MDPs.}
The average-reward MDP setup was first introduced in the dynamic programming literature by \cite{howard1960dynamic}, and \cite{blackwell1962discrete} established a theoretical framework for its analysis. In reinforcement learning (RL), average-reward MDPs have mainly been considered in the sample-based setting, where the transition matrix is unknown \citep{mahadevan1996average}. For this sampling setup, various methods have been proposed, including model-based methods \citep{jin2021towards, tuynman2024finding, zurek2024span}, value-based methods \citep{wan2024convergence, bravo2024stochastic, chen2025non}, and policy gradient methods \citep{bai2024regret,murthy2023convergence, kumar2024global}. On the theoretical side, sample complexity analyses under generative model access \citep{wang2017primal, zhang2023sharper, li2025stochastic, jin2024feasible, lee2025near} and offline datasets \citep{ozdaglar2024, gabbianelli2024offline, zurek2025optimal, lee2025finite}, as well as regret minimization frameworks \citep{burnetas1997optimal, Jaksch2010, zhang2019regret, boone2024achieving}, have been actively studied.

\paragraph{Value Iteration}
Value iteration (VI) was first introduced in the dynamic programming literature \cite{bellman1957Markovian} and serves as a fundamental algorithm based on computing the value functions. The sample-based variants, such as TD-Learning~\cite{Sutton1988}, Fitted Value Iteration~\cite{ernst2005tree,munos2008finite}, and Deep Q-Network~\cite{MnihKavukcuogluSilveretal2015} are the workhorses of modern reinforcement learning algorithms~\cite{Bertsekas96,sutton2018reinforcement,SzepesvariBook10}, and routinely applied in diverse settings, including factored MDPs \cite{rosenberg2021oracle}, robust MDPs \cite{kumarefficient}, MDPs with reward machines \cite{bourel2023exploration}, MDPs with options \cite{fruit2017regret}, and generative model  \cite{wainwright2019variance, sidford2023variance, lee2025near}. For the average-reward MDP, the convergence of VI in average-reward MDPs also has been extensively studied by \citep{seneta2006non, hubner1977improved,federgruen1978contraction, van1981stochastic, della2012illustrated, schweitzer1977asymptotic, schweitzer1979geometric}.  

\paragraph{Policy gradient methods.}
Policy gradient methods \citep{williams1992simple,sutton1999,konda1999actor,kakade2001natural} are foundational reinforcement learning algorithms, commonly implemented with deep neural networks for policy parameterization \citep{schulman2015trust,schulman2017proximal}. In line with their practical success, convergence and sample complexity of policy gradient variants have been extensively studied across settings \citep{shani2020adaptive, mei2020global, agarwal2021theory, cen2022fast, xiao2022convergence, bhandari2024global,mondal2024improved}. For the average-reward MDP, \cite{sutton1999, marbach2001simulation, baxter2000direct} establish policy gradient theorem in unichain MDP, and recently, \cite{lee2025multi} establishes policy gradient theorem for weakly communicating and multichain MDP. The convergence of variants of policy gradient methods including natural policy gradient, projected policy gradient, and policy mirror ascent have been studied by  \cite{murthy2023convergence,kumar2024global,lee2025multi}, as well as in parameterized settings \cite{bai2024regret, patel2024towards, ganesh2024order, ganeshsharper, ganesh2025regret}.

\subsection{Preliminaries and notations}

\paragraph{Average reward.}
Let $\cM(\cX)$ be the space of probability distributions over a set $\cX$. We write $(\cS, \cA, P, r)$ to denote the infinite-horizon undiscounted MDP with finite state space $\cS$, finite action space $\cA$, transition matrix $P\colon \cS \times \cA \rightarrow \cM(\cS)$, and bounded reward $r\colon  \cS \times \cA \rightarrow [-B,B]$. Denote $\pi\colon \cS \rightarrow \cM(\cA)$ for a policy, 
\begin{align*}
    J^{\pi}(s,a)=\lim_{H\rightarrow \infty} \frac{1}{H}\expec_{\pi}\bigg[\sum^{H-1}_{h=0}  r(s_h, a_h) \,\Big|\, s_0=s, a_0=a\bigg]
\end{align*}
for average-rewards of a given policy \cite[Section 9.1]{Puterman2014}, and \begin{align*}
Q^\pi(s,a)=\lim _{H\rightarrow \infty} \frac{1}{H}\sum_{h=1}^{H} \expec_\pi \bigg[\sum^{h-1}_{i=0}\big(r(s_i,a_i)-J^{\pi}(s_i,a_i)\big)\,\Big|\, s_0=s, a_0=a\bigg]    
\end{align*}
for state-action value function (under an aperiodicity assumption, averaging with respect to $H$ can be omitted in the definition \cite[Section 8.2]{Puterman2014}), where $\expec_{\pi}$ denotes the expected value over all trajectories $(s_0, a_0, s_1, a_1, \dots, s_{H-1}, a_{H-1})$ induced by $P$ and $\pi$. Denote $r^{\pi}(s)=\mathbb{E}_{a \sim \pi(\cdot\,|\,s) }\left[r(s,a)\right]$ for the reward induced by policy $\pi$. For $x\in\real^d$ we denote by $\|x\|_{\infty}=\max_{i}|x_i|$ its infinity norm and by $\sp{x}=\max_{i}x_i-\min_{i}x_i$ its  span seminorm.  We say $J^{\star}$ is optimal average reward if $J^{\star}=\max_{\pi}J^{\pi}$ and optimal average reward always exists in average-reward MDP \citep[Section 9.1]{Puterman2014}. We say $\pi$ is an $\varepsilon$-optimal policy if $\infn{J^{\star}-J^{\pi}} \le \varepsilon$. We denote  $P_{\star}= 
\lim_{H\rightarrow \infty}\frac{1}{H} \sum^{H-1}_{i=0}P^i$
for the Ces\`aro limit of a stochastic matrix $P$. (Ces\`aro limit of a stochastic matrix always exists \cite[Theorem A.6]{Puterman2014}). We denote the transition matrix induced by policy $\pi$ and $P$ as
\[
P^{\pi}((s,a), (s',a'))=
\mathrm{Prob}((s,a)\rightarrow (s',a')\,|\,
s'\sim P(\cdot\,|\,s,a),a' \sim \pi(\cdot\,|\,s')),
\]
where $P^{\pi} \in \mathbb{R}^{|\cS||\cA| \times |\cS||\cA|}$.
 Then, by definition, we can write  \cite[Section 8.2]{Puterman2014}
\[J^\pi=P^\pi_{\star}r, \qquad Q^\pi=(I-P^\pi+P^{\pi}_\star)^{-1}(I-P_\star^\pi)r
\]
For every policy $\pi$, it is known that $Q^\pi$ and $ J^\pi$ satisfy the following \emph{Bellman equations} \citep[Theorem~8.2.6] {Puterman2014}:    
\begin{align*}
  &P^\pi J^\pi=J^\pi , \qquad r+ P^\pi Q^\pi = J^\pi+Q^\pi.
\end{align*} 

\paragraph{MDP classes.}
 MDPs are classified according to the structure of their transition matrices. 
(For definitions of irreducible classes, accessibility, and aperiodicity of transition matrices, refer to \cite[Appendix A.2]{Puterman2014}.)
%
%
An MDP is \emph{ergodic} if, for every policy $\pi$, the induced transition matrix has a single recurrent class and is aperiodic. An MDP is \emph{communicating} if there exists a set of states such that every state in the set is accessible from every other state in the set under some policy. An MDP is \emph{weakly communicating} if there is a set of states where each state in the set is accessible from every other state in the set under some policy, together with a possibly empty set of states that are transient under all policies. Every ergodic MDP is communicating, and every communicating MDP is weakly communicating. 
  
  
\paragraph{Value Iteration.}
Given an undiscounted MDP $(\cS, \cA, P, r)$, for $Q \in \real^{|\cS \times \cA|}$, define the Bellman optimality and consistency operators $\T$ and  $\T^\pi$ as
\begin{align*}
\T Q(s,a)&=r(s,a)+\mathbb{E}_{s'\sim P(\cdot\,|\,s,a)}\Big[\max_{a' \in \cA} Q(s',a')\Big]
\\\T^\pi Q(s,a)&=r(s,a)+\mathbb{E}_{s'\sim P(\cdot\,|\,s,a), a' \sim \pi(\cdot \,|\, s')}[ Q(s',a')]
\end{align*}
for all $s \in \cS$ and $a \in \cA$, respectively.
We define the standard Value Iteration (VI) for Bellman consistency and optimality operators as
\[ Q^{k}=\T Q^{k-1}, \qquad  Q^{k}=\T^\pi Q^{k-1} \qquad\text{ for } k=1,2,\dots,K,
\]
 where $Q^0$ is an initial point.   For notational conciseness, we write $\T^{\pi}Q=r+P^{\pi}Q $.
and $ \mathbb{E}_{ a \sim \pi(\cdot \,|\, s)}[ Q(s,a)]=Q (s, \pi(s))$. 

\paragraph{Policy mirror ascent.}
Given a weakly communicating average-reward MDP $(\cS, \cA, P, r)$, for a positive policy $\pi\in \Pi_+$, the policy gradient $\nabla J^\pi$ is
\[\nabla_{\theta}  J^{\pi_\theta} =    \sum_{s \in \cR}
\sum_{a \in \cA}
d^{\pi_\theta}(s) \nabla_{\theta} \pi_{\theta} (a \mid s) Q^{\pi_\theta}(s,a), \]
where $\cR$ is a single recurrent class and $d^{\pi}$ is the unique stationary distribution \cite{lee2025multi}.
Next, consider the direct parameterization $\pi_{\theta} (a \,|\, s) = \theta_{s,a},$ where $\theta \in \real^{|\cS|\times |\cA|}$ satisfies $\sum_{a \in \cA} \theta_{s,a}=1$ and $\theta_{s,a}\ge 0$ for all $s\in \cS$ and $a\in \cA$. 
Let \(h:\cM(\cA)\to\mathbb{R}\) be a strictly convex function and continuously
differentiable on the relative interior of \(\cM(\cA)\), denoted as
\(\rint\cM(\cA)\).
Define the Bregman divergence generated by \(h\) as
\[
D(p,p') \;=\; h(p)-h(p')-\langle \nabla h(p'),\, p-p' \rangle,
\qquad\forall\,p\in\cM(\cA),\,p'\in\rint\,\cM(\cA).
\]
Then, we define policy mirror ascent as 
\[\pi_{k+1}(\cdot \,|\, s)=
 \argmax_{p\in \cM (\cA)}
 \left\{
 b_k \sum_{a \in \cA}Q^{\pi_k}(s,a)p(a)
 - D\big(p(\cdot),\pi_{k}(\cdot \,|\, s)\big)\right\}\]
for all $s\in \cR$ where $\pi_0$ is an initial policy, and $b_k$ is step size.
 
\section{A Single Trajectory in Weakly Communicating MDPs}\label{sec:sin}

In this section, we provide stochastic analysis for a single trajectory in weakly communicating MDPs. We first study the structure of weakly communicating MDPs through recurrent-transient classification. For the analysis, we define the class of policies
\begin{alignat*}{3}
\Pi = \text{set of all policies}=\cM(\cA)^\cS, \qquad \Pi_{+} = \{ \pi\in \Pi \,|\, \pi (a \,|\, s) >0 \text{ for all } (s,a) \in \cS\times \cA\},
\end{alignat*}
so that $\Pi_+$ is the (relative) interior of $\Pi$. 
\subsection{Recurrent-transient classifications}
\begin{definition}
Given a policy $\pi\in \Pi$, a state $s\in \cS$ is recurrent if its return time starting from $s$ is finite with probability $1$. Otherwise, $s$ is transient.
\end{definition}
With the policy class $\Pi_+$ instead of $\Pi$, \citep{lee2025policy} shows that  the recurrent--transient classification of states is invariant across all $\pi \in \Pi_+$ on finite state and action spaces. 
Together with the structure of weakly communicating MDPs, the following fact follows. 
\begin{fact}[Fact 1, Lemma 26, \citep{lee2026policy}]\label{fact:rec}
In a weakly communicating MDP, 
$\cS = \cR \cup \cT$, where $\cR$ is a single recurrent class and $\cT$ is a transient set, and this decomposition is invariant for all policy $\pi \in \Pi_+$. Moreover, $\cT$ continues to be transient set  for $\pi \in \Pi$. 
\end{fact}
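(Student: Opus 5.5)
The plan is to fix the state sets from the MDP structure itself and then show that every policy in $\Pi_+$ induces exactly this recurrent--transient split. By the definition of a weakly communicating MDP, $\cS = C \cup \cT$. Here $C$ is a closed set of states in which every state is accessible from every other under some policy, and $\cT$ is the (possibly empty) set of states that are transient under every policy. I will show that for each $\pi\in\Pi_+$, the chain $P^\pi$ restricted to states has $C$ as its unique recurrent class and $\cT$ as its transient set. Then the decomposition does not depend on $\pi$ at all, and we set $\cR := C$.

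The argument has four steps. First, for $\pi\in\Pi_+$, every action has positive probability at every state. Hence $s'$ is accessible from $s$ under $P^\pi$ whenever it is accessible under some policy. To see this, take a path $s=s_0,s_1,\dots,s_m=s'$ realized by actions $a_i$ with $P(s_{i+1}\mid s_i,a_i)>0$; under $\pi$ this path has probability at least $\prod_i \pi(a_i\mid s_i)P(s_{i+1}\mid s_i,a_i)>0$. Consequently $C$ is a single communicating class of $P^\pi$. Second, $C$ is closed under $P^\pi$. If some action led from $C$ to a state of $\cT$, then by the same positivity argument that $\cT$-state would be reachable from $C$, and $C$ is reachable from it (otherwise it could not be transient under all policies while the chain stays in $\cT$). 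It would then communicate with $C$ and fail to be transient under the policy keeping it inside the recurrent class, which is a contradiction. A finite, closed, communicating class is recurrent. Third, states in $\cT$ are transient under every $\pi\in\Pi$, in particular under $\pi\in\Pi_+$, so no other recurrent class exists. This gives the invariant decomposition $\cS=\cR\cup\cT$.

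The last claim, that $\cT$ stays transient for arbitrary $\pi\in\Pi$, is immediate from the defining property of $\cT$ in a weakly communicating MDP. Alternatively, it can be read off from the invariance result of \citep{lee2025policy}, together with the observation that making action probabilities zero cannot create new return paths.

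The main obstacle is the closedness step: making sure that the set $C$ in the definition is really closed, rather than being some communicating set from which probability can leak out. If the definition as stated does not give closedness directly, I would argue by contradiction. Let $\pi$ be uniform, and suppose its set of recurrent classes differs from $\{C\}$. Take a recurrent class $C'$ of $P^\pi$. It is closed under all actions, because full support means that any leaving action would make $C'$ non-closed. Consequently $C'$ contains a state that is recurrent under $\pi$, so that state is not in $\cT$ and must lie in $C$. Since $C'$ is closed under all actions, accessibility within $C$ forces $C\subseteq C'$, and maximality of $C$ gives $C'=C$.
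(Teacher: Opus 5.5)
Your proof is correct, and it takes a more explicit route than the paper. The paper gives no argument for this Fact. It cites it and says it follows from the invariance of the recurrent--transient classification over $\Pi_+$ \citep{lee2025policy} together with ``the structure of weakly communicating MDPs.'' Your Step~1 is exactly the mechanism behind that invariance: a path with positive probability under some policy has positive probability under every $\pi\in\Pi_+$, so accessibility under $\pi\in\Pi_+$ depends only on $P$. Your uniform-policy argument then supplies the combination step the paper leaves implicit.

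I would promote that argument from a fallback to the actual proof of Step~2. The paper's own definition of weakly communicating omits the word ``closed,'' and your argument shows closedness is automatic, in the strong form $P(s'\mid s,a)=0$ for all $s\in\cR$, $s'\notin\cR$ and \emph{every} $a\in\cA$. The paper later uses precisely this property without separate justification, in Lemma~\ref{lem:inv} and Lemma~\ref{lem:rec}, to make $P_\cR$ a genuine transition kernel. So your route gives a self-contained foundation for those steps, while the citation route is shorter but defers everything to external results.

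Two small repairs are needed.

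First, the last sentence of your original Step~2 (``fail to be transient under the policy keeping it inside the recurrent class'') does not yield a contradiction as stated. Communicating with $C$ makes a state recurrent only once some state of $C$ is known to be recurrent. Argue under the same $\pi$ instead: some state is recurrent under $\pi$; it is not in $\cT$, so it lies in $C$; $t$ communicates with it, so $t$ is recurrent by the class property. Likewise, in the fallback, $C'\subseteq C$ holds because every state of $C'$ is recurrent and hence not in $\cT$, not because of any ``maximality'' of $C$.

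Second, drop the alternative justification of the final claim. Setting action probabilities to zero cannot create return paths, but it can delete escape routes. So a state transient under every $\pi\in\Pi_+$ can become recurrent under some $\pi\in\Pi$: take a state with a self-loop action and an action leading to an absorbing state. The claim for $\Pi$ holds only because it is built into the definition of $\cT$, which was your first justification.
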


Fact~\ref{fact:rec} implies that, for $\pi\in \Pi_+$, $P^\pi$ has only a single recurrent class for a weakly communicating MDP, and the recurrent-transient classification is determined by the transition kernel $P$, not by the particular choice of $\pi\in \Pi_+$. In contrast, a policy $\pi\in \Pi\setminus\Pi_+$, i.e., a policy that assigns zero probability to some actions, may induce a different classification and a multichain structure. But $\cT$ continues to be transient set for a policy $\pi\in \Pi\setminus\Pi_+$ for weakly communicating MDP.

\begin{wrapfigure}{r}{0.42\textwidth}
\vspace{-.15in}
\captionsetup{type=algorithm}
\hrule
\vspace{2pt}
\begin{minipage}{0.42\textwidth}
\caption{\mbox{\sc recur}$(n_1,n_2)$}
\label{alg:rec}
\vspace{-.1in}
\hrule
\begin{algorithmic}
   \State {\bfseries Input:} 
   $n_1, n_2 \in \mathbb{N}$
      \State {\bfseries Initialize:} 
   $R =\{\}$
       \For{$t=0,1,\dots, n_{1}+n_{2}$}
    \State $a_t \sim \pi_b(\cdot \,|\, s_{t}), s_{t+1} \sim P(\cdot \,|\, s_t,a_t)$
    \If{$t > n_{1}$}
    \State $R \gets R \cup \{s_{t+1}\}$
    \EndIf
    \EndFor
    \If{$R=\{\}$}
      \State $R \gets \cS$
    \EndIf
 \State {\bfseries Output:} $R$
\end{algorithmic}
\hrule
\end{minipage}
\vspace{-.2in}
\end{wrapfigure}

\subsection{Escaping from transient states and listing the recurrent states}

Consider a behavior policy $\pi_b \in \Pi_+$ and a single trajectory $ \{s_t, a_t,r_t\}^\infty_{t=0}$, where $a_t \sim \pi_b(\cdot \,|\, s_t), r_t=r(s_t,a_t)$, and $s_{t+1} \sim P(\cdot \,|\, s_t,a_t)$. 

If the starting state $s_0$ is transient, the trajectory will reach the recurrent states in finite time by the definition of transient states. To specify this time, we define the hitting time and expected hitting time as 
\[
\tau_{\mathrm{hit}} = \inf_{t \ge 0} \{s_t \in \cR\}, 
\qquad 
t_{\mathrm{hit}}= \max_{s\in \cS} \expec_{\pi_b}[\tau_{\mathrm{hit}} \,|\, s_0=s ].
\] 
Then, the following lemma holds by the Markov's inequality and Markov property.
\begin{lemma}\label{lem:hit}
 Consider a single trajectory generated from a weakly communicating MDP under a behavior policy $\pi_b \in \Pi_+$. 
Let  $ \delta \in (0,1)$ and $L \ge  \left\lceil e t_{\mathrm{hit}} \right\rceil\left\lceil\ln\!\left(\frac{1}{\delta}\right) \right\rceil$. With probability at least $1-\delta$, $s_L \in \mathcal{R}$.
\end{lemma}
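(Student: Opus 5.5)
The plan is to split the horizon $L$ into $m=\lceil \ln(1/\delta)\rceil$ consecutive blocks, each of length $\ell=\lceil e\, t_{\mathrm{hit}}\rceil$. We then show that the trajectory fails to enter $\cR$ during a single block with probability at most $1/e$, uniformly over the state at which the block starts. Two structural facts are used. First, by Fact~\ref{fact:rec}, $\cR$ is a single recurrent class under $\pi_b\in\Pi_+$. It is therefore closed: once the chain enters $\cR$ it never leaves. Hence $\{s_L\notin\cR\}=\{\tau_{\mathrm{hit}}>L\}$, and since $L\ge m\ell$ it suffices to bound $\prob(\tau_{\mathrm{hit}}>m\ell)$. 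Second, $t_{\mathrm{hit}}<\infty$. This holds because $\cT$ is a finite transient set and $\cR$ is the only recurrent class, so absorption into $\cR$ has geometric tails from every starting state. We may take this as the meaning of the definition.

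\textbf{Single block.} For any $s\in\cS$, Markov's inequality gives
\[
\prob_{\pi_b}(\tau_{\mathrm{hit}}>\ell \mid s_0=s)\le \frac{\expec_{\pi_b}[\tau_{\mathrm{hit}}\mid s_0=s]}{\ell}\le \frac{t_{\mathrm{hit}}}{e\,t_{\mathrm{hit}}}=\frac1e .
\]
If $t_{\mathrm{hit}}=0$, every state is in $\cR$ and the lemma is trivial.

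\textbf{Chaining the blocks.} We induct on $j$ to show $\prob(\tau_{\mathrm{hit}}>j\ell)\le e^{-j}$. Write
\[
\prob(\tau_{\mathrm{hit}}>(j+1)\ell)=\expec\!\left[\mathbf 1\{\tau_{\mathrm{hit}}>j\ell\}\,\prob(s_t\notin\cR,\ j\ell< t\le (j+1)\ell \mid \mathcal F_{j\ell})\right].
\]
Here $\mathcal F_{j\ell}$ is the history up to time $j\ell$. The Markov property at the deterministic time $j\ell$ says the conditional probability equals $\prob(\tau_{\mathrm{hit}}>\ell\mid s_0=s_{j\ell})$, which the single-block bound controls by $1/e$. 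Hence $\prob(\tau_{\mathrm{hit}}>(j+1)\ell)\le e^{-1}\prob(\tau_{\mathrm{hit}}>j\ell)$.

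Taking $j=m$ gives
\[
\prob(s_L\notin\cR)\le \prob(\tau_{\mathrm{hit}}>m\ell)\le e^{-m}\le e^{-\ln(1/\delta)}=\delta .
\]

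The only delicate point is the conditioning step. We must apply the Markov property, together with the fact that $\tau_{\mathrm{hit}}$ restarted from $s_{j\ell}$ has the same law as a fresh hitting time, uniformly over the random state $s_{j\ell}$. This is exactly why the bound is stated with the maximum over $s$ in the definition of $t_{\mathrm{hit}}$. The closedness of $\cR$ is also essential, since it converts "hit by time $L$" into "in $\cR$ at time $L$".
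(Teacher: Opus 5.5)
Your proof is correct and follows essentially the same route as the paper: Markov's inequality bounds the per-block failure probability by $e^{-1}$ uniformly in the starting state, and the Markov property at the deterministic times $j\lceil e\,t_{\mathrm{hit}}\rceil$ chains this into $e^{-m}$ by induction. Your explicit use of the closedness of $\mathcal{R}$, to identify $\{s_L\notin\mathcal{R}\}$ with $\{\tau_{\mathrm{hit}}>L\}$, fills in a step the paper leaves implicit, since the paper only bounds the tail of $\tau_{\mathrm{hit}}$.
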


After hitting the recurrent class, the trajectory will remain in the single recurrent class. In a single recurrent class, every recurrent state can be reached in a finite time, and to specify the time for reaching all recurrent states, we define the covering time and expected covering time as
\[
\tau_{\mathrm{cov}}=\inf_{t\ge 0}\{ \cup^{t-1}_{j=0}\{s_j\}=\mathcal{R} \},
\qquad
t_{\mathrm{cov}}=\max_{s\in \cR} \expec [\tau_{\mathrm{cov}} \,|\, s_0=s].
\]
Again, by the Markov's inequality and Markov property, the following fact holds.
\begin{fact}[\cite
{chen2022learning,lee2025multi}]\label{fact:cov}
Consider a single trajectory generated by a Markov process with a single recurrent class. Let $ \delta \in (0,1)
$ and $L \ge \left\lceil e t_{\mathrm{cov}}\right\rceil  \left\lceil\ln \!\left(\frac{1}{\delta}\right)\right\rceil$. With probability at least $1-\delta$, $
\bigcup^{L-1}_{j=0}\{s_j\} = \mathcal{R}.
$
\end{fact}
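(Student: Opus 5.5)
The statement immediately above is Fact~\ref{fact:cov}, a standard covering-time bound. We follow the argument suggested for Lemma~\ref{lem:hit}: a Markov-inequality bound on a block of length $\lceil e t_{\mathrm{cov}}\rceil$, then the Markov property to chain $\lceil \ln(1/\delta)\rceil$ independent restarts.

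\emph{Single-block bound.} Set $m=\lceil e t_{\mathrm{cov}}\rceil$. For any starting state $s\in\cR$, Markov's inequality gives
\[
\prob(\tau_{\mathrm{cov}}> m \mid s_0=s)\le \frac{\expec[\tau_{\mathrm{cov}}\mid s_0=s]}{m}\le \frac{t_{\mathrm{cov}}}{e\,t_{\mathrm{cov}}}=e^{-1}.
\]
If the initial state is transient, we implicitly condition on having already entered $\cR$, which Lemma~\ref{lem:hit} handles. Since the chain has a single recurrent class and never leaves it once there, every later block also starts in $\cR$, so this bound is uniform over the starting state of each block.

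\emph{Iterating blocks.} Set $k=\lceil \ln(1/\delta)\rceil$ and split $\{0,\dots,km-1\}$ into $k$ consecutive blocks of length $m$. Let $E_i$ be the event that block $i$ alone fails to visit every state of $\cR$. If the whole window fails to cover $\cR$, then every block fails, so
\[
\Big\{\textstyle\bigcup_{j=0}^{km-1}\{s_j\}\ne\cR\Big\}\subseteq \bigcap_{i} E_i .
\]
By the Markov property at the deterministic times $im$, conditioning on $\mathcal F_{im}$ gives
\[
\prob(E_{i+1}\mid \mathcal F_{im})=\prob\big(\tau_{\mathrm{cov}}>m \mid s_0=s_{im}\big)\le e^{-1},
\]
where we use the time-shifted process and the fact that $s_{im}\in\cR$. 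Chaining these conditional bounds yields
\[
\prob\Big(\bigcap_i E_i\Big)\le e^{-k}\le \delta .
\]
Since $L\ge km$, the window $\{0,\dots,L-1\}$ contains $\{0,\dots,km-1\}$, and the claim follows.

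\emph{Main obstacle.} The subtle step is the indexing convention: $\tau_{\mathrm{cov}}$ counts $\{s_0,\dots,s_{t-1}\}$, so each block of length $m$ must correspond exactly to the event $\{\tau_{\mathrm{cov}}>m\}$ for the shifted chain. We must also check that the supremum defining $t_{\mathrm{cov}}$ covers every possible block start state, which holds because of the closedness of $\cR$. Both checks are routine.
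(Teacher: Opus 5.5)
Your block argument is correct and is the same Markov-inequality-plus-Markov-property route the paper invokes for this fact. It mirrors the paper's proof of Lemma~\ref{lem:hit}, with the per-block failure events taking the place of $\{\tau_{\mathrm{hit}}\ge m\lceil e t_{\mathrm{hit}}\rceil\}$. One caveat: the fact has to be read with $s_0\in\cR$, as the definition of $t_{\mathrm{cov}}$ presupposes. For a transient $s_0$ the set $\bigcup_{j=0}^{L-1}\{s_j\}$ contains $s_0$ and can never equal $\cR$, so your remark about transient starts is really a restriction of the hypothesis, not something Lemma~\ref{lem:hit} repairs.
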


Based on Lemma~\ref{lem:hit} and Fact~\ref{fact:cov}, Algorithm~\ref{alg:rec} generates a list of all recurrent states with high probability under appropriate choice of parameters. As the analysis in the next section shows, the cardinality of the recurrent class gives refined sample complexity. We also note that the expected hitting time and covering time are finite when the state space is finite \citep{lee2025policy, levin2017Markov}.

\subsection{i.i.d. samples from a single trajectory}

The main technical challenge with single trajectory samples is handling the dependency between samples and related random variables. For a single recurrent class, we can obtain i.i.d. samples from a single trajectory by using stopping times and the strong Markov property as the following fact shows.

\begin{fact}\label{fact:iid}
 Consider a single trajectory generated by a Markov process with a single recurrent class. For a given state $s\in \cS$, define the stopping time $\tau_i = \inf \{k > \tau_{i-1} : s_{k}=s\}$
with $\tau_0=-1$. Then, for $M \in \mathbb{N}$, $\{s_{\tau_i+1}\}^M_{i=1}$ are i.i.d. and $\tau_M < \infty$ with probability $1$.
\end{fact}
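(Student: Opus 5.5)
We first show that $\tau_M<\infty$ almost surely, and then establish the i.i.d.\ property using the strong Markov property at the successive stopping times $\tau_i$. Throughout, we take $s$ to be a recurrent state of the single recurrent class (for transient $s$ the visits are finitely many and the statement should be read with $s\in\cR$). The process here is the state chain under the fixed behavior policy, so $s_{k+1}\sim P^{\pi_b}_{\cS}(\cdot\,|\,s_k)$ with $P^{\pi_b}_{\cS}(s'|s)=\sum_a \pi_b(a|s)P(s'|s,a)$, which is a time-homogeneous Markov chain.

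\emph{Finiteness.} First we show $\tau_1<\infty$ a.s. If $s_0\in\cR$, irreducibility of the finite recurrent class gives a finite expected hitting time of $s$, hence finiteness a.s. If $s_0$ is transient, Lemma~\ref{lem:hit} (letting $\delta\to 0$) gives that $\tau_{\mathrm{hit}}<\infty$ a.s.; the chain then stays in $\cR$ and reaches $s$ a.s. Next we check that each $\tau_i$ is a stopping time with respect to the natural filtration $\cF_k=\sigma(s_0,\dots,s_k)$; this follows by induction, since $\{\tau_i=k\}$ is determined by $s_0,\dots,s_k$. By the strong Markov property, on $\{\tau_{i-1}<\infty\}$ the shifted process $(s_{\tau_{i-1}+j})_{j\ge0}$ is a Markov chain started at $s$, independent of $\cF_{\tau_{i-1}}$. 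Its return time to $s$ is finite a.s.\ because $s$ is recurrent (the definition of recurrence). Induction on $i$ then gives $\tau_M<\infty$ a.s.

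\emph{Independence and identical distribution.} Set $X_i=s_{\tau_i+1}$. Applying the strong Markov property at $\tau_i$ (which is finite a.s.) yields
\begin{align*}
\prob\big(X_i=s' \,\big|\, \cF_{\tau_i}\big)=P^{\pi_b}_{\cS}(s'\,|\,s_{\tau_i})=P^{\pi_b}_{\cS}(s'\,|\,s),
\end{align*}
a deterministic distribution that does not depend on $i$. Moreover, $X_1,\dots,X_{i-1}$ are $\cF_{\tau_i}$-measurable, since $\tau_{j}+1\le\tau_i$ for $j<i$. We have $\tau_j+1\le \tau_{j+1}$ because $\tau_{j+1}>\tau_j$, so the value $s_{\tau_j+1}$ is revealed by time $\tau_i$. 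Hence
\begin{align*}
\prob(X_1=x_1,\dots,X_M=x_M)=\expec\big[\mathbf 1\{X_{1:M-1}=x_{1:M-1}\}\,\prob(X_M=x_M\,|\,\cF_{\tau_M})\big],
\end{align*}
and peeling off one factor at a time gives the product $\prod_{i}P^{\pi_b}_{\cS}(x_i|s)$. This proves the i.i.d.\ claim.

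The main obstacle is the careful measurability bookkeeping. We must verify that $X_j\in\cF_{\tau_i}$ for $j<i$, and that the strong Markov property applies even though $\tau_i$ is only a.s.\ finite. We handle the latter by working on the full-measure event $\{\tau_M<\infty\}$ established in the finiteness step. If one also wants the actions or rewards $(a_{\tau_i},s_{\tau_i+1})$ to be i.i.d., the same argument applies verbatim to the state–action chain $P^{\pi_b}$.
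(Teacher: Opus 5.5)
Your proposal is correct and follows the route the paper indicates for this fact, which it states without a written proof: stopping times plus the strong Markov property. You get finiteness of successive return times from recurrence, and the i.i.d.\ property from the strong Markov property at each $\tau_i$ together with the $\cF_{\tau_i}$-measurability of $s_{\tau_j+1}$ for $j<i$. Your remark that $\tau_M<\infty$ a.s.\ requires $s\in\cR$ (it fails for transient $s$) is a correct and necessary qualification of the statement as written, and it matches how the fact is used later (the truncation via $I_s$ in Lemma~\ref{lem:sam_bd} and the restriction to $\cR\times\cA$).
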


Building on Fact~\ref{fact:iid}, we introduce our key inequality used in our analysis. To this end, we define $d_{\mathrm{min}}$ as $\min_{s\in \cR } d(s)$, where $d$ is the unique stationary distribution of the given transition probability.

\begin{lemma}\label{lem:sam_bd}
    Suppose $\{s_i\}^\infty_{i=0}$ is a trajectory generated by a Markov process with a single recurrent class. For each state $s\in \cS$ and $N,M \in \mathbb{N}$, define the stopping time $\tau_i = \inf \{k > \tau_{i-1} \,|\, s_{k}=s\}$
with $\tau_0=-1$, and random variable $I_s= \min \{M, \max \{k \,|\,  \tau_k \le N\}\}$. Then, for $\lambda \in \real$ and $f : \cS \rightarrow [a,b]$, we have 
\begin{align*}
    \expec\left[\exp \left(\frac{\lambda}{M}\sum^{I_s}_{i=1}(f(s_{\tau_{i}+1}) -\expec [f(s_{\tau_{i}+1})]\right)\right]  \le \mathrm{exp}\left( \frac{ \lambda^2(a-b)^2}{8M}\right).
\end{align*}
Moreover, if $ N \ge \lceil e^2( \ln|\cS|+e)( t_{\mathrm{cov}}+(M-1)/d_{\mathrm{min}})\rceil \lceil\ln \left(\frac{1}{\delta }\right)\rceil$, with probability at least $1-\delta$,  $I_s = M$  $\forall s \in \cS$.
\end{lemma}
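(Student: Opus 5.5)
We prove the two claims separately. For the moment-generating-function bound, let $X_i = f(s_{\tau_i+1}) - \expec[f(s_{\tau_i+1})]$. By Fact~\ref{fact:iid} and the strong Markov property, conditionally on $\cF_{\tau_i}$ (the history up to the $i$-th visit to $s$), $s_{\tau_i+1}$ is distributed as $P(\cdot\,|\,s)$ (with the action drawn from the behavior policy). Hence $X_i$ has conditional mean zero and lies in an interval of length $b-a$.

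Write $Y_k = \exp\bigl(\tfrac{\lambda}{M}\sum_{i=1}^{k} X_i - \tfrac{\lambda^2(b-a)^2 k}{8M^2}\bigr)$. Hoeffding's lemma applied conditionally on $\cF_{\tau_k}$ shows that $(Y_k)$ is a supermartingale with respect to the filtration $\cG_k=\cF_{\tau_k+1}$, with $Y_0=1$.

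Next we check that $I_s$ is a bounded stopping time for $(\cG_k)$. The event $\{I_s\ge k\}$ equals $\{\tau_k\le N\}$ when $k\le M$, and $\tau_k$ is determined by $\cF_{\tau_k}\subseteq \cG_{k-1}$. So $I_s$ is even predictable, and $I_s\le M$. Optional stopping then gives $\expec[Y_{I_s}]\le 1$. Since $I_s\le M$, we have $\tfrac{\lambda^2(b-a)^2 I_s}{8M^2}\le \tfrac{\lambda^2(b-a)^2}{8M}$, and multiplying through yields the stated bound. The one subtlety is that the bounded-variance term depends on $I_s$; bounding $I_s\le M$ after stopping handles this cleanly.

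For the high-probability claim, $I_s=M$ holds exactly when the trajectory visits $s$ at least $M$ times by time $N$. We only need this for $s\in\cR$; for transient $s$ the statement should be read as restricted to $\cR$, which is where the paper applies it. The approach is a blockwise argument, as in Lemma~\ref{lem:hit} and Fact~\ref{fact:cov}.

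First bound the expected time, from any start in $\cR$, until every state of $\cR$ has been visited $M$ times. Reaching $s$ takes at most $t_{\mathrm{cov}}$ in expectation. Each subsequent return to $s$ has expected length $1/d(s)\le 1/d_{\min}$ by Kac's formula. So the expected time to collect $M$ visits to $s$ is at most $t_{\mathrm{cov}}+(M-1)/d_{\min}$.

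Markov's inequality over a block of length $e(t_{\mathrm{cov}}+(M-1)/d_{\min})$ shows that a fixed $s$ fails to get $M$ visits with probability at most $1/e$. Chaining $m$ independent blocks via the Markov property reduces this to $e^{-m}$; a failure now means no single block succeeds. Taking $m\approx \ln|\cS|+\ln(1/\delta)$ and a union bound over $s\in\cS$ gives total failure probability at most $\delta$.

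The main obstacle is matching the exact constant $e^2(\ln|\cS|+e)\lceil\ln(1/\delta)\rceil$. One route uses $\ln|\cS|+\ln(1/\delta)\le(\ln|\cS|+e)\lceil\ln(1/\delta)\rceil$ when $\ln(1/\delta)\ge1$, with the extra factor $e$ absorbing ceilings and the case $\delta>1/e$. If that fails, I would instead first apply Fact~\ref{fact:cov} at confidence $\delta/2$ and then a per-state return-time tail bound.
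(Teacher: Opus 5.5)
\textbf{Part 1: the stopping-time step fails as written.} You set $\mathcal{G}_k=\cF_{\tau_k+1}$ and justify the stopping-time property of $I_s$ by ``$\tau_k$ is determined by $\cF_{\tau_k}\subseteq\mathcal{G}_{k-1}$''. But $\mathcal{G}_{k-1}=\cF_{\tau_{k-1}+1}$ and $\tau_{k-1}+1\le\tau_k$, so the inclusion runs the other way: $\mathcal{G}_{k-1}\subseteq\cF_{\tau_k}$.

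In fact $I_s$ is not a stopping time for $(\cF_{\tau_k+1})_k$. For $k<M$ we have $\{I_s\le k\}=\{\tau_{k+1}>N\}$. Whether the $(k+1)$-st visit to $s$ happens by time $N$ is not decided by the history up to $\tau_k+1$: if $s_{\tau_k+1}\neq s$ and $\tau_k+1<N$, it depends on the path between $\tau_k+1$ and $N$. So optional stopping cannot be invoked as you set it up.

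The repair is to shift the index and take $\mathcal{G}_k=\cF_{\tau_{k+1}}$. With this choice:
\begin{itemize}
\item $Y_k$ is still adapted, because $\tau_k+1\le\tau_{k+1}$ makes $s_{\tau_k+1}$ measurable with respect to $\cF_{\tau_{k+1}}$.
\item The supermartingale step is exactly Hoeffding's lemma conditioned on $\mathcal{G}_{k-1}=\cF_{\tau_k}$, via the strong Markov property.
\item For $k\le M$, $\{I_s\ge k\}=\{\tau_k\le N\}\in\cF_{\tau_k}=\mathcal{G}_{k-1}$. So $I_s$ is predictable and bounded, and your optional-stopping conclusion goes through.
\end{itemize}
After this fix your argument is the paper's. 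The paper uses precisely the filtration $\cF_i=\sigma(s_0,\dots,s_{\tau_{i+1}})$. It writes $\sum_{i=1}^{I_s}f_i=\sum_{i=1}^M f_iH_i$ with the predictable indicator $H_i=\mathbf{1}_{\{\tau_i\le N\}}$, then peels off one conditional factor $\exp(\lambda^2(b-a)^2/(8M^2))$ per step. That avoids optional stopping entirely.

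\textbf{Part 2: correct, by a different route from the paper.} The paper invokes the external $k$-cover-time result, Fact~\ref{fact:kcov}. That fact gives $t^M_{\mathrm{cov}}\le e(\ln|\cS|+e)(t_{\mathrm{cov}}+(M-1)/d_{\mathrm{min}})$ for simultaneous $M$-fold coverage of all states, plus the tail bound $\prob(\tau^M_{\mathrm{cov}}\ge m\lceil e t^M_{\mathrm{cov}}\rceil)\le e^{-m}$. There $\ln|\cS|$ sits inside the expected-time bound and $\ln(1/\delta)$ comes from the tail, which is why the threshold is multiplicative in the two logarithms.

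You instead work per state: hit $s$ within $t_{\mathrm{cov}}$ in expectation, then $M-1$ returns of mean $1/d(s)$ by Kac's formula. You then boost with Markov-property blocks and take a union bound. This needs only elementary tools, plus the observation that the per-block failure bound $1/e$ holds uniformly over starting states in $\cR$. It yields an additive requirement $\lceil\ln|\cS|+\ln(1/\delta)\rceil\lceil eX\rceil$, where $X=t_{\mathrm{cov}}+(M-1)/d_{\mathrm{min}}$.

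Your worry about the constant is unnecessary. Since $X\ge t_{\mathrm{cov}}\ge 1$ and $c=\lceil\ln(1/\delta)\rceil\ge 1$, for every $\delta\in(0,1)$ and with no case split,
\[
\lceil\ln|\cS|+\ln(1/\delta)\rceil\lceil eX\rceil\le(\ln|\cS|+e+1)(e+1)\,cX\le e^2(\ln|\cS|+e)\,cX .
\]
So your $N$ fits under the stated threshold.

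Your remark that the high-probability claim can only hold for $s\in\cR$ is also right. Equivalently, the statement implicitly assumes $\cS=\cR$, as Fact~\ref{fact:kcov} does.
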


\begin{wrapfigure}{r}{0.46\textwidth}
\vspace{-.14in}
\captionsetup{type=algorithm}
\hrule
\vspace{2pt}
\begin{minipage}{0.46\textwidth}
 \caption{\mbox{\sc sample}$( n,  m, d, R)$}
   \label{alg:sam_1}
\vspace{-.1in}
\hrule

\begin{algorithmic}
   \State {\bfseries Input:} 
  \ $d \in \real^{R}$, $n, m \in \mathbb{N}, R \subset \cS$
   \State {\bfseries Initialize:}    
   $m_{s,a}, D_{s,a} =0 \, \forall (s,a) \in R \times \cA $
   \For{$t =0,1,\dots, n$}
     \State $a_{t} \sim \pi_b(\cdot \,|\, s_{t}), s_{t+1} \sim P(\cdot \,|\, s_t,a_t)$
        \If{$s_t,a_t \in R\times \cA, m_{s_t,a_t} <m$}
   \State $D_{s_t,a_t} \gets d(s_{t+1}) + D_{s_t,a_t}$
     \State $m_{s_t,a_t} \gets m_{s_t,a_t}+1$
   \EndIf
  \EndFor
  \For{$(s,a) \in R \times \cA$}
  \State $D_{s,a} \gets \frac{1}{m}D_{s,a} $
  \EndFor
 \State {\bfseries Output:} $D$
\end{algorithmic}
\hrule
\end{minipage}
\end{wrapfigure}
Based on Lemma~\ref{lem:sam_bd} and the dynamics of a single trajectory in weakly communicating MDPs, we will show that Algorithm~\ref{alg:sam_1} successfully approximates the expectation of given quantities with respect to transition probability for a given $R \subset \cS$. We note that while prior analyses for average-reward MDPs \cite{wei2020model,bai2024regret} rely on mixing assumptions to  address the dependency of random variables related to algorithms, we utilize insights into the structure of weakly communicating MDPs based on recurrent-transient classifications, the strong Markov property, and stopping time arguments to resolve this technical issue.

In the next section, we will combine this machinery with model-free algorithms and establish finite sample complexity from a single trajectory in weakly communicating average-reward MDPs.

\section{Stochastic Anchored Value Iteration}\label{sec:savi}
In this section, we present a novel value-based method that provide finite sample complexity guarantees from a single trajectory for weakly communicating MDPs, and a prior-knowledge-free variant for communicating MDPs.
\subsection{Framework overview}
Our novel value-based method, SAVIC, is divided into three stages. In the first stage, as discussed in the previous section, the agent escapes from transient states and collects the full list of recurrent states through Algorithm~\ref{alg:rec}. Based on $R$ obtained from RECUR, we focus on $R$ instead of $\cS$, and we initialize parameters as described in Algorithm~\ref{alg:SAVIC}, where $r_{R\times \cA} \in \real^{R\times \cA} $ is defined by $r_R(s,a)=r(s,a)$  $\forall (s,a)\in R \times \cA$ and $\mathbf{0}_R$ and $\mathbf{0}_{R\times \cA}$ denote $|R|$- and $|R\times\cA|$- dimensional zero vectors, respectively. Note in previous section, we show that with high probability, $R=\cR$.  

In the next stage,  during the first for-loop of Algorithm~\ref{alg:SAVIC}, we utilize an anchoring mechanism and a variance-reduction technique to obtain a near-optimal policy for the recurrent class with efficient sample complexity. The anchoring mechanism, also classically known as the Halpern iteration \cite{halpern1967fixed}, has been widely studied in minimax optimization and fixed-point problems \cite{sabach2017first, Lieder2021halpern,yoon2021accelerated,park2022exact, bravo2026minimax}. Recently, Anchored Value Iteration  has been studied to obtain finite-time bounds on policy error for average-reward MDPs \cite{bravostochastic,lee2025multi,lee2025near,lee2025finite, zurek2025faster}, and in particular, \emph{Anchored Q-Value Iteration} is
\[
Q^{k} = (1-\beta_k) Q^0+\beta_k \T Q^{k-1} \qquad\text{for } k=1,2,\dots. \tag{Anc-QI}
\]

where the parameter $\beta_k$ is chosen appropriately. Compared to standard Value Iteration, Anc-QI obtains the next iterate as a convex combination of the output of $\T$ and the starting point $Q^0$. We call the $(1-\beta_k)Q^0$ term the \emph{anchor term}, since it pulls the iterates back toward the starting point $Q^0$. With this Anc-QI, \cite{sabach2017first, lee2025multi} establish non-asymptotic convergence in the average-reward setup. In weakly communicating MDPs, Anc-QI with $\beta_k=k/(k+2)$ exhibits the following convergence rate:
\[
 \infn{J^\star-J^{\pi_k}} \le \sp{Q^k-\T(Q^k)}
 \leq \mbox{$\frac{4}{k+1}$}\sp{Q^0-Q^\star},
\]
where $Q^{\star}$ is the optimal state-action value function, and $\pi_k$ is the greedy policy induced by $\T(Q^k)$. 
\begin{wrapfigure}{r}{0.6\textwidth}
 \vspace{-.17in}
\captionsetup{type=algorithm}
\hrule
\vspace{2pt}
\begin{minipage}{0.6\textwidth}
\caption{\sc \mbox{SAVIC}\,$( n_{\mathrm{esc}},n_{\mathrm{rec}}, n, \{t_k\}^n_{k=0}, \varepsilon, \delta)$\\w.c. = weakly communicating, c.= communicating}
   \label{alg:SAVIC}
\vspace{-.1in}
\hrule
\begin{algorithmic}
     \State {\bfseries Input:} $n_{\mathrm{esc}},n_{\mathrm{rec}}, n\in\NN$\,,\,$\varepsilon>0$\,,\,$\delta\in(0,1)$ 
     \State {\bfseries Initialize:} $\beta_k=k/(k\!+\!2), $  $c_k=5(k\!+\!2)\ln^2(k\!+\!2) \forall \, k \in \NN$
\State $R=\mbox{\sc recur}(n_{\mathrm{esc}}, n_{\mathrm{rec}})$
\State{$\eta = \ln(8|R||\cA|(n\!+\!1)/\delta)$}
 \State $Q^0=\mathbf{0}_{R\times \cA}, T^{-1}=r_{R\times \cA}, h^{-1}=\mathbf{0}_R$   
    \For{ $k=0,\ldots,n$ }
     \State $Q^{k} =(1\!-\!\beta_k)\,Q^0+\beta_k\, T^{k-1}$
     \State $h^k(\cdot)= \max_{a\in \cA} Q^{k}(\cdot, a)$
     \State $d^k=h^k-h^{k-1}$
    \State $m_k= \max\{\lceil \eta\,c_k\sp{d^k}^2/\varepsilon^{2}\rceil,1\}$
    \State{$D^k =
\begin{cases}
\mbox{\sc SAMPLE}(t_k,  m_k, d^k, R) & \text{w.c.} \\
\mbox{\sc SAMPLE+}( m_k, d^k) & \text{c.}
\end{cases}$}
     \State $T^k=T^{k-1}+D^k$ 
   \EndFor
       \State $\pi^{n}(s)
\in \argmax_{a\in \cA} Q^{n}(s,a) \quad(\forall s\in R)$ 
        \State{$C_{\mathrm{pur}} =
\begin{cases}
 1/((|\cA|+1)(\infn{Q^n}+1)) & \text{w.c.} \\
$0$ & \text{c.}
\end{cases}$}
       \State{$\pi^{n}_{ \varepsilon}(a \,|\,s) =
\begin{cases}
 \frac{\varepsilon  C_{\mathrm{pur}}+\pi^n(a\,|\, s)}{1+|\cA|\varepsilon C_{\mathrm{pur}}} \quad(\forall s\in R)
 \\
1/|\cA| \qquad\qquad\,\,(\forall s\in R^\complement)
\end{cases}$}
   \State {\bfseries Output:} $(Q^{n} ,T^n, \pi_{\varepsilon}^{n})$ 
\end{algorithmic}
\hrule
\end{minipage}
\vspace{-.17in}
\end{wrapfigure}
Adding to this anchoring mechanism,  we utilize a \emph{recursive sampling} technique borrowed from \cite{jin2024truncated, lee2025near} to obtain the efficient sample complexity. The basic idea is to exploit the previous approximation $T^{k-1}\approx \T(Q^{k-1})$ to approximate $\T(Q^{k})$ by estimating the difference $\T(Q^{k})-\T(Q^{k-1})=\cP d^k$, where $d^k=h^k-h^{k-1}$, and adding it to $T^{k-1}$. Then, denoting by $D^k\approx\cP d^k$ the matrix sampled at the $k$-th step, we have 
\begin{equation*}\label{Eq:telescope}
    \mbox{$T^k=r+\sum_{i=0}^k D^i$}
\end{equation*}
so that all previous estimates $\{D^i\}^k_{i=0}$ are used to approximate $\T(Q^{k})$.
 
In the last stage,  during the second for-loop of Algorithm~\ref{alg:SAVIC}, we perturb the policy to obtain a complete $\varepsilon$-optimal policy for the whole state space. Note that in the second stage, we only utilized samples from the single recurrent class. In other words, there is no guarantee that a near-optimal policy on the recurrent class is also near-optimal on the whole state space. To address this issue,   we introduce the following lemma proved by utilizing the structure of weakly communicating MDPs and the fact that our Anc-QI simultaneously reduces Bellman error and policy error.  
\begin{lemma}\label{lem:pur}
Consider a weakly communicating MDP. Let $Q\in\real^{\cR\times\cA}$ and $\pi$ be a policy satisfying $\pi(s)\in\argmax_{a\in\cA}~Q(s,a) \, \forall s \in \cR$.
For $\epsilon>0$, if policy $\pi_\epsilon \in \Pi_+$ satisfies $\max_{s \in \cR}\|\pi(\cdot \,|\, s)-\pi_\epsilon (\cdot \,|\, s)\|_1\le \epsilon$, we have
$    \infn{J^\star-J^{\pi_\epsilon}} 
    \le \sp{\T Q -Q}
    +\epsilon  \infn{Q}.$
\end{lemma}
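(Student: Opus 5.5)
The plan is to sandwich both $J^\star$ and $J^{\pi_\epsilon}$ between $\min(\T Q-Q)$ and $\max(\T Q-Q)$, up to an $\epsilon\infn{Q}$ correction, working only on the recurrent class $\cR$ from Fact~\ref{fact:rec}.

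\textbf{Structural facts.} First I would record the following.
\begin{enumerate}
\item $\cR$ is closed under every action: if $s\in\cR$ and $P(s'\,|\,s,a)>0$, then $s'\in\cR$. Indeed, a behaviour policy in $\Pi_+$ plays $a$ with positive probability, and $\cR$ is a closed recurrent class of $P^{\pi_b}$. Hence $\T Q$ is well defined on $\cR\times\cA$, and the restricted MDP on $\cR$ is a genuine MDP.
\item Since $\pi_\epsilon\in\Pi_+$, Fact~\ref{fact:rec} shows that $P^{\pi_\epsilon}$ has the single recurrent class $\cR$. Therefore $J^{\pi_\epsilon}$ is a constant vector, equal to $d^{\pi_\epsilon}$-average reward on $\cR\times\cA$.
\item In a weakly communicating MDP, $J^\star$ is a constant vector. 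Since $\cT$ is transient under every policy, this constant is the optimal gain of the MDP restricted to $\cR$.
\end{enumerate}
I expect this third point to be the main obstacle. I would argue it as follows: every policy eventually enters its recurrent classes, which lie in $\cR$ by Fact~\ref{fact:rec}. Hence the gain from any start state is a mixture of gains attained on closed subsets of $\cR$, and those are at most the restricted optimum. Conversely, the restricted optimum is reached from $\cT$, because any $\Pi_+$ policy hits $\cR$ (Lemma~\ref{lem:hit}) and can then switch to the restricted optimal policy. I would cite the standard constant-gain result for weakly communicating MDPs \cite[Section 9.1]{Puterman2014}.

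\textbf{Upper bound on $J^\star$.} Write $g=\T Q-Q\in\real^{\cR\times\cA}$. For any policy $\mu$ restricted to $\cR$,
\[
r+P^\mu Q\le \T Q = Q+g\le Q+(\max g)\mathbf 1 .
\]
Iterating this inequality and taking Ces\`aro averages (using $P^\mu_\star P^\mu=P^\mu_\star$) gives $J^\mu\le\max g$. Hence $J^\star\le \max g$.

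\textbf{Lower bound on $J^{\pi_\epsilon}$.} Since $\pi(s)$ is greedy for $Q$, we have $\T Q=r+P\,Q(\cdot,\pi)$. Therefore
\[
r+P^{\pi_\epsilon}Q=\T Q+P\big(Q(\cdot,\pi_\epsilon)-Q(\cdot,\pi)\big),
\]
and the last term is bounded entrywise by $\max_{s\in\cR}\|\pi_\epsilon(\cdot\,|\,s)-\pi(\cdot\,|\,s)\|_1\infn{Q}\le\epsilon\infn{Q}$. Now multiply by the stationary distribution of $P^{\pi_\epsilon}$ on $\cR\times\cA$; stationarity cancels the $Q$ terms. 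This yields
\[
J^{\pi_\epsilon}\ge \min g-\epsilon\infn{Q},
\]
and by point 2 this holds at every state.

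\textbf{Conclusion.} Combining the two bounds with $J^\star\ge J^{\pi_\epsilon}$ gives
\[
0\le J^\star-J^{\pi_\epsilon}\le \max g-\min g+\epsilon\infn{Q}=\sp{\T Q-Q}+\epsilon\infn{Q}.
\]
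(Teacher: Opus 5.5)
Your proof is correct and has the same skeleton as the paper's. Both sandwich $J^\star$ and $J^{\pi_\epsilon}$ between $\min g$ and $\max g$ by averaging $r+P^{\mu}Q-Q$ against a Ces\`aro or stationary distribution supported on $\cR\times\cA$. Your lower bound on $J^{\pi_\epsilon}$ is exactly the paper's.

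The two routes differ only in the upper bound on $J^\star$.

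\textbf{The paper's route.} It stays inside $\Pi_+$. It uses Fact~\ref{fact:per} to pick $\pi'\in\Pi_+$ with $J^\star-J^{\pi'}\le\epsilon'$. It then applies the single-recurrent-class bound $J^{\pi'}=P^{\pi'}_\star(\T^{\pi'}Q-Q)\le\max g$ and lets $\epsilon'\to 0$. This reuses the lower-bound computation verbatim, but it leans on the performance-difference identity of Fact~\ref{fact:per}.

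\textbf{Your route.} You bound $J^\mu$ for an arbitrary, possibly multichain, stationary $\mu$. You use only that $P^\mu_\star$ is a stochastic matrix with $P^\mu_\star P^\mu=P^\mu_\star$, which is more elementary and avoids Fact~\ref{fact:per}. The cost is the reduction $J^\star=J^\star_{\cR}$ in your point 3.

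\textbf{Point 3 can be dropped.} The step you flagged as the main obstacle is not needed.
\begin{itemize}
\item For every $\mu\in\Pi$ the set $\cT$ is transient (Fact~\ref{fact:rec}), so the columns of $P^\mu_\star$ indexed by $\cT\times\cA$ vanish.
\item Since $\cR$ is closed, $P^\mu_\star(r+P^\mu Q-Q)$ depends only on the values of $Q$ on $\cR\times\cA$. Extend $Q$ arbitrarily to $\cT\times\cA$.
\item Your Ces\`aro argument then gives $J^\mu\le\max g$ directly at every state-action pair of the full MDP, transient ones included.
\end{itemize}
Hence $J^\star\le\max g$ without any constant-gain or restricted-MDP argument.
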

Intuitively speaking, this lemma ensures that a smaller Bellman error on the recurrent class induces a small policy error over all states if we properly choose the perturbation in weakly communicating MDP. Note that $J^\star, J^{\pi_\epsilon} \in \real^{\cS \times \cA}$.

Through these three stages, as shown in the next subsection, SAVIC provides an $\varepsilon$-optimal policy with efficient sample complexity from a single trajectory in weakly communicating average-reward MDPs. 
We lastly note that the framework of the second stage basically follows \cite{lee2025near} which analyze sample complexity under the generative model.

\subsection{Finite sample complexity for weakly communicating MDPs}

To establish the sample complexity of SAVIC, with high probability, we first ensure that our sampling process provides a good approximation $T^{k}\approx \T(Q^{k})$ for all $k$.  To this end, we define $d^{\pi_b}_{\mathrm{min}}$ as $\min_{s,a\in \cR\times \cA } d^{\pi_b}(s,a)$, where $d^{\pi_b}$ is the unique stationary distribution of $P^{\pi_b}$, and define $t_{\mathrm{cov}}$ and $ t_{\mathrm{hit}}$ with respect to $P^{\pi_b}$ and its recurrent set, $\cR\times \cA$. We note that lemmas and facts in Section~\ref{sec:sin} continue to hold with $P^{\pi_b}$ as addressed in Appendix~\ref{ap:sa}.  

\begin{proposition}\label{prop:savic}
Consider a weakly communicating MDP. Let $n_{\mathrm{esc}} \ge\lceil e t_{\mathrm{hit}}\rceil \lceil\ln (\frac{4}{\delta})\rceil$, $n_{\mathrm{rec}}  \ge  \lceil  e t_{\mathrm{cov}}\rceil \lceil \ln (\frac{4}{\delta})\rceil$, $
t_k \ge  \left\lceil e^2( \ln(|R||\cA|)+e)\left( t_{\mathrm{cov}}+\frac{m_k-1}{d^{\pi_b}_{\mathrm{min}}}\right)
\right\rceil \lceil \ln(\frac{4(n+1)}{\delta})\rceil$,
and let $T^k, Q^k$ be the iterates generated by $
\mbox{\sc \mbox{SAVIC}}(n_{\mathrm{esc}},n_{\mathrm{rec}},n, \{t_k\}^n_{k=0}, \varepsilon, \delta).$
Then, with probability at least $1-\delta$, we have $
\infn{T^k-\T(Q^k)}\le  \varepsilon $ for all $k=0,\ldots,n$.
\end{proposition}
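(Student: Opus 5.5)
We split the failure probability $\delta$ into four parts of size $\delta/4$ and work on the intersection of the corresponding good events. First, by Lemma~\ref{lem:hit} applied to $P^{\pi_b}$ with $n_{\mathrm{esc}}\ge\lceil e t_{\mathrm{hit}}\rceil\lceil\ln(4/\delta)\rceil$, the state $s_{n_{\mathrm{esc}}}$ lies in $\cR$ with probability at least $1-\delta/4$. By the strong Markov property, the trajectory restarted at that time is a chain inside the single recurrent class $\cR\times\cA$. Fact~\ref{fact:cov} with $n_{\mathrm{rec}}\ge\lceil e t_{\mathrm{cov}}\rceil\lceil\ln(4/\delta)\rceil$ then shows that every recurrent state is visited in the next $n_{\mathrm{rec}}$ steps with probability at least $1-\delta/4$. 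Since transient states are never revisited once $\cR$ is entered, \textsc{recur} returns exactly $R=\cR$ on this event. From then on every call to \textsc{sample} starts in $\cR$, and each sampling block is a Markov chain on the recurrent class.

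Second, we handle the sampling stage. For each $k$, apply the second part of Lemma~\ref{lem:sam_bd} (in its state--action version on $\cR\times\cA$) to the $k$-th call to \textsc{sample}, with $M=m_k$, $N=t_k$, and confidence $\delta/(4(n+1))$. This gives that every pair $(s,a)\in R\times\cA$ collects its full $m_k$ next-state samples. A union bound over $k=0,\dots,n$ costs $\delta/4$.

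There is a subtlety here: $m_k$ and $d^k$ are random, since they depend on earlier blocks. However, they are measurable with respect to the history at the start of block $k$. So we condition on that history, which by the strong Markov property makes block $k$ a fresh chain started from its current state. The requirement on $t_k$ in the proposition is then exactly the hypothesis of Lemma~\ref{lem:sam_bd} (with $|\cS|$ replaced by $|R||\cA|$).

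Third, we handle concentration. The error of the estimate is
\[
T^k-\T(Q^k)=\sum_{i=0}^k\big(D^i-Pd^i\big),
\]
using $T^{-1}=r$ and $\T Q^i-\T Q^{i-1}=Pd^i$. For a fixed pair $(s,a)$, the term $D^i(s,a)-Pd^i(s,a)$ is an average of $m_i$ centered i.i.d.\ variables (Fact~\ref{fact:iid}), each with range $\sp{d^i}$. The first part of Lemma~\ref{lem:sam_bd} supplies the sub-Gaussian MGF bound conditionally on the past. Because of this, the sum over $i$ is a martingale whose increments have conditional MGF bounded by $\exp(\lambda^2\sp{d^i}^2/(8m_i))$. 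Chaining these conditional bounds through the tower property gives an Azuma-type tail bound with variance proxy
\[
\sum_{i\le k}\frac{\sp{d^i}^2}{4m_i}\le\sum_{i\le k}\frac{\varepsilon^2}{4\eta c_i}.
\]
With $c_i=5(i+2)\ln^2(i+2)$, the series $\sum_i 1/c_i$ is bounded by an absolute constant below $1$, so the tail is at most $2\exp(-2\eta\cdot\mathrm{const})$ at level $\varepsilon$. With $\eta=\ln(8|R||\cA|(n+1)/\delta)$, a union bound over $(s,a)$ and $k$ costs at most $\delta/4$. Together with the first three events, this gives total failure probability at most $\delta$.

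The main obstacle is making the martingale argument rigorous. Lemma~\ref{lem:sam_bd} is stated for a fixed $f$ and a fixed $M$, whereas here $d^i$ and $m_i$ are adaptively chosen. In addition, the counts are truncated through $I_s$ and are only guaranteed to be full on the high-probability event. We would first apply the MGF bound to the truncated sums, which holds unconditionally with $M=m_i$ and gives a valid supermartingale. Only at the end would we intersect with the event that all $I_s=m_i$, where the truncated sums coincide with the algorithm's estimates.
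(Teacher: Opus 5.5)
Your proposal is correct and follows the paper's own proof essentially step for step. The paper likewise:
\begin{itemize}
\item obtains $R=\cR$ from Lemma~\ref{lem:hit} and Fact~\ref{fact:cov} at total cost $\delta/2$;
\item gets full counts $I^k_{s,a}=m_k$ from the second part of Lemma~\ref{lem:sam_bd} with a union bound over $k$;
\item runs an Azuma--Hoeffding argument on the truncated sums $X^k=\sum_{i\le k}Y^i$, using the telescoping identity $T^k-\T(Q^k)=\sum_{i\le k}(D^i-Pd^i)$ and $\sum_i c_i^{-1}\le\tfrac12$, and intersects with the full-count event only at the end.
\end{itemize}
Two differences are worth noting, and neither changes the conclusion. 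Your explicit treatment of the adaptive $m_k,d^k$ (and hence $t_k$) through stopping times and the strong Markov property is more careful than the paper's presentation. Your Hoeffding constant $1/8$ is sharper than the paper's $1/2$.
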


We defer the proof to the Appendix~\ref{omit:sec3}, but briefly mention that the proof is an adaptation of the Azuma-Hoeffding inequality combined with the sampling arguments in Section~\ref{sec:sin} and $t_k$ is chosen so that trajectories are rolled out until every recurrent state is visited $m_k$ times with high probability.

We now present the sample complexity bound for \mbox{\rm SAVIC}.
\begin{theorem}\label{thm:savic}
Consider a weakly communicating MDP. Let  $
n_{\mathrm{esc}}=\lceil e t_{\mathrm{hit}}\rceil \lceil\ln (\frac{4}{\delta})\rceil$, $n_{\mathrm{rec}} =  \lceil  e t_{\mathrm{cov}}\rceil \lceil \ln (\frac{4}{\delta})\rceil$, $n= \lceil 32\sp{Q^\star}/\varepsilon\rceil$, and $
t_k =  \left\lceil e^2( \ln(|R||\cA|)+e)\left( t_{\mathrm{cov}}+\frac{m_k-1}{d^{\pi_b}_{\mathrm{min}}}\right)
\right\rceil   \lceil \ln(\frac{4(n+1)}{\delta})\rceil $.
Let $(Q^n,T^n,\pi^n_{ \varepsilon})$ be the output computed by $
\sc \mbox{SAVIC}( n_{\mathrm{esc}},n_{\mathrm{rec}}, n, \{t_k\}^n_{k=0}, \varepsilon/16, \delta)$ with $\varepsilon \le 1 \le \sp{Q^\star}$.
Then, with probability at least $1-\delta$, we have $
\infn{J^\star-J^{\pi^n_{ \varepsilon}}} \le \varepsilon$
with sample complexity
\[
\widetilde{O}\left(t_{\mathrm{hit}}+t_{\mathrm{cov}}\sp{Q^\star}/\varepsilon
+ 1/d^{\pi_b}_{\mathrm{min}} \sp{Q^\star}^2/\varepsilon^{2}\right),
\]
where $\widetilde{O}(\cdot)$ hides logarithmic factors including $\log(1/\delta)$.
\end{theorem}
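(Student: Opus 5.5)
We combine three ingredients: the high-probability event of Proposition~\ref{prop:savic}, a deterministic inexact Anc-QI error bound on $R=\cR$, and Lemma~\ref{lem:pur}. We then count samples.

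\textbf{Step 1 (good event).} Invoke Proposition~\ref{prop:savic} with accuracy parameter $\varepsilon/16$. The choices of $n_{\mathrm{esc}}$ and $n_{\mathrm{rec}}$ are those of Lemma~\ref{lem:hit} and Fact~\ref{fact:cov} with failure probability $\delta/4$ each. Hence with probability at least $1-\delta$ we have simultaneously:
\begin{itemize}
\item $R=\cR$;
\item $\infn{T^k-\T(Q^k)}\le\varepsilon/16$ for all $k\le n$.
\end{itemize}
On this event, the iteration on $R$ is an inexact Anc-QI, $Q^k=(1-\beta_k)Q^0+\beta_k(\T Q^{k-1}+e^{k-1})$ with $\infn{e^{k}}\le\varepsilon/16$, run on the restricted MDP on $\cR$. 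This restricted MDP is well defined because $\cR$ is closed under every action, being the recurrent class for every $\pi\in\Pi_+$ by Fact~\ref{fact:rec}.

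\textbf{Step 2 (Bellman error).} Adapt the exact analysis behind $\sp{Q^k-\T Q^k}\le \frac{4}{k+1}\sp{Q^0-Q^\star}$ to the inexact case. Unrolling the anchor with $\beta_k=k/(k+2)$, the error contributions enter with the same weights as the anchor telescoping. Since $\sum_j\prod\beta$ weights are $O(k)$ while normalization is $1/(k+1)$-type, the extra term is $O(\max_j\infn{e^j})$. The target bound is
\[
\sp{\T Q^n-Q^n}\le \frac{4}{n+1}\sp{Q^\star}+c\,\frac{\varepsilon}{16}
\]
with a small absolute constant $c$ (I expect $c\le 4$). With $Q^0=0$ and $n=\lceil 32\sp{Q^\star}/\varepsilon\rceil$, this yields $\sp{\T Q^n-Q^n}\le \varepsilon/8+c\varepsilon/16$.

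This is the main obstacle. One must carefully track error propagation through the nonexpansive but nonlinear $\T$ in span seminorm, for example via monotonicity and sandwiching between two exact Halpern sequences. Alternatively, one can cite the inexact version from \cite{lee2025near}, whose framework the paper says this stage follows.

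\textbf{Step 3 (extension to all states).} Apply Lemma~\ref{lem:pur} with $Q=Q^n$ and $\pi=\pi^n$. By construction, $\|\pi^n(\cdot|s)-\pi^n_\varepsilon(\cdot|s)\|_1\le 2|\cA|\varepsilon C_{\mathrm{pur}}/(1+|\cA|\varepsilon C_{\mathrm{pur}})\le \varepsilon/(\infn{Q^n}+1)\cdot\frac{2|\cA|}{|\cA|+1}$. Here $\varepsilon$ means the $\varepsilon/16$ passed to the algorithm, so $\epsilon\infn{Q^n}\le\varepsilon/8$. Also $\pi^n_\varepsilon\in\Pi_+$, because of the uniform actions on $R^\complement$ and the positive perturbation on $R$. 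Summing gives $\infn{J^\star-J^{\pi^n_\varepsilon}}\le\varepsilon$ once the constants are fixed.

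\textbf{Step 4 (sample count).} Total samples are $n_{\mathrm{esc}}+n_{\mathrm{rec}}+\sum_{k=0}^n t_k$. The first two terms give $\widetilde O(t_{\mathrm{hit}}+t_{\mathrm{cov}})$. Next,
\[
\sum_k t_k=\widetilde O\Big(n\,t_{\mathrm{cov}}+\frac{1}{d^{\pi_b}_{\mathrm{min}}}\sum_k m_k\Big).
\]
For the second sum, $m_k\le 1+\eta c_k\sp{d^k}^2 256/\varepsilon^2$. Nonexpansiveness and the anchored recursion give $\sp{d^k}\lesssim \sp{Q^\star}/(k+1)+\varepsilon$, as in \cite{lee2025near}. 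Therefore $\sum_k c_k\sp{d^k}^2=\widetilde O(\sp{Q^\star}^2+n\varepsilon^2)$, because $c_k\approx k\ln^2k$ times $1/k^2$ sums to a polylog. Hence $\sum_k m_k=\widetilde O(\sp{Q^\star}^2/\varepsilon^2)$. Combined with $n=O(\sp{Q^\star}/\varepsilon)$, this gives the stated bound.
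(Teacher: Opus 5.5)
Your skeleton matches the paper's proof. It uses the good event of Proposition~\ref{prop:savic} and an inexact Anc-QI residual bound on the restricted MDP; in the paper this is Lemma~\ref{lem:sam}, $\sp{Q^n-\T Q^n}\le\frac{8}{n+2}\sp{Q^0-Q^\star_\cR}+4\varepsilon$, imported from Theorem~3.2 of \cite{lee2025near}, with Lemma~\ref{lem:rec} relating $Q^\star_\cR$ to $Q^\star$. It then applies Lemma~\ref{lem:pur} (your Step~3 estimate is correct) and bounds $\sum_k t_k$ via the $\sum_k m_k$ bound of \cite{lee2025near}.

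The gap is in your justification of Step~2, and implicitly of $\sp{d^k}\lesssim\sp{Q^\star}/(k+1)+\varepsilon$ in Step~4. A residual of order $\sp{Q^\star}/n+\varepsilon$ does \emph{not} follow from $\infn{e^k}\le\varepsilon/16$ alone, and both arguments you sketch fail.
\begin{itemize}
\item Weight counting: with $\beta_k=k/(k+2)$ the propagation weights are $\prod_{i=j}^{k}\beta_i=\frac{j(j+1)}{(k+1)(k+2)}$. The normalization is already inside them, and they sum to about $k/3$, so errors accumulate to $O(k\varepsilon)$.
\item Sandwich: you get $\tilde Q^k-c_k\mathbf{1}\le Q^k\le\tilde Q^k+c_k\mathbf{1}$, with $\tilde Q^k$ the exact anchored iterate and $c_k=\beta_k(c_{k-1}+\varepsilon)=k\varepsilon/3$. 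This only gives $\sp{Q^k-\T Q^k}\le\frac{4}{k+1}\sp{Q^\star}+\frac{4k}{3}\varepsilon$, and at $n\approx 32\sp{Q^\star}/\varepsilon$ the last term is of order $\sp{Q^\star}$.
\end{itemize}
This is sharp for generic bounded errors. Take a two-state chain with one action, $r=0$, and $P$ swapping the states. Then $\T$ acts as $-1$ on $Q(1)-Q(2)$, and errors $e^{k-1}=(-1)^k\varepsilon\,(1,-1)$ with $Q^0=0$ make the span residual exactly $\frac{4k}{3}\varepsilon$.

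The missing ingredient is a deterministic property of recursive sampling. Work on the event $I^k_{s,a}=m_k$ for all $k$ and $(s,a)$; the paper's proof of Lemma~\ref{lem:sam} invokes it explicitly, and your good event should include it. There each $D^k(s,a)$ is an empirical average of values of $d^k$, so $\sp{T^k-T^{k-1}}=\sp{D^k}\le\sp{d^k}\le\sp{Q^k-Q^{k-1}}$, the analogue of \eqref{Eq:spk}. The sampled differences therefore stay nonexpansive, which is exactly what the example above violates. The argument then runs as follows.
\begin{itemize}
\item Write $Q^n-T^n=(1-\beta_n)(Q^0-T^{n-1})+(T^{n-1}-T^n)$.
\item First term: using $\sp{Q^k-Q^\star_\cR}\le\sp{Q^0-Q^\star_\cR}+2k\varepsilon$, its span is at most $\frac{2}{n+2}\big(2\sp{Q^0-Q^\star_\cR}+2n\varepsilon\big)$.
\item Second term: use the Lemma~\ref{Le:dos}-type recursion $\sp{Q^k-Q^{k-1}}\le\frac{2\rho_k}{(k+1)(k+2)}+\frac{k-1}{k+1}\sp{Q^{k-1}-Q^{k-2}}$ with $\rho_k=2\sp{Q^0-Q^\star_\cR}+2k\varepsilon$. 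It solves to $\frac{4\sp{Q^0-Q^\star_\cR}}{k+1}+2\varepsilon$.
\item Adding $\sp{T^n-\T Q^n}\le2\varepsilon$ gives the $O(\sp{Q^\star}/n+\varepsilon)$ residual.
\end{itemize}
The same recursion gives your Step~4 bound on $\sp{d^k}$. With merely bounded errors, $\sp{d^k}$ could be $\Theta(k\varepsilon)$ and $\sum_k m_k$ would grow like $n^4$ up to logarithms.

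So citing \cite{lee2025near}, as the paper does, is legitimate. It works because the SAVIC iterates on $\cR$ reproduce this recursive-sampling structure, not through a generic inexact-Halpern bound.

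A minor slip: $\sum_k c_k\varepsilon^2$ is of order $n^2\ln^2 n\,\varepsilon^2$, not $n\varepsilon^2$. This is harmless since $n=O(\sp{Q^\star}/\varepsilon)$.
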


The proof exploits Proposition~\ref{prop:savic} and anchoring mechanism to establish an upper bound of the Bellman residual
error on recurrent class with efficient sample complexity. We also  utilize Lemma~\ref{lem:pur} to obtain complete $\epsilon$-optimal policy for all states. To the best of our knowledge, this is the first finite sample complexity result from a single trajectory in weakly communicating MDPs.

\subsection{Without prior knowledge for communicating MDPs}
\label{others}

\begin{wrapfigure}{r}{0.44\textwidth}
\vspace{-.14in}
\captionsetup{type=algorithm}
\hrule
\vspace{2pt}
\begin{minipage}{0.44\textwidth}
 \caption{\mbox{\sc SAMPLE+}$(d, m)$}
   \label{alg:sam_2}
\vspace{-.1in}
\hrule
\begin{algorithmic}
   \State {\bfseries Input:} 
   $d \in \real^{\cS}$\,;\, $m \in \mathbb{N}$
      \State {\bfseries Initialize:}    
   $m_{s,a}, D_{s,a} \!=\!0,\! \forall (s,a) \!\in\! \cS \!\times \!\cA $
   \State {\bfseries for} $t=0,1,\dots$ \,\,\textbf{do}
     \State \quad\,\,\, $a_{t} \sim \pi_b(\cdot \,|\, s_{t}), s_{t+1} \sim P(\cdot \,|\, s_t,a_t)$
     \State {\quad\,\,\,  \bfseries If} $m_{s_t,a_t} <m$ \,\,\textbf{then} 
   \State  \quad\,\,\qquad $D_{s_t,a_t}\! \gets \! \frac{d(s_{t+1})}{m_{s_t,a_t}+1} + \frac{m_{s_t,a_t}D_{s_t,a_t}}{m_{s_t,a_t}+1}$
     \State {\quad\,\,\,\,\bfseries EndIf}        
     \State \quad\,\,\, $m_{s_t,a_t} \gets m_{s_t,a_t}+1$
   \State {\bfseries until} $m_{s,a} \ge  m$ $ \forall  (s,a) \in \cS \times \cA $
 \State {\bfseries Output:} $D$
\end{algorithmic}
\hrule
\end{minipage}
\vspace{-.3in}
\end{wrapfigure}

One issue with Theorem~\ref{thm:savic} is that knowledge of constants such as $t_{\mathrm{cov}}$, $t_{\mathrm{hit}}$, $d^{\pi_b}_{\mathrm{min}}$, and $\sp{Q^\star}$ are required to set parameters for obtaining $\varepsilon$-optimal policy. We can alleviate this issue by restricting the MDP to be a communicating MDP.

\begin{fact}\label{fact:com}
In a communicating MDP, $\cS = \cR$, where $\cR$ is a single recurrent class, and this decomposition is invariant for all policy $\pi \in \Pi_+$.
\end{fact}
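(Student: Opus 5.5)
We prove Fact~\ref{fact:com} by combining the structural decomposition in Fact~\ref{fact:rec} with the defining property of communicating MDPs: every state is accessible from every other state under some (possibly state-dependent, deterministic) policy. Reading the definition in the usual sense of \cite{Puterman2014}, this means that for every pair $s,s'\in\cS$ there are a policy $\pi_{s,s'}$ and a horizon $h$ with $\prob_{\pi_{s,s'}}(s_h=s'\,|\,s_0=s)>0$.

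\textbf{Step 1: a positive policy reaches every state from every state.} Fix any $\pi\in\Pi_+$. Take a positive-probability path $s=x_0,x_1,\dots,x_h=s'$ under $\pi_{s,s'}$. Each transition $x_j\to x_{j+1}$ uses some action $a_j$ with $P(x_{j+1}\,|\,x_j,a_j)>0$. Since $\pi(a_j\,|\,x_j)>0$, the same path has positive probability under $\pi$. Hence $P^\pi$, viewed as a state chain, is irreducible on $\cS$.

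\textbf{Step 2: every state is recurrent and there is a single class.} A finite irreducible Markov chain has all states recurrent, and they form one closed communicating class. Therefore $\cT=\emptyset$ and $\cS=\cR$ for this $\pi$.

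\textbf{Step 3: invariance over $\Pi_+$.} Step~1 used only the positivity of $\pi$, and the path supports do not depend on the particular positive probabilities. So the same decomposition $\cS=\cR$ holds for every $\pi\in\Pi_+$. Alternatively, this follows from the invariance already stated in Fact~\ref{fact:rec}, since a communicating MDP is weakly communicating.

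\textbf{Main subtlety.} The only delicate point is reconciling the paper's informal definition of "communicating", which speaks of "a set of states", with the standard definition in which the whole of $\cS$ is mutually accessible. I will state explicitly that the standard definition is used. Under the weaker reading, a communicating MDP could be merely weakly communicating, and the claim would fail.

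If the pair-level statement is also needed (as used for $P^{\pi_b}$ in Appendix~\ref{ap:sa}), the same argument applies on $\cS\times\cA$. From $(s,a)$, move to some $s''$ with $P(s''\,|\,s,a)>0$, then follow the path to $s'$ under $\pi$, and finally choose $a'$ with $\pi(a'\,|\,s')>0$. So every pair communicates with every other, and $\cS\times\cA$ is a single recurrent class under $P^\pi$.
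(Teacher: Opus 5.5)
Your argument is correct and is the standard justification the paper relies on implicitly. The paper states Fact~\ref{fact:com} without proof, as an immediate consequence of two things: the definition of a communicating MDP (every state is accessible from every other under some policy, so any $\pi\in\Pi_+$ makes the chain irreducible on $\cS$) and the invariance in Fact~\ref{fact:rec}. You are also right that the paper's literal wording of ``communicating'' must be read in Puterman's sense (all of $\cS$ mutually accessible), since under the literal reading the claim would not follow.
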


In a communicating MDP, as Fact~\ref{fact:com} shows every state is recurrent for any $\pi \in \Pi_+$. Thus, the first and third stages from our previous framework can be omitted. Also, we replace Algorithm~\ref{alg:sam_1} with Algorithm~\ref{alg:sam_2}, which automatically stops when the adapted number of samples has been collected. Then, following the framework of \citep{lee2025near}, we apply a doubling trick and an explicit stopping rule based on the empirical Bellman residual error $\sp{Q^{n_i}-T^{n_i}}$ as described in Algorithm~\ref{alg:SAVIC+}.
\begin{wrapfigure}{r}{0.4\textwidth}
\vspace{-.13in}
\captionsetup{type=algorithm}
\hrule
\vspace{2pt}
\begin{minipage}{0.4\textwidth}
  \caption{\mbox{\sc \mbox{\rm SAVIC\textbf{+}}}$( \varepsilon,\delta)$}
  \label{alg:SAVIC+}
\vspace{-.1in}
\hrule
\begin{algorithmic}
  \State {\bfseries Input:} $\varepsilon>0, \delta\in(0,1)$
        \State {\bfseries Initialize:}    
   $n_{\mathrm{esc}},n_{\mathrm{rec}}\!=\!0$, $n_i\! =\!2^i$,  $\delta_i = \delta/(5(i\!+\!2)\ln^2(i\!+\!2)) \forall i \in \NN$  
  \State {\bfseries for} $i=0,1,\dots$ \,\,\textbf{do} 
    \State \quad $\mbox{\sc\mbox{SAVIC}}( n_{\mathrm{esc}},n_{\mathrm{rec}}, n_i, \emptyset, \varepsilon, \delta_i)$
    \State \quad $=(Q^{n_i},T^{n_i}, \pi^{n_i} )$
     \State {\bfseries until} $\sp{Q^{n_i}-T^{n_i}}\leq 14\,\varepsilon$
\State {\bfseries Output:} $Q^{n_i},T^{n_i}, \pi^{n_i}$
\end{algorithmic}
\hrule
\end{minipage}
\vspace{.2in}
\end{wrapfigure}

The following theorem shows that SAVIC+ does not require any prior knowledge to obtain an $\varepsilon$-optimal policy.
\begin{theorem}\label{thm:savic+}
Consider a communicating MDP. Let $(Q^{N},T^{N},\pi^{N})$ be the output of $\mbox{\sc \mbox{\rm SAVIC\textbf{+}}}(\varepsilon/16,\delta/2)$ with $\varepsilon \le 1 \le \sp{Q^\star}$. Then, with probability at least $1-\delta$, we have
\vspace{-1ex}
\begin{equation*}
\infn{J^\star-J^{\pi_{N}}}\! \leq \sp{Q^N-\T(Q^N)}\leq \,\varepsilon,
\end{equation*}
with sample complexity
\[
\widetilde{O}\left(t_{\mathrm{cov}}\sp{Q^\star}/\varepsilon
+ 1/d^{\pi_b}_{\mathrm{min}} \sp{Q^\star}^2/\varepsilon^{2}\right),
\]
where $\widetilde{O}(\cdot)$ hides logarithmic factors including $\log(1/\delta)$.
\end{theorem}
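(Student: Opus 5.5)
The plan follows the doubling-trick argument of \cite{lee2025near}, with the single-trajectory sampling guarantees replacing generative-model sampling.

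\textbf{Step 1: a uniform approximation event.} By Fact~\ref{fact:com}, in a communicating MDP we have $\cS=\cR$ under $\pi_b\in\Pi_+$. Hence calling \mbox{\sc recur} with $n_{\mathrm{esc}}=n_{\mathrm{rec}}=0$ returns $R=\cS$, and no perturbation stage is needed ($C_{\mathrm{pur}}=0$). \mbox{\sc SAMPLE+} runs until every pair $(s,a)$ has $m_k$ samples. Fact~\ref{fact:iid}, applied to $P^{\pi_b}$ on $\cS\times\cA$, gives $\tau_{m_k}<\infty$ almost surely, so each call terminates and its samples $\{s_{\tau_i+1}\}$ are i.i.d.\ from $P(\cdot\,|\,s,a)$. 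Repeating the Azuma--Hoeffding argument of Proposition~\ref{prop:savic} (the first part of Lemma~\ref{lem:sam_bd} with $I_s=m_k$ exactly) shows that within the $i$-th call of SAVIC, with probability at least $1-\delta_i/2$, we have $\infn{T^k-\T(Q^k)}\le\varepsilon/16$ for all $k\le n_i$. Since $\sum_i\delta_i\le\delta/2$ by the choice $\delta_i=\delta/(5(i+2)\ln^2(i+2))$, a union bound gives an event $E$ of probability at least $1-\delta/2$ on which this holds for every call. A second event, again of probability at least $1-\delta/2$, controls the sampling time (Step 3); this accounts for the input $\delta/2$.

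\textbf{Step 2: correctness and stopping on $E$.} On $E$, the stopping rule $\sp{Q^{N}-T^{N}}\le 14\varepsilon/16$ gives
\[
\sp{Q^N-\T Q^N}\le 14\varepsilon/16+2\varepsilon/16=\varepsilon.
\]
The bound $\infn{J^\star-J^{\pi_N}}\le\sp{Q^N-\T Q^N}$ follows from the anchored-VI policy-error inequality for greedy policies; in a communicating MDP this holds on all of $\cS$ without perturbation (it is Lemma~\ref{lem:pur} with $\epsilon=0$). For termination, the inexact Anc-QI analysis behind Theorem~\ref{thm:savic} gives, on $E$,
\[
\sp{Q^{n}-\T Q^{n}}\le \frac{4}{n+1}\sp{Q^\star}+c\,\frac{\varepsilon}{16}
\]
for an absolute $c$ (the same constant that makes $n=\lceil32\sp{Q^\star}/\varepsilon\rceil$ suffice there). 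Consequently $\sp{Q^{n_i}-T^{n_i}}\le 14\varepsilon/16$ once $n_i\gtrsim\sp{Q^\star}/\varepsilon$. The stop therefore happens at some $i^\star$ with $2^{i^\star}=O(\sp{Q^\star}/\varepsilon)$, so the number of rounds is $O(\log(\sp{Q^\star}/\varepsilon))$.

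\textbf{Step 3: sample complexity, the main obstacle.} Here the random, data-dependent lengths of the \mbox{\sc SAMPLE+} calls must be bounded. By the second part of Lemma~\ref{lem:sam_bd}, applied to the chain on $\cS\times\cA$ with failure probabilities $\delta_i/(2(n_i+1))$, the call with budget $m_k$ ends within
\[
O\Big(\ln(|\cS||\cA|)\big(t_{\mathrm{cov}}+m_k/d^{\pi_b}_{\mathrm{min}}\big)\ln\big((n_i+1)/\delta_i\big)\Big)
\]
steps. To bound $\sum_k m_k$, I would use the Anc-QI estimate $\sp{d^k}=O(\sp{Q^\star}/(k+1))$ on $E$, as in \cite{lee2025near}. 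This gives $\sum_k c_k\sp{d^k}^2/\varepsilon^2=\widetilde O(\sp{Q^\star}^2/\varepsilon^2)$ per round, and the cost over rounds is dominated geometrically by round $i^\star$. The resulting total is
\[
\widetilde O\big(n_{i^\star}t_{\mathrm{cov}}+\sp{Q^\star}^2/(\varepsilon^2 d^{\pi_b}_{\mathrm{min}})\big)=\widetilde O\big(t_{\mathrm{cov}}\sp{Q^\star}/\varepsilon+\sp{Q^\star}^2/(\varepsilon^2 d^{\pi_b}_{\mathrm{min}})\big).
\]
The delicate point is that $m_k$ depends on the past trajectory, so Lemma~\ref{lem:sam_bd} must be applied conditionally. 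I would handle this with the strong Markov property at the start time of each \mbox{\sc SAMPLE+} call: $m_k$ is measurable with respect to the past, and the restarted chain is again a single-recurrent-class chain on which the covering bound holds uniformly over starting states.
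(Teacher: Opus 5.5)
Your proposal is correct and follows essentially the paper's route: the doubling-trick and stopping-rule argument of Corollary~3.5 in \cite{lee2025near}, valid here because \mbox{\sc SAMPLE+} almost surely yields i.i.d.\ samples for every $(s,a)$ in a communicating MDP, plus the second part of Lemma~\ref{lem:sam_bd} to bound each call's trajectory length, with $\delta$ split between the correctness event and the sample-length event (you also make explicit the strong-Markov restart at the start of each call, which the paper leaves implicit). Two harmless imprecisions remain: Lemma~\ref{lem:pur} formally assumes $\pi_\epsilon\in\Pi_+$, but its chain of inequalities applies verbatim to the greedy policy since $\cT=\emptyset$; and the per-round $\sum_k m_k$ term is not geometric in $i$, while the true bound on $\sp{d^k}$ carries an extra $O(\varepsilon)$, yet both cost only logarithmic factors because $n_i=O(\sp{Q^\star}/\varepsilon)$ and there are $O(\log(\sp{Q^\star}/\varepsilon))$ rounds.
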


To the best of our knowledge, SAVIC+ is the first model-free method that guarantees finite sample complexity from a single trajectory in communicating MDPs.

\section{Stochastic $\alpha$-clipped Policy Mirror Ascent}~\label{sec:pol}
In this section, we introduce policy gradient methods that guarantee finite sample complexity from a single trajectory in weakly communicating MDPs.

\subsection{Framework overview}

\begin{wrapfigure}{r}{0.58\textwidth}
\vspace{-.16in}
\captionsetup{type=algorithm}
\hrule
\vspace{2pt}
\begin{minipage}{0.58\textwidth}
  \caption{SCPMA$(\alpha, n_{\mathrm{esc}},n_{\mathrm{rec}},K,\pi_0, \{n_{k}\}^K_{k=0}, \varepsilon, \delta)$\\c.s. = constant step size, a.s. = adaptive step size}
\label{alg:SCPMA}
\vspace{-.1in}
\hrule
\begin{algorithmic}
\State \textbf{Input:} $\alpha\!\in\!(0,1/|\mathcal{A}|), n_{\mathrm{esc}},n_{\mathrm{rec}},K\!\in\!\NN,\pi_0\!\in\!\Pi_\alpha,\{n_k\}^{K-1}_{k=0}\!\subset\!\NN^K$, $\varepsilon>0, \delta\in(0,1)$
   \State $R=\mbox{\sc recur}(n_{\mathrm{esc}}, n_{\mathrm{rec}})$
     \State $\eta\hspace{0.8ex}=\ln(8K|R||\cA|(n\!+\!1)/\delta)$
\For{$k=0,1,\ldots,K-1$}
 \State{$\varepsilon' =
\begin{cases}
\varepsilon/(4(2C_\alpha+K+2)) & \text{c.s.} \\
\varepsilon/(16C_\alpha)& \text{a.s.}
\end{cases}$}
\State $\hat{Q}^{\pi_k} =\mbox{SAVIPE}( \pi_k,  n_k,  \varepsilon'/2,\delta/(2K), \eta,R)$
\For{$s \in R$}
        \State $\pi_{k+1}(\cdot \,|\, s)=
 \underset{p\in \cM_\alpha (\cA)}{\argmax}$
 \State $\left\{
 b_k \sum_{a \in \cA}\hat{Q}^{\pi_k}(s,a)p(a)
 - D\big(p(\cdot),\pi_{k}(\cdot \,|\, s)\big)\right\}$
\EndFor
\EndFor
\State \textbf{Output:} $\pi_K$ 
\end{algorithmic}
\hrule
\end{minipage}
\vspace{-.3in}
\end{wrapfigure}
Our novel policy-based method, SCPMA consists of two stages. During the first stage, the agent escapes from transient states and collects the full list of recurrent states through Algorithm~\ref{alg:rec}. Then, during the second stage, in the for-loop of Algorithm~\ref{alg:SCPMA}, we alternately perform policy evaluation and policy improvement at every iteration. 

For policy evaluation step, we utilize our novel policy evaluation algorithm, SAVIPE which approximates state-action value function given policy $\pi$ with an anchoring mechanism and a variance-reduction technique. Compared with SAVIC, SAVIPE uses a different coefficient $\beta_k=\frac{k}{k+1}$ and the estimator $\frac{1}{n+1}\sum^{n-1}_{k=0}T^k$ to efficiently approximate the state-action value function $\hat{Q}^{\pi_k} \approx Q_\cR^{\pi_k}$ (see Appendix~\ref{omit:sec:4}) as described in Algorithm~\ref{alg:SAVIPE}. With approximated state-action value function $\hat{Q}^{\pi_k}$, we perform policy improvement following the policy mirror ascent framework \cite{shani2020adaptive,xiao2022convergence}. Policy mirror ascent utilizes a Bregman divergence and the state-action value function to update the policy effectively, as formulated in the preliminaries section. Specifically, we apply $\alpha$-clipping to policy mirror ascent motivated by \citep{lee2025multi}, and to this end, we define
\[
\Pi_{\alpha} = \{ \pi \,|\, \pi (a \,|\, s) \ge \alpha\text{ for all }s\in \cS,\,a\in \cA\}, \qquad \cM_\alpha(\cA) =\{p \,|\, p(a) \ge \alpha \text{ for all }\,a\in \cA\}
\]

\begin{wrapfigure}{r}{0.57\textwidth}
\vspace{-.1in}
\captionsetup{type=algorithm}
\hrule
\vspace{2pt}
\begin{minipage}{0.57\textwidth}
 \caption{\sc \mbox{SAVIPE}$( \pi, n,  \varepsilon,\delta, \eta, R)$}
   \label{alg:SAVIPE}
\vspace{-.1in}
\hrule
\begin{algorithmic}
     \State {\bfseries Input:} $\pi \in \Pi_+$, $n\in\NN$, $\varepsilon\!>\!0,\delta\!\in\!(0,1)$, $\eta \in \real, R\!\subset\!\cS$  
 \State {\bfseries Initialize:}  
   $Q^0\!=\!\mathbf{0}_{R\times \cA}, T^{-1}\!=\!r_{R\times \cA}, h^{-1}\!=\!\mathbf{0}_R,$    $\beta_k=k/(k\!+\!1),  c_k=5(k\!+\!2)\ln^2(k\!+\!2)  \forall \, k \in \NN $
    \For{ $k=0,\ldots,n-1$ }
     \State $h^k(s)= Q^{k}(s,\pi(\cdot \,|\, s))\quad(\forall s\in R)$
     \State $d^k=h^k-h^{k-1}$
    \State $m_k= \max\{\lceil \eta\,n^2c_k\sp{d^k}^2/\varepsilon^{2}\rceil,1\}$
    \State $t_k\!=\!\left\lceil e^2( \ln\!|R||\cA|\!+\!e)\!\left( t_{\mathrm{cov}}\!+\!\frac{m_k-1}{d^{\pi_b}_{\mathrm{min}}}\right)
\right\rceil \!\lceil \ln(\frac{4n}{\delta})\rceil$
     \State $D^k=\mbox{\sc SAMPLE}( t_k, m_k, d^k, R)$ 
     \State $T^k=T^{k-1}+D^k$ 
            \State $Q^{k+1} =(1\!-\!\beta_{k+1})\,Q^0+\beta_{k+1}\, T^{k}$
   \EndFor
   \State {\bfseries Output:} $Q^{n}$ 
\end{algorithmic}
\hrule
\end{minipage}
\vspace{-.1in}
\end{wrapfigure}
with $\alpha \in (0,1/|\cA|), p \in \cM(\cA)$, and  
\[\pi_\alpha =\argmax_{\pi \in \Pi_\alpha} J^\pi.\] By a slight abuse of notation, let $d^\pi$ denote the unique stationary distribution of the state-transition matrix $P^{\pi}_{\scriptscriptstyle \cS}$ (see Appendix~\ref{ap:sa}). On $\Pi_\alpha$, we can ensure the finiteness of the coefficients appearing in the sample-complexity results of Theorem~\ref{sam:SCPMA} as the following fact shows.
\begin{fact}[Lemma~3, \cite{lee2025multi}]\label{fact:bound}
\!For $\alpha\!\in\!(0,\frac{1}{|\cA|})$, 
    \[ \max_{\pi \in \Pi_\alpha}\infn{Q^{\pi}} :=Q_{\alpha} < \infty, \]
    \[\max_{
s \in \cR,
\pi,\pi' \in \Pi_\alpha
} \infn{\frac{d^{\pi}(s)}{d^{\pi'}(s)}}\!\!:=\!C_{\alpha}<\infty.\]  \end{fact}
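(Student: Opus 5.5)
The plan is a compactness argument. $\Pi_\alpha$ is a compact subset of the finite-dimensional product of simplices, and it is contained in $\Pi_+$ because $\alpha>0$. So it suffices to show that $\pi\mapsto Q^\pi$ and $\pi\mapsto d^{\pi}(s)$ for $s\in\cR$ are continuous on $\Pi_\alpha$, and that $d^\pi(s)$ is strictly positive there. Then $\infn{Q^\pi}$ attains a finite maximum $Q_\alpha$, and $d^\pi(s)$ attains a strictly positive minimum $\underline d$ over $\cR\times\Pi_\alpha$. This gives $C_\alpha\le 1/\underline d<\infty$.

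\textbf{Stationary distribution.} Fix $\pi\in\Pi_\alpha\subset\Pi_+$. By Fact~\ref{fact:rec}, $P^{\pi}_{\scriptscriptstyle\cS}$ has the single recurrent class $\cR$, and the remaining states $\cT$ are transient, with the same classification for every such $\pi$.
\begin{itemize}
\item The stationary distribution $d^\pi$ is therefore unique. It vanishes on $\cT$ and is strictly positive on $\cR$, since an irreducible recurrent class carries positive stationary mass on each of its states.
\item Uniqueness means the linear system $d^\top(I-P^\pi_{\scriptscriptstyle\cS})=0$, $d^\top\mathbf 1=1$ has a unique solution. Equivalently, the stacked matrix $A(\pi)=[\,I-P^\pi_{\scriptscriptstyle\cS}\;\;\mathbf 1\,]$ has full row rank $|\cS|$ for every $\pi\in\Pi_\alpha$.
\item Hence $d^\pi=e^\top A(\pi)^{+}$, where $A(\pi)^+=A(\pi)^\top\big(A(\pi)A(\pi)^\top\big)^{-1}$ is the pseudo-inverse and $e$ is the unit vector selecting the normalization column.
\item $A(\pi)$ depends polynomially on $\pi$, and $A(\pi)A(\pi)^\top$ stays invertible on $\Pi_\alpha$. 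So $d^\pi$ is continuous on $\Pi_\alpha$.
\end{itemize}

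\textbf{State-action value.} The state-action chain $P^\pi$ has the single recurrent class $\cR\times\cA$ (every action has probability at least $\alpha$), with stationary distribution $d^\pi(s)\pi(a\,|\,s)$, which is again continuous in $\pi$. With a single recurrent class, every row of the Ces\`aro limit equals the stationary distribution, so $P^\pi_\star=\mathbf 1\,(d^\pi\otimes\pi)^\top$, which is continuous in $\pi$. The matrix $I-P^\pi+P^\pi_\star$ is invertible for every stochastic matrix (Puterman, Appendix~A). Therefore
\[
Q^\pi=(I-P^\pi+P^\pi_\star)^{-1}(I-P^\pi_\star)r
\]
is a composition of continuous maps, and $\infn{Q^\pi}$ is continuous on $\Pi_\alpha$.

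\textbf{Main obstacle.} The delicate step is the continuity of $P^\pi_\star$. Ces\`aro limits are generally discontinuous when the recurrent-class structure changes, for example when a policy approaches the boundary of $\Pi$ and the chain splits into several classes. This is exactly where the invariance in Fact~\ref{fact:rec} on $\Pi_+$ is used, together with $\alpha>0$, which keeps $\Pi_\alpha$ compact inside $\Pi_+$. Positivity of $d^\pi$ on $\cR$ then gives the uniform lower bound via the extreme value theorem, and both constants are finite.
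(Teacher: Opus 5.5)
Your proof is correct. Because Fact~\ref{fact:rec} gives a unique stationary distribution at every $\pi\in\Pi_+$, the maps $d^\pi$, $P^\pi_\star=\mathbf 1\,(d^\pi\otimes\pi)^\top$ and hence $Q^\pi$ are continuous on the compact set $\Pi_\alpha\subset\Pi_+$. Strict positivity of $d^\pi$ on $\cR$ then bounds the ratio by $1/\min_{s\in\cR,\pi\in\Pi_\alpha}d^\pi(s)$; the hypothesis $\alpha<1/|\cA|$ is needed only so that $\Pi_\alpha$ is nonempty.

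The paper gives no proof of its own here and simply imports the statement as Lemma~3 of \cite{lee2025multi}. There is therefore nothing to compare against, and your compactness-plus-continuity argument serves as a self-contained proof of the cited fact.
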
  


Despite that SCPMA optimizes the average reward over the clipped policy set $\Pi_\alpha$, the following fact guarantees that the gap between $J^\star$ and $J^{\pi_\alpha}$ can be made arbitrarily small by properly choosing $\alpha$.

\begin{fact}[Lemma 25, \citep{lee2025multi}]\label{fact:per}
Consider a weakly communicating MDP. For $\pi \in \Pi_+$,
\begin{align*}
J^{\star}-J^{\pi} 
&=    \sum_{s \in \cR}
\sum_{a \in \cA}
d^{\pi}(s) \big(\pi_\star(a \mid s)-\pi(a \mid s)\big)Q^{\pi_\star}(s,a). 
\end{align*}    
\end{fact}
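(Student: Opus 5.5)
The plan is to derive this identity from the multichain/weakly communicating performance-difference structure together with Fact~\ref{fact:rec}. Since $\pi \in \Pi_+$, Fact~\ref{fact:rec} tells us that $P^\pi$ has the single recurrent class $\cR$, with $\cT$ transient. Hence $P^\pi_\star$ has identical rows, each equal to the stationary distribution $d^\pi$ supported on $\cR$, and $J^\pi = P^\pi_\star r$ is a constant vector. In a weakly communicating MDP, $J^\star$ is also constant. Let $\pi_\star$ be an optimal policy whose bias $Q^{\pi_\star}$ satisfies the Bellman optimality equation $r + P^{\pi_\star} Q^{\pi_\star} = J^\star + Q^{\pi_\star}$. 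For such a policy, $Q^{\pi_\star}(s',\pi_\star(s')) = \max_a Q^{\pi_\star}(s',a)$, and consistency in the relevant state--action form holds.

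The first step is to apply $P^\pi_\star$ to the Bellman equation for $\pi_\star$, written in the $\pi$-coordinates. Decompose
\[
P^{\pi_\star} Q^{\pi_\star} = P^{\pi} Q^{\pi_\star} + (P^{\pi_\star}-P^{\pi})Q^{\pi_\star}.
\]
The correction term is
\[
\big((P^{\pi_\star}-P^{\pi})Q^{\pi_\star}\big)(s,a) = \expec_{s'\sim P(\cdot|s,a)}\Big[\sum_{a'}\big(\pi_\star(a'|s')-\pi(a'|s')\big)Q^{\pi_\star}(s',a')\Big].
\]
Multiplying the identity $r + P^\pi Q^{\pi_\star} + (P^{\pi_\star}-P^\pi)Q^{\pi_\star} = J^\star + Q^{\pi_\star}$ on the left by $P^\pi_\star$ and using $P^\pi_\star P^\pi = P^\pi_\star$, the $Q^{\pi_\star}$ terms cancel. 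This leaves
\[
J^\pi + P^\pi_\star (P^{\pi_\star}-P^\pi)Q^{\pi_\star} = P^\pi_\star J^\star = J^\star,
\]
where the last equality holds because $J^\star$ is constant.

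The second step is to rewrite $P^\pi_\star(P^{\pi_\star}-P^\pi)Q^{\pi_\star}$ as the stated sum. The row of $P^\pi_\star$ is the state-action stationary distribution $d^\pi(s)\pi(a|s)$ on $\cR\times\cA$. Pushing it forward through $P$ gives the state marginal $\sum_{s,a} d^\pi(s)\pi(a|s)P(s'|s,a) = d^\pi(s')$ by stationarity, and this is supported on $\cR$. Therefore the correction equals
\[
\sum_{s'\in\cR}\sum_{a'} d^\pi(s')\big(\pi_\star(a'|s')-\pi(a'|s')\big)Q^{\pi_\star}(s',a'),
\]
which is exactly the claimed formula after relabeling.

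The main obstacle is justifying that a $\pi_\star$ exists whose $(J^\star,Q^{\pi_\star})$ satisfy the optimality equation with constant $J^\star$. The paper defines $Q^\pi$ via the Cesàro bias, and the multichain Bellman equations carry an extra gain equation. I would handle this by citing the weakly communicating structure: the optimal gain is constant, so $P^{\pi_\star}J^\star = J^\star$ holds trivially, and one may choose $\pi_\star$ greedy with respect to a solution of the single optimality equation \cite[Section 9.1]{Puterman2014}. With this choice, $\pi_\star$ is unichain on the relevant class, so its bias $Q^{\pi_\star}$ coincides with that solution up to an additive constant. A constant shift does not affect the formula, since $\sum_a(\pi_\star-\pi)(a|s)=0$.
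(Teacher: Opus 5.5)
Your core argument is correct, and it is the standard average-reward performance-difference computation. You multiply $r+P^{\pi_\star}Q^{\pi_\star}=J^\star+Q^{\pi_\star}$ on the left by $P^{\pi}_\star$, use $P^\pi_\star P^\pi=P^\pi_\star$, and push the stationary law $d^\pi(s)\pi(a\,|\,s)$ through $P$. This is the same manipulation the paper uses in its proof of Lemma~\ref{lem:pur}. The paper does not prove this Fact itself; it only imports it from \cite{lee2025multi}.

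You should drop your last paragraph, which addresses a non-issue and contains an incorrect claim. The evaluation equation $r+P^{\pi_\star}Q^{\pi_\star}=J^{\pi_\star}+Q^{\pi_\star}$ holds for every policy \cite[Theorem~8.2.6]{Puterman2014}. Moreover, $J^{\pi_\star}=J^\star$ is constant for any optimal $\pi_\star$ in a weakly communicating MDP. So no greedy or unichain choice of $\pi_\star$ is needed, and your computation in fact works for any $Q$ satisfying $r+P^{\pi_\star}Q=J^\star+Q$.

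The claim itself is false in general: a policy greedy with respect to a solution of the optimality equation can be multichain. Its bias then differs from that solution by a vector that is constant on each recurrent class but not globally constant. Your remark $Q^{\pi_\star}(s',\pi_\star(s'))=\max_a Q^{\pi_\star}(s',a)$ is likewise unnecessary and not guaranteed for the bias of an arbitrary optimal policy.
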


 As $\alpha \rightarrow 0$, we can find $\pi \in \Pi_\alpha$ such that $\pi_\star(a \mid s)-\pi(a \mid s)$ becomes arbitrarily small, while $\infn{Q^{\pi_\star}}$ and $\|d^{\pi}\|_1 (=1)$ remain constant. 

We note that our SCPMA can be viewed as an extension of $\alpha$-clipped policy mirror ascent \citep{lee2025multi} to the single-trajectory setting, incorporating a variance-reduced policy evaluation algorithm. In contrast, \citep{lee2025multi} considers the generative model setting and employs a naive policy evaluation algorithm.

\subsection{Finite sample complexity for weakly communicating MDPs}

To establish the sample complexity of SCPMA, we first ensure that with high probability SAVIPE provides a good approximation of the state-action value function. To this end, we define coefficients related to the target time in weakly communicating MDP for $\pi \in \Pi_+$ and recurrent class $\mathcal{R}$. 
\begin{align*}
t_{s,a} := \inf\{t\ge 0 \,|\, s_t, a_t = s,a\}, \,\,\, t^{\pi}_{\mathrm{tar}}
:= \!\!\!\!\!\!\!\!\!\!\!\!\!\!\!\sum_{(s,a),(s',a')\in\mathcal{R}\times \cA}\!\!\!\!\!\!\!\!\!\!\!\! \!\!\!d^{\pi}(s',a')\,\mathbb{E}_{\pi}\!\left[t_{s',a'} \,|\, s_0, a_0=s,a \right],  \,\,\, t_\alpha=\max_{\pi \in \Pi_\alpha}t^{\pi}_{\mathrm{tar}}.
\label{eq:Ctar}
\end{align*}
We note that these coefficients are always finite when the state and action space are finite \citep{levin2017Markov, lee2025multi}. 

\begin{proposition}\label{prop:SAVIPE}
Consider a weakly communicating MDP.  Let $n_{\mathrm{esc}} \ge\lceil e t_{\mathrm{hit}}\rceil \lceil\ln (\frac{4}{\delta})\rceil$, $n_{\mathrm{rec}}  \ge  \lceil  e t_{\mathrm{cov}}\rceil \lceil \ln (\frac{4}{\delta})\rceil$, $n_k \ge \left\lceil\frac{8 t^{\pi_k}_{\mathrm{tar}}\sp{Q^{\pi_k}}}{\varepsilon'}\right\rceil$,
and  $\hat{Q}^{\pi_k}$ be the iterates generated by SCPMA$(\alpha, n_{\mathrm{esc}},n_{\mathrm{rec}},K,\pi_0, \{n_{k}\}^K_{k=0}, \varepsilon, \delta)$.
Then, with probability at least $1-\delta$, $
\sp{\hat{Q}^{\pi_k}-Q_{\cR}^{\pi_k}}\le  \varepsilon'$ for all $k=0,\ldots,K-1$.
\end{proposition}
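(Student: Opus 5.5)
We split the failure probability into three pieces: the escape event, the covering event for \mbox{\sc recur}, and the sampling events inside all $K$ calls of SAVIPE. First, Lemma~\ref{lem:hit} applied with confidence $\delta/4$ and $n_{\mathrm{esc}}\ge\lceil e t_{\mathrm{hit}}\rceil\lceil\ln(4/\delta)\rceil$ gives $s_{n_{\mathrm{esc}}}\in\cR$, hence all subsequent states stay in $\cR$. Then, by the strong Markov property and Fact~\ref{fact:cov} with confidence $\delta/4$, the window of length $n_{\mathrm{rec}}$ covers $\cR$. Together these give $R=\cR$ with probability at least $1-\delta/2$. On this event, every later state lies in $\cR$ and every action is drawn from $\pi_b\in\Pi_+$. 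So the pair chain lives on its single recurrent class $\cR\times\cA$ (Appendix~\ref{ap:sa}), and Lemma~\ref{lem:sam_bd} applies to each call of \mbox{\sc sample}, restarted at its current state by the strong Markov property.

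Next, we analyze one call SAVIPE$(\pi_k,n_k,\varepsilon'/2,\delta/(2K),\eta,R)$ deterministically, conditional on accurate sampling. Write $\T^{\pi}$ restricted to $\cR$, which is well defined since $\cR$ is closed under $P$ for every action. Without noise, $T^{j}=\T^{\pi}Q^{j}$ (telescoping $T^j=r+\sum_{i\le j}P d^i$ with $d^i=h^i-h^{i-1}$). The recursion $Q^{j+1}=\frac{1}{j+1}Q^0+\frac{j}{j+1}\T^\pi Q^j$ with $Q^0=0$ then gives $Q^{n}=\frac{1}{n}\sum_{j=0}^{n-1}\T^{\pi}$-iterates, i.e. $Q^n=\frac1n\sum_{h=1}^{n}\sum_{i=0}^{h-1}(P^\pi)^i r$. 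Here we would expand $(P^\pi)^i r=J^\pi+(P^\pi)^i(I-P^\pi_\star)r$. On $\cR$, $J^\pi$ is a constant vector since there is a single recurrent class, so it disappears under $\sp{\cdot}$ after centering. The remainder is the Cesàro-truncation error $\frac1n\sum_h\sum_{i\ge h}(P^\pi)^i(I-P^\pi_\star)r$, which we bound by the expected hitting-time quantity $t^{\pi}_{\mathrm{tar}}$. Concretely, write $Q^\pi(s,a)-Q^\pi(s',a')$ through a coupling at the target time $t_{s',a'}$, averaged against $d^\pi$, to get $\sp{\text{noise-free }Q^n-Q^\pi_\cR}\le 4t^\pi_{\mathrm{tar}}\sp{Q^\pi}/n$. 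The choice $n_k\ge 8t^{\pi_k}_{\mathrm{tar}}\sp{Q^{\pi_k}}/\varepsilon'$ makes this at most $\varepsilon'/2$. This deterministic bias bound is the step I expect to be the main obstacle. It needs a careful identity relating $\sum_i (P^\pi)^i(I-P^\pi_\star)r$ truncated at $h$ to $Q^\pi$ minus $(P^\pi)^h Q^\pi$, together with $\sp{(P^\pi)^hQ^\pi-P^\pi_\star Q^\pi}$ averaged over $h$ controlled by hitting times. I would first try the regenerative (strong Markov) decomposition at $t_{s',a'}$.

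Finally, the stochastic error. Set $E^j=T^j-\T^\pi Q^j=\sum_{i\le j}(D^i-Pd^i)$. Conditional on the past, each coordinate of $D^i$ averages $m_i$ i.i.d. samples, by Fact~\ref{fact:iid}. The moment-generating bound of Lemma~\ref{lem:sam_bd} with range $\sp{d^i}$ lets us run Azuma--Hoeffding over the martingale $\{D^i-Pd^i\}$. With $m_i\ge\eta n^2c_i\sp{d^i}^2/\varepsilon^2$, we have $\sum_i\sp{d^i}^2/m_i\le \varepsilon^2/(\eta n^2)\sum_i 1/c_i$, and $\sum 1/c_i$ is bounded by the choice of $c_i$. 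So $\infn{E^j}\le\varepsilon/n$ for all $j$, after a union bound over $|R||\cA|$ coordinates, $n$ steps and $K$ calls; this is absorbed by $\eta$. The second part of Lemma~\ref{lem:sam_bd}, with the stated $t_k$, ensures each \mbox{\sc sample} actually collects $m_k$ visits with probability $1-\delta/(4n\cdot 2K)$ per step. Since $Q^n$ is an average of terms whose noise is a convex combination of the $E^j$ pushed through nonexpansive $P^\pi$, we get $\infn{Q^n-\text{noise-free }Q^n}\le\varepsilon'/2\cdot$const. Adjusting constants gives $\sp{\hat Q^{\pi_k}-Q^{\pi_k}_\cR}\le\varepsilon'$ for all $k$ simultaneously. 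The total failure probability is at most $\delta/2+K\cdot\delta/(2K)=\delta$.
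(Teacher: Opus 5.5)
Your proposal is correct and follows the paper's proof. You get $R=\cR$ via Lemma~\ref{lem:hit} and Fact~\ref{fact:cov}, and you write the noise-free iterate as a Ces\`aro average of $(\T^\pi)^i\mathbf{0}=iJ^\pi+Q^\pi-(P^\pi)^iQ^\pi$, with span bias at most $4t^\pi_{\mathrm{tar}}\sp{Q^\pi}/(n+1)$; the paper's Lemma~\ref{lem:tar} obtains this by citing the Roberts--Rosenthal shift-coupling bound, which is exactly the target-time coupling you plan to carry out. Your Azuma--Hoeffding error of $\varepsilon'/(2n)$ per step then accumulates to at most $\varepsilon'/2$ in span, as in the paper.

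Two small fixes. First, with $\beta_{j+1}=(j+1)/(j+2)$ the noise-free output is $\frac{1}{n+1}\sum_{i=0}^{n}(\T^\pi)^i\mathbf{0}$, not $\frac{1}{n}\sum_{h=1}^{n}(\T^\pi)^h\mathbf{0}$. Second, work only with the truncated identity $\sum_{i=0}^{h-1}(P^\pi)^i(I-P^\pi_\star)r=Q^\pi-(P^\pi)^hQ^\pi$, because the tail series $\sum_{i\ge h}(P^\pi)^i(I-P^\pi_\star)r$ need not converge when $P^\pi$ is periodic.
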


As clarified in the preliminaries section, policy gradient in weakly communicating MDPs require the state-action value function only on recurrent states. Based on Proposition~\ref{prop:SAVIPE} and convergence analysis of inexact $\alpha$-clipped policy mirror ascent in \cite{lee2025multi}, we establish the complexity bound for \mbox{\rm SCPMA} with constant and adaptive step sizes.

\begin{theorem}\label{sam:SCPMA}
Consider a weakly communicating MDP. Let $\alpha = \frac{\varepsilon}{2|\cA| \infn{Q^{\pi^\star}}}$,  $n_{\mathrm{esc}} =\lceil e t_{\mathrm{hit}}\rceil \lceil\ln (\frac{4}{\delta})\rceil$, $n_{\mathrm{rec}}  =  \lceil  e t_{\mathrm{cov}}\rceil \lceil \ln (\frac{4}{\delta})\rceil$, $n_k = \lceil 8 t^{\pi_k}_{\mathrm{tar}}\sp{Q^{\pi_k}}/\varepsilon'\rceil$, $
K= \left\lceil4\left( \frac{D_{d^{\pi_\alpha}}({\pi_\alpha}, \pi_0)}{b}
+ C_\alpha\infn{J^{\pi_\alpha}-J^{\pi_0}} \right)/\varepsilon \right\rceil$ . Then, for $\varepsilon \le 1 \le \sp{Q^\star}$ and  any $\pi_0 \in \Pi_\alpha$, with probability at least $1-\delta$, the iterates generated by SCPMA$(\alpha, n_{\mathrm{esc}},n_{\mathrm{rec}},K,\pi_0, \{n_{k}\}^K_{k=0}, \varepsilon, \delta)$ with constant step size $b>0$
generate an $\varepsilon$-optimal policy with sample complexity 
\[
\widetilde{O}\left(t_{\mathrm{hit}}+gt_{\mathrm{cov}}t_{\alpha}C_\alpha^2Q_\alpha^3/\varepsilon^2+(t_{\mathrm{cov}}t_{\alpha}g^3Q_\alpha^2+1/d^{\pi_b}_{\mathrm{min}}g C^4_\alpha t_{\alpha}^2 Q_\alpha^4)/\varepsilon^5+ 1/d^{\pi_b}_{\mathrm{min}} t_{\alpha}^4Q_\alpha^4g^5/\varepsilon^{9}\right)
\]
where $g=C_\alpha\infn{J^{\pi_\alpha}-J^{\pi_0}}+\frac{D_{d^{\pi_\alpha}}({\pi_\alpha}, \pi_0)}{b}.$ With adaptive step sizes satisfying $b_{k+1}(C_\alpha  - 1) \ge b_k C_\alpha >0$, and $
K= \left\lceil
\log \left(4(\infn{J^{\pi_\alpha}-J^{\pi_0}}
+ \frac{D_{d^{\pi_\alpha}}(\pi_\alpha, \pi_{0})}{b_0(C_\alpha-1)})/\epsilon
\right)/ \log(C_\alpha /  (C_\alpha-1)) \right\rceil,$ with probability at least $1-\delta$,
the iterates generate an $\varepsilon$-optimal policy with sample complexity 
\[
\widetilde{O}\left(t_{\mathrm{hit}}+K(t_\alpha t_{\mathrm{cov}} Q_\alpha^2C_\alpha^2/\varepsilon^{2}+ 1/d^{\pi_b}_{\mathrm{min}} t_\alpha^2Q_\alpha^4C_\alpha^4/\varepsilon^{4})\right),
\]
where $\widetilde{O}(\cdot)$ hides logarithmic factors including $\log(1/\delta)$.
\end{theorem}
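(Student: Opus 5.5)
The plan is to split the argument into three parts: a good-event step, an optimization step, and a sample-count step.

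\textbf{Good event.} By Lemma~\ref{lem:hit} and Fact~\ref{fact:cov}, each with failure probability $\delta/4$, RECUR returns $R=\cR$. On this event, Proposition~\ref{prop:SAVIPE} applies with per-call confidence $\delta/(2K)$. A union bound over the $K$ policy evaluations then gives, with probability at least $1-\delta$,
\[
\sp{\hat{Q}^{\pi_k}-Q^{\pi_k}_{\cR}}\le\varepsilon' \quad\text{for all } k .
\]
Since each mirror step is invariant to adding a constant to $\hat{Q}^{\pi_k}(s,\cdot)$, only the span error matters. So SCPMA is an inexact $\alpha$-clipped policy mirror ascent with evaluation error $\varepsilon'$ in span.

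\textbf{Optimization.} I will reproduce the inexact convergence analysis of \cite{lee2025multi}, which uses the performance-difference identity of Fact~\ref{fact:per} restricted to $\cR$.
\begin{itemize}
\item The three-point lemma for the Bregman step over $\cM_\alpha(\cA)$, weighted by $d^{\pi_\alpha}$, gives a per-iteration bound. It consists of a telescoping divergence term $D_{d^{\pi_\alpha}}(\pi_\alpha,\pi_k)-D_{d^{\pi_\alpha}}(\pi_\alpha,\pi_{k+1})$ divided by $b$, a monotone-improvement term, and an error of order $\varepsilon'$.
\item Changing measure between stationary distributions costs the factor $C_\alpha$ from Fact~\ref{fact:bound}.
\item For constant step size, summing over $k$ gives
\[
J^{\pi_\alpha}-J^{\pi_K}\le \frac{D/b+C_\alpha\infn{J^{\pi_\alpha}-J^{\pi_0}}}{K}+O\big((C_\alpha+K)\varepsilon'\big).
\]
The stated choices of $K$ and $\varepsilon'=\varepsilon/(4(2C_\alpha+K+2))$ make this at most $\varepsilon/2$.
\item For adaptive step sizes, the condition $b_{k+1}(C_\alpha-1)\ge b_kC_\alpha$ yields a linear contraction with ratio $(C_\alpha-1)/C_\alpha$, plus an accumulated error $O(C_\alpha\varepsilon')$. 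The stated $K$ and $\varepsilon'=\varepsilon/(16C_\alpha)$ then give at most $\varepsilon/2$.
\item Finally, the clipping gap $J^\star-J^{\pi_\alpha}\le\varepsilon/2$ follows from Fact~\ref{fact:per}. Take $\pi$ to be the $\alpha$-mixture of $\pi_\star$ with uniform; then $\|\pi_\star-\pi\|_1\le 2|\cA|\alpha$, and the choice $\alpha=\varepsilon/(2|\cA|\infn{Q^{\pi^\star}})$ bounds the gap by $\varepsilon/2$ after a careful constant check.
\end{itemize}

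\textbf{Sample counting.} This is the main obstacle. Inside SAVIPE, iteration $j$ uses $t_j=\widetilde O(t_{\mathrm{cov}}+m_j/d^{\pi_b}_{\mathrm{min}})$ samples, where
\[
m_j\approx \eta\, n^2 c_j \sp{d^j}^2/\varepsilon'^2 .
\]
With $\beta_j=j/(j+1)$, the anchored increments satisfy $\sp{d^j}=O(Q_\alpha/j)$, so $\sum_j c_j\sp{d^j}^2=\widetilde O(Q_\alpha^2)$. One SAVIPE call with $n_k=\widetilde O(t_\alpha Q_\alpha/\varepsilon')$ therefore costs
\[
\widetilde O\big(n_k t_{\mathrm{cov}}+n_k^2 Q_\alpha^2/(d^{\pi_b}_{\mathrm{min}}\varepsilon'^2)\big)
=\widetilde O\big(t_\alpha t_{\mathrm{cov}}Q_\alpha^2/\varepsilon'^2+t_\alpha^2Q_\alpha^4/(d^{\pi_b}_{\mathrm{min}}\varepsilon'^4)\big),
\]
using $\sp{Q^{\pi_k}}\le 2Q_\alpha$ and $t^{\pi_k}_{\mathrm{tar}}\le t_\alpha$.

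For adaptive steps, multiply by $K$ and substitute $\varepsilon'\asymp\varepsilon/C_\alpha$. This gives the second bound directly, with $t_{\mathrm{hit}}$ and $t_{\mathrm{cov}}$ added for RECUR.

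For constant steps, $K\asymp g/\varepsilon$ and $\varepsilon'\asymp\varepsilon/(C_\alpha+K)$, so $1/\varepsilon'\lesssim C_\alpha/\varepsilon+g/\varepsilon^2$. Expand $K\cdot(\cdot)$ with this bound and absorb cross terms into the displayed monomials, such as $g t_{\mathrm{cov}}t_\alpha C_\alpha^2Q_\alpha^3/\varepsilon^2$ and $t_\alpha^4Q_\alpha^4g^5/(d^{\pi_b}_{\mathrm{min}}\varepsilon^9)$.

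The delicate points are twofold. One is getting the $\sp{d^j}=O(1/j)$ decay for SAVIPE, which may require an inductive bound on $\sp{T^{j}-\T^{\pi}Q^{j}}$. The other is bookkeeping the polynomial exponents, where some powers of $Q_\alpha$ come from $\infn{Q^{\pi^\star}}\le Q_\alpha$ via $\alpha$.
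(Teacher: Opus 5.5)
Your route matches the paper's proof. You use the good event from RECUR plus Proposition~\ref{prop:SAVIPE} with a union bound over the $K$ calls, and the inexact $\alpha$-clipped mirror-ascent guarantees; the paper does not re-derive these but imports them as Fact~\ref{fact:inexact} from \cite{lee2025multi}, with the same span-versus-sup remark you make. You also bound the clipping gap via Fact~\ref{fact:per}, and drive the sample count by $\sum_k c_k\sp{d^k}^2=\widetilde{O}(\sp{Q^{\pi_k}}^2)$. The paper obtains that from Lemmas~\ref{Le:uno}--\ref{Le:dos} as $\sp{d^k}\le\frac{2\ln(k+1)}{k}\sp{Q^{\pi}_{\cR}}$ plus the per-step error, i.e. $\ln k/k$ decay plus an additive term rather than $O(1/k)$, which is harmless under $\widetilde{O}$.

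Two steps do not close as you state them, and in both the paper's own proof has the same defect.

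\textbf{Clipping constant.} For your mixture $\pi=(1-|\cA|\alpha)\pi^\star+\alpha\mathbf{1}$, Fact~\ref{fact:per} with H\"older gives $J^\star-J^{\pi}\le 2|\cA|\alpha\infn{Q^{\pi^\star}}=\varepsilon$, not $\varepsilon/2$. The exact identity $\sum_a(\pi^\star-\pi)(a\,|\,s)Q^{\pi^\star}(s,a)=|\cA|\alpha\big(Q^{\pi^\star}(s,\pi^\star(s))-\frac{1}{|\cA|}\sum_a Q^{\pi^\star}(s,a)\big)$ only sharpens this to $\frac{|\cA|-1}{|\cA|}\varepsilon$. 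So you get $J^\star-J^{\pi_K}\le(\frac{|\cA|-1}{|\cA|}+\frac12)\varepsilon$, not $\varepsilon$. No other comparator rescues this estimate: if $\pi^\star(\cdot\,|\,s)$ is a point mass on $a^\star$, every $\pi\in\Pi_\alpha$ has $\|\pi^\star(\cdot\,|\,s)-\pi(\cdot\,|\,s)\|_1=2(1-\pi(a^\star\,|\,s))\ge 2(|\cA|-1)\alpha$. The paper reaches $\varepsilon/2$ by asserting some $\pi\in\Pi_\alpha$ with $|\pi^\star(a\,|\,s)-\pi(a\,|\,s)|\le\alpha$ for all $(s,a)$. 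By the same computation this is impossible once $|\cA|\ge 3$. The simple repair is to halve $\alpha$, with $C_\alpha,Q_\alpha,t_\alpha$ evaluated at the new value.

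\textbf{Exponent bookkeeping.} Nothing lets $\alpha$ supply extra powers of $Q_\alpha$. The parameter $\alpha$ enters the count only through $C_\alpha,Q_\alpha,t_\alpha$, and in general $\pi^\star\notin\Pi_\alpha$, so $\infn{Q^{\pi^\star}}\le Q_\alpha$ is not available. Expanding $K\asymp g/\varepsilon$ and $1/\varepsilon'\asymp C_\alpha/\varepsilon+g/\varepsilon^2$ in your per-call bound gives:
\begin{itemize}
\item $gC_\alpha^2t_{\mathrm{cov}}t_\alpha Q_\alpha^2/\varepsilon^3$, where the statement shows $gC_\alpha^2t_{\mathrm{cov}}t_\alpha Q_\alpha^3/\varepsilon^2$; this is not dominated by the displayed monomials in general;
\item $t_\alpha^2$ in the last term, where the statement shows $t_\alpha^4$; this is merely loose.
\end{itemize}
The paper's final inequality makes the same unexplained jump. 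You should report the monomials the expansion actually produces. Note also that the cover-time part of one SAVIPE call is really $n_k t_{\mathrm{cov}}\asymp t_\alpha t_{\mathrm{cov}}Q_\alpha/\varepsilon'$, one power of $Q_\alpha/\varepsilon'$ smaller than what you (and the paper) wrote.
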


Although the adaptive step size yields a more efficient sample complexity, it requires knowledge of $C_{\alpha}$ to set the step sizes. We lastly clarify that to the best of our knowledge, Theorem~\ref{sam:SCPMA} is the first finite sample complexity guarantee for policy gradient methods from a single trajectory in weakly communicating MDPs.

\section{Conclusion}
In this work, we propose novel model-free methods that achieve the first finite sample complexity from a single trajectory for weakly communicating average-reward MDPs. For communicating MDPs, we further introduce the first model-free method that requires no prior knowledge of problem-dependent quantities. One future research direction is to close the sample-complexity gap between the value-based and policy-based methods by using a refined variance-reduction technique. Another  future direction is to extend our results to parameterized function-approximation settings, where handling representation error and stability under single trajectory sampling poses additional challenges.


\newpage
\bibliographystyle{abbrv}
\bibliography{neurips_2026}

\newpage
\appendix


\section{Omitted proofs in Section~\ref{sec:sin}}

\subsection{Proof of Lemma~\ref{lem:hit}}
\begin{proof}
    First, by Markov inequality, $P(\tau_{\mathrm{hit}} \ge \lceil e t_{\mathrm{hit}}\rceil) \le e^{-1}$. By induction, we will show $P(\tau_{\mathrm{hit}} \ge m\lceil e t_{\mathrm{hit}}\rceil) \le e^{-m}$ for $m \in \mathbb{N}$. 
    \begin{align*}       \expec[\mathbf{1}_{\tau_{\mathrm{hit}} \ge m\lceil e t_{\mathrm{hit}}\rceil}]&=\expec[\mathbf{1}_{\tau_{\mathrm{hit}} \ge (m-1)\lceil e t_{\mathrm{hit}}\rceil} \expec  [\mathbf{1}_{\tau_{\mathrm{hit}} \ge m\lceil e t_{\mathrm{hit}}\rceil} \,|\, \cF_{(m-1)\lceil e t_{\mathrm{hit}}\rceil}] ]
    \\&=\expec[\mathbf{1}_{\tau_{\mathrm{hit}} \ge (m-1)\lceil e t_{\mathrm{hit}}\rceil} P(\tau_{\mathrm{hit}} \ge \lceil e t_{\mathrm{hit}}\rceil \,|\, s_0) ]
    \\& \le \expec[\mathbf{1}_{\tau_{\mathrm{hit}} \ge (m-1)\lceil e t_{\mathrm{hit}}\rceil}]e^{-1}
    \\& \le e^{-m}
    \end{align*}
    where second equality is from Markov property and defintion of hitting time and second inequality is from induction.
\end{proof}

\subsection{Proof of Lemma~\ref{lem:sam_bd}}
For Markov chain define $N_s (t)$ as counting measure of state $s$ appearing trajectory until $t$. Also, define $k$ cover time and and expected $k$ cover time as  
\[\tau^k_{\mathrm{cov}} = \inf \{ t \,|\,  N_s(t) \ge k \,\forall s\in \cS\}, \qquad t^{k}_{\mathrm{cov}}=\max_{s \in \cS} \expec [\tau^k_{\mathrm{cov}} \,|\, s_0=s].\]
Then, following fact holds.
\begin{fact}[\cite{chan2021learning}]~\label{fact:kcov}
 Consider a single trajectory generated by a Markov process with a single recurrent class. For $k\in \mathbb{N}$ and  any initial state, $\prob (\tau^k_{\mathrm{cov}} \ge m \lceil et^k_{\mathrm{cov}} \rceil)\le e^{-m}$ and $t^k_{\mathrm{cov}} \le  (e \ln|\cS|+e^2)( t_{\mathrm{cov}}+(k-1)/d_{\mathrm{min}})$ . 
\end{fact}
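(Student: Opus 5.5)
The proof has two parts: a geometric tail bound obtained by restarting, in the style of Lemma~\ref{lem:hit}, and a bound on $t^k_{\mathrm{cov}}$ obtained by controlling each state separately and then taking a union over states. This is what produces the $\ln|\cS|$ factor.

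\emph{Tail bound.} Write $L=\lceil e t^k_{\mathrm{cov}}\rceil$. Markov's inequality, applied from any initial state, gives $\prob(\tau^k_{\mathrm{cov}}\ge L)\le e^{-1}$. For the induction step, I condition on $\cF_{(m-1)L}$ on the event $\{\tau^k_{\mathrm{cov}}\ge (m-1)L\}$. By the Markov property, the process after time $(m-1)L$ is a fresh chain started at $s_{(m-1)L}$. The visits counted so far only reduce what is still needed, so the residual time to reach $k$ visits of every state is at most an independent copy of $\tau^k_{\mathrm{cov}}$ started from $s_{(m-1)L}$. This monotonicity is the one point that needs care. Hence the conditional probability of needing $L$ more steps is at most $\max_s\prob(\tau^k_{\mathrm{cov}}\ge L\,|\,s_0=s)\le e^{-1}$, and induction gives $e^{-m}$, exactly as in the proof of Lemma~\ref{lem:hit}.

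\emph{Per-state bound.} For each $s$, let $T_s=\inf\{t: N_s(t)\ge k\}$, so that $\tau^k_{\mathrm{cov}}=\max_s T_s$. From any start, the first hit of $s$ takes expected time at most $t_{\mathrm{cov}}$. By the strong Markov property (Fact~\ref{fact:iid}), the subsequent $k-1$ return times to $s$ are i.i.d. By Kac's formula each has mean $1/d(s)$, so Wald's identity gives
\[
\max_{s_0}\expec[T_s\,|\,s_0]\le A:=t_{\mathrm{cov}}+(k-1)/d_{\mathrm{min}}.
\]
The same restart argument as above, now for a single state, gives $\prob(T_s\ge j\lceil eA\rceil)\le e^{-j}$ uniformly in the starting state.

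\emph{Combining over states.} A union bound gives
\[
\prob(\tau^k_{\mathrm{cov}}\ge j\lceil eA\rceil)\le\min\{1,|\cS|e^{-j}\}.
\]
Summing the tail,
\[
\expec[\tau^k_{\mathrm{cov}}]\le\lceil eA\rceil\sum_{j\ge0}\min\{1,|\cS|e^{-j}\}\le\lceil eA\rceil\bigl(\lceil\ln|\cS|\rceil+\tfrac{1}{1-e^{-1}}\bigr).
\]
This is $(e\ln|\cS|+O(e^2))\,A$ up to rounding.

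\emph{Main obstacle.} I expect the difficulty to lie in the constants. Getting exactly $(e\ln|\cS|+e^2)$ requires absorbing the ceilings and the additive constant into the $e^2$ term. That should go through using $A\ge1$ and $\lceil eA\rceil\le eA+1\le(e+1)A$. If it does not, I would fall back on a slightly looser constant, which does not affect how the fact is used in Lemma~\ref{lem:sam_bd}.
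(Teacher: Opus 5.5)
The paper gives no proof of this fact. It is imported from \cite{chan2021learning} and used only through ``direct calculation'' in Lemma~\ref{lem:sam_bd}, so your argument has to stand on its own. Its structure is sound, and the only real gap is the final constant.

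\emph{What is correct.}
\begin{itemize}
\item The restart argument for the first claim works. The event $\{\tau^k_{\mathrm{cov}}\ge (m-1)L\}$ is $\cF_{(m-1)L}$-measurable. The visit counts of the chain restarted at time $(m-1)L$ are dominated by the original counts. Hence $\{\tau^k_{\mathrm{cov}}\ge mL\}$ forces the restarted $k$-cover time to be at least $L$.
\item The per-state bound $\max_x\expec_x[T_s]\le t_{\mathrm{cov}}+(k-1)/d(s)$ is right: first hitting time plus $k-1$ i.i.d.\ excursions, each with mean $1/d(s)$ by Kac. Wald is unnecessary, since the number of excursions is deterministic.
\item The per-state geometric tail, the identity $\tau^k_{\mathrm{cov}}=\max_s T_s$, and the union bound are all correct.
\item You implicitly take $\cS=\cR$. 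That is the only reading under which $\tau^k_{\mathrm{cov}}<\infty$ a.s.\ and ``first hit within $t_{\mathrm{cov}}$ from any start'' holds, and it is how the paper applies the fact.
\end{itemize}

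\emph{Why your proposed absorption fails.} Bounding $\lceil eA\rceil\le(e+1)A$ puts the coefficient $e+1$ in front of $A\ln|\cS|$. No additive constant can then give $(e\ln|\cS|+e^2)A$ once $|\cS|$ is large. Separately, using $\lceil\ln|\cS|\rceil$ in the tail sum costs an extra term of about $eA$, which leaves too little room for small $|\cS|$. For example, $|\cS|=A=3$ already breaks your chain of inequalities. The ceiling's $+1$ has to be absorbed into the $e^2$ term instead, and that needs two changes.

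\emph{Fix, step 1.} Evaluate the tail sum sharply. With $x=\ln|\cS|$ and $f=\lceil x\rceil-x\in[0,1)$,
\[
\sum_{j\ge0}\min\{1,|\cS|e^{-j}\}=x+f+\tfrac{e}{e-1}e^{-f}\le\ln|\cS|+\tfrac{e}{e-1}.
\]
This holds because $f\mapsto f+\frac{e}{e-1}e^{-f}$ is convex and equals $\frac{e}{e-1}$ at both $f=0$ and $f=1$.

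\emph{Fix, step 2.} Use $\lceil eA\rceil\le eA+1$ together with a lower bound on $A$ that grows with $|\cS|$. The target inequality reduces as follows:
\[
(eA+1)\Bigl(\ln|\cS|+\tfrac{e}{e-1}\Bigr)\le (e\ln|\cS|+e^2)A
\iff
\ln|\cS|+\tfrac{e}{e-1}\le\tfrac{e^2(e-2)}{e-1}A\approx 3.09A .
\]
The bound $A\ge1$ alone does not suffice: with $A=1$ this fails once $|\cS|\ge5$. However, $A\ge t_{\mathrm{cov}}\ge|\cS|$ does suffice. Covering $|\cS|$ states takes at least $|\cS|$ steps, and $3.09\,y\ge\ln y+1.59$ for all $y\ge1$.

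With these two changes your argument proves the statement with exactly the stated constant. Falling back on a looser constant would not be entirely harmless. The threshold $\lceil e^2(\ln|\cS|+e)(\cdot)\rceil$ in Lemma~\ref{lem:sam_bd} is exactly $e$ times this bound. So the explicit $t_k$ in Proposition~\ref{prop:savic} and Theorem~\ref{thm:savic} would change, although the $\widetilde{O}$ rates would not.
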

Using Fact~\ref{fact:kcov}, we first prove our key lemma.

\begin{proof}
Define $\cF_{i} = \sigma(s_0, s_1, \dots, s_{\tau_{i+1}})$, and let $f_i=  f( s_{\tau_{i}+1})- \expec \left[ f(s_{\tau_i+1})\right]$. Then $f_i$ is $\cF_{i}$-measurable. Define $\cF_{i-1}$ measurable function $H_i = \mathbf{1}_{\{\tau_i \le N\}}$. Then, $\sum^I_{i=1}f_i=\sum^M_{i=1}f_i H_i$. By Hoeffding's lemma and strong Markov property, we have
\[\expec[\mathrm{exp}(\lambda f_i H_i) \,|\, \cF_{i-1}] \le \expec[\mathrm{exp}( H^2_i \lambda^2 (a-b)^2/8)\,|\, \cF_{i-1}] \le \mathrm{exp}( \lambda^2 (a-b)^2/8).\]
Thus, by induction,

\begin{align*}
    \expec\left[\exp \left(\frac{\lambda}{M}\sum^I_{i=1}(f(s_{\tau_{i}+1}) -\expec [f(s_{\tau_{i}+1})]\right)\right]  =\expec\left[\exp \left(\frac{\lambda}{M}\sum^M_{i=1}f_iH_i\right)\right]
\le \mathrm{exp}\left( \frac{ \lambda^2(a-b)^2}{8M}\right).
\end{align*}

By Fact~\ref{fact:kcov} and direct calculation, we obtain second statement.
\end{proof}

\section{Omitted proofs in Section~\ref{sec:savi}}\label{omit:sec3}

\subsection{Proof of Lemma~\ref{lem:pur}}
\begin{proof}
Since $\pi_\epsilon \in \Pi_+$, we have
\begin{align*}
    J^{\pi_\epsilon} &= P^{\pi_\epsilon}_\star r
    \\&= P^{\pi_\epsilon}_\star (r+P^{\pi_\epsilon} Q-Q)
    \\&= P^{\pi_\epsilon}_\star (\T^{\pi_\epsilon} Q-Q)
    \\&\ge \min_{(s,a) \in \cR \times \cA} {(\T^{\pi_\epsilon} Q-Q)(s,a)} 
    \\&\ge \min_{(s,a) \in \cR \times \cA}{(\T^{\pi} Q-Q)(s,a)} -\epsilon \max_{(s,a) \in \cR \times \cA}|Q(s,a)|,
\end{align*}
where second equality is from  $P^{\pi_\epsilon}_\star P^{\pi_\epsilon} = P^{\pi_\epsilon}_\star$, first inequality follows from  $P^{\pi_\epsilon}_\star((s',a'), (s,a))=0$ for $(s,a) \in \cT \times \cA$, and second inequality is from $\max_{(s,a) \in \cR \times \cA}|\T^\pi Q(s,a)-\T^{\pi_\epsilon} Q(s,a)|\le \epsilon \max_{(s,a) \in \cR \times \cA}|Q(s,a)|$.

By Fact~\ref{fact:per}, for any $\epsilon'>0$, there exist $\pi' \in \Pi_+$ such that $J^\star-J^{\pi'} \le \epsilon'$. Then, with similar argument,
\begin{align*}
    J^{\pi'} = P^{\pi'}_\star r = P^{\pi'}_\star (r+P^{\pi'} Q-Q) \le \max_{(s,a) \in \cR \times \cA} {(\T^{\pi'} Q-Q)(s,a)} \le \max_{(s,a) \in \cR \times \cA} {(\T Q-Q)(s,a)} 
\end{align*}
where the last inequality is from the definition of $\T$. Therefore,  we have $    J^{\star} \le   \max_{(s,a) \in \cR \times \cA} {(\T Q-Q)(s,a)}$ and this implies
\begin{align*}
    \infn{J^\star-J^{\pi_\epsilon}} &\le \max_{(s,a) \in \cR \times \cA} {(\T^{\pi} Q-Q)(s,a)}-\min_{(s,a) \in \cR \times \cA} {(\T^{\pi} Q-Q)(s,a)} +  \epsilon \max_{(s,a) \in \cR \times \cA}|Q(s,a)|.
\end{align*}
\end{proof}

\subsection{State action transition probability in weakly communicating MDP }\label{ap:sa}
Given MDP, define $P_{\scriptscriptstyle \cS \cA}^\pi(s,a)$ and  $P^\pi_{\scriptscriptstyle\cS}(s)$ as
\begin{align*}
    P^{\pi}_{\scriptscriptstyle \cS }(s\rightarrow s')&=
\mathrm{Prob}(s\rightarrow s'\,|\,
a \sim \pi(\cdot\,|\,s), s'\sim P(\cdot\,|\,s,a))
\\
P^{\pi}_{\scriptscriptstyle \cS \cA}((s,a)\rightarrow (s',a'))&=
\mathrm{Prob}((s,a)\rightarrow (s',a')\,|\,
s'\sim P(\cdot\,|\,s,a),a' \sim \pi(\cdot\,|\,s')).
\end{align*}
In the maintext, we focus on $P_{\scriptscriptstyle \cS \cA}^\pi$. Following lemma clarifies correspondence between the recurrent-transient classifciation of $P^{\pi}_{\scriptscriptstyle \cS }$ and $P_{\scriptscriptstyle \cS \cA}^\pi$.
\begin{lemma}~\label{lem:inv}
    For $\pi \in \Pi_+$, if $ \{\cR_i\}^k_{i=1}, \cT $ are irreducible recurrent classes and transient set of     $P^{\pi}_{\scriptscriptstyle \cS}$ respectively, then,  $ \{\cR_i \times \cA\}^k_{i=1}, \cT\times \cA $ are irreducible recurrent classes and transient set of     $P^{\pi}_{\scriptscriptstyle \cS\cA}$  respectively.
\end{lemma}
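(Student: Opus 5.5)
The plan is to relate the two chains through an explicit factorization and then use accessibility to transfer the class structure. Write $\Pi^\pi\in\real^{\cS\times(\cS\times\cA)}$ for $\Pi^\pi(s,(s,a))=\pi(a\,|\,s)$, and $\bar P\in\real^{(\cS\times\cA)\times\cS}$ for $\bar P((s,a),s')=P(s'\,|\,s,a)$. Then
\[
P^\pi_{\scriptscriptstyle\cS}=\Pi^\pi\bar P,\qquad P^\pi_{\scriptscriptstyle\cS\cA}=\bar P\,\Pi^\pi .
\]
Consequently, for $k\ge1$,
\[
(P^\pi_{\scriptscriptstyle\cS\cA})^k((s,a),(s',a'))=\sum_{s''}P(s''\,|\,s,a)\,(P^\pi_{\scriptscriptstyle\cS})^{k-1}(s'',s')\,\pi(a'\,|\,s').
\]
In words, the $(s,a)$-chain is the $s$-chain observed one step after $(s,a)$, with the action resampled from $\pi$. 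Because $\pi\in\Pi_+$, the factor $\pi(a'\,|\,s')$ is strictly positive, so it never changes which entries are zero.

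\textbf{Accessibility.} First I would show that $(s',a')$ is accessible from $(s,a)$ in the $\cS\cA$-chain iff there is some $s''$ with $P(s''\,|\,s,a)>0$ from which $s'$ is accessible (allowing zero steps) in the $\cS$-chain. This follows directly from the display above together with positivity of $\pi$.

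\textbf{Classes.} Next, take $s\in\cR_i$. Any $s''$ with $P(s''\,|\,s,a)>0$ is reached from $s$ in one step with probability at least $\pi(a\,|\,s)P(s''\,|\,s,a)>0$, so $s''\in\cR_i$ because recurrent classes are closed. Irreducibility of $\cR_i$ then implies that every $(s',a')\in\cR_i\times\cA$ is accessible from $(s,a)$. Conversely, closedness of $\cR_i$ shows that nothing outside $\cR_i\times\cA$ is accessible from $(s,a)$. Hence $\cR_i\times\cA$ is a closed communicating class. A closed communicating class of a finite chain is recurrent, so $\cR_i\times\cA$ is an irreducible recurrent class.

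\textbf{Transient set.} For $s\in\cT$, I would show that $(s,a)$ is transient. In the $\cS$-chain started at $s$, there is positive probability of entering some $\cR_j$ and never returning to $s$. Choose a path $s\to s_1\to\dots$ into $\cR_j$. Since $\pi(\cdot\,|\,s)>0$, the first transition $s\to s_1$ of this path can be realized by some action $a_0$ with $P(s_1\,|\,s,a_0)>0$. Now start the $\cS\cA$-chain at $(s,a)$. If $(s,a)$ were recurrent, then every state accessible from it, including $(s,a_0)$ and $(s_1,\cdot)$, would have to lead back to $(s,a)$. Projecting that return path onto states would give a path from $\cR_j$ back to $s$, contradicting closedness of $\cR_j$. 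This requires first showing that $(s,a)$ reaches $(s,a_0)$. If $s$ lies on a cycle this holds; if not, $s$ is never revisited, so $(s,a)$ is trivially transient. In the cycle case I would argue directly: there exists $s''$ with $P(s''\,|\,s,a)>0$, and either the chain exits into a recurrent class, or $s''$ leads back to $s$ and then to $(s,a_0)$.

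The \textbf{main obstacle} is this transient-set step, specifically handling the first forced action $a$ cleanly. A cleaner route is a counting argument: the $\cS\cA$-chain visits $(s,a)$ only at times when the $\cS$-chain visits $s$. For $s\in\cT$, the $\cS$-chain visits $s$ only finitely often almost surely, so $(s,a)$ is visited finitely often, hence it is transient. This requires checking that the $\cS$-marginal of the $\cS\cA$-chain started at $(s,a)$ is, after one step, the $\cS$-chain started from $P(\cdot\,|\,s,a)$, which is exactly the display above. I would adopt this counting argument, and it also shows directly that the classes $\cR_i\times\cA$ exhaust all recurrent states.
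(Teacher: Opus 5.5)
Your proposal is correct and follows essentially the paper's route. Both use the factorization $(P^{\pi}_{\scriptscriptstyle \cS\cA})^{n+1}=P\,(P^{\pi}_{\scriptscriptstyle \cS})^{n}\,\Theta_\pi$ (your $\bar P$ and $\Pi^\pi$), then closedness and irreducibility of $\cR_i$ with positivity of $\pi$ to get closed communicating classes $\cR_i\times\cA$, and then domination of visits to $(s,a)$ by visits to $s$ for the transient part. The paper states that last step in expectation, via finiteness of $\sum_{n}(P^{\pi}_{\scriptscriptstyle \cS\cA})^{n}=I+P(\sum_{n}(P^{\pi}_{\scriptscriptstyle \cS})^{n})\Theta_\pi$, while your almost-sure counting argument is the pathwise version of the same identity.
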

\begin{proof}
    Suppose $s \in \cR_i$. Then, for any $a \in \cA$, if $s' \not\in \cR_i$, $P(s' \,|\,s,a ) = 0$ since $\pi \in \Pi_+$ and finite irreducible recurrent class is closed. Hence, there exist $s'$ such that $s'\in \cR_i$ and $P(s' \,|\,s,a ) \neq 0$. By definition of communication class, there exist $n$ satisfying $(P^{\pi}_{\scriptscriptstyle \cS})^{n}(s',s'') \neq 0 $ for any $s''\in \cR_i$. Then, $(P^{\pi}_{\scriptscriptstyle \cS \cA})^{n+1} = P(P^{\pi}_{\scriptscriptstyle \cS})^{n}\Theta_\pi $ implies that $(P^{\pi}_{\scriptscriptstyle \cS \cA}((s,a)\rightarrow (s'',a')))^{n+1} \neq 0$ for all $a' \in \cA$. Those show that $\cR_i \times \cA$ is closed communicating class of     $P^{\pi}_{\scriptscriptstyle \cS\cA}$. Lastly, suppose $s \in \cT$. Then, for any $ s\in \cT, a \in \cA$, $\sum^\infty_{n=0}(P^{\pi}_{\scriptscriptstyle \cS \cA})^{n}(s,a) = I+P(\sum^\infty_{n=0} (P^{\pi}_{\scriptscriptstyle \cS})^{n})\Theta_\pi(s,a) <\infty $  since $(\sum^\infty_{n=0} (P^{\pi}_{\scriptscriptstyle \cS})^{n})(s',s'') <\infty$ for all $s',s'' \in \cT$ (\citep[Lemma 8.3.20]{berman1994nonnegative}). This shows that $\cT\times \cA$ is transient set.
\end{proof}
In particular, Lemma~\ref{lem:inv} implies that  if $P^\pi_{\cS}$ has a single recurrent class, then  $P^\pi_{\cS \cA}$ has a single recurrent class as well. For a single trajectory, $(s_{i+1},a_{i+1})$ can be considered as a state of transition probability $P^\pi_{\cS \cA}$, i.e., $(s_{i+1},a_{i+1}) \sim P^\pi_{\cS \cA} ( \cdot, \cdot \,|\, s_{i},a_{i})$. Hence, facts and lemmas in section~\ref{sec:sin} continue to hold with respect to $P^{\pi_b}_{\cS \cA}$ for $\pi_b \in \Pi_+$.

\subsection{Proof of Proposition~\ref{prop:savic}}

Let $L_k=\sum^{k-1}_{i=0}(t_i+1)$ and $\mathcal{F}_k=\sigma((s_i,a_i)\,|\, 0\le i\le L_{k+1})$ denote the natural filtration generated by the sampling process in {\rm SAVIC}, and $\prob(\cdot)$ the probability distribution over the trajectories.
Then, $T^k$ is $\mathcal{F}_k$-measurable  whereas $Q^k$, $h^k$, $d^k$ and $m_k$, being functions of $T^{k-1}$, are $\mathcal{F}_{k-1}$-measurable. 

\begin{proof}
First, by Lemma~\ref{lem:hit}, Fact~\ref{fact:cov}, and union bounds, with $1-\delta/2$ probability, $s_{n_{\mathrm{esc}}+n_{\mathrm{rec}}}\in \cR$ and $R=\cR$. 

Suppose that $s_{n_{\mathrm{esc}}+n_{\mathrm{rec}}} \in \cR$ and $R=\cR$ so that we focus on $\cR \times \cA$, and for simplicity, consider $s_{n_{\mathrm{esc}}+n_{\mathrm{rec}}}$ as $s_0$. Let $s_{\tau^k_{j}(s,a)+1}\sim \cP(\cdot \,|\, s,a)$ for $j=1,\ldots,I^k_{s,a}$ be the samples at the $k$-th iteration where $ \tau^k_i(s,a) = \inf \{l > \tau^k_{i-1} : (s^l, a^l)=(s,a)\} $  for $(s,a)\in\cR\times \cA$ and $\tau^k_0 = L_k-1$.  Define  $I^k_{s,a}=\min \{m_k, \max \{j \,|\, \tau^k_j(s,a) \le L_k+t_k\}\}$,
     \begin{align*}
         Y^{k}(s,a) =\mbox{$\frac{1}{m_k}\sum_{j=1}^{I^k_{s,a}}\big(d^k(s_{\tau^k_{j}(s,a)+1})$}-\cP d^k(s,a)\big)\qquad\forall (s,a)\in\cR\times\cA, 
     \end{align*} 
     and $X^k=\sum_{i=0}^kY^i$. We proceed to estimate $\prob(\|X^k\|_\infty\geq\varepsilon)$
by adapting the arguments of Azuma-Hoeffding's inequality with stopping time argument.
     Note that $d^k$ and $m_k$ are $\mathcal{F}_{k-1}$-measurable. 
From Markov's inequality and the tower property of conditional expectations we get that for each $(s,a)\in\cR\times \cA$ and $\lambda>0$
\begin{align}\nonumber
   \mbox{$ \prob( X^k(s,a) \ge \varepsilon)$} 
   & \le e^{-\lambda\varepsilon}\,\expec\big[\exp(\lambda\, \mbox{$X^k(s,a)$})\big]\\
   &= e^{-\lambda\varepsilon}\,\expec\big[\exp(\lambda\,\mbox{$X^{k-1}(s,a)$}) \;\expec[\exp(\lambda \, Y^{k}(s,a)) \,|\, \mathcal{F}_{k-1}]\big].\label{eq:a1}
\end{align}
Conditionally on $\cF_{k-1}$, for fixed $j$, $d^k(s_{\tau^k_{j}(s,a)+1})-\cP d^k(s,a)$ is random variables with zero mean and its absolute value is upper bounded by $ \sp{d^k}$. Then, by Lemma~\ref{lem:sam_bd}, we have 
\begin{align}
    \expec\big[\exp(\lambda\,Y^{k}(s,a)) \,|\, \mathcal{F}_{k-1}\big] 
    \le \exp\big(\mbox{$\frac{1}{2}$}\lambda^2(\sp{d^k})^2/m_k\big). \label{eq:a2}
\end{align}

Using \eqref{eq:a1} and \eqref{eq:a2}, together with definition of $m_k$, 
a simple induction yields $$\expec[\exp(\lambda\,X^k(s,a))]\leq\exp\big(\mbox{$\frac{1}{2}\lambda^2\varepsilon^2\sum^k_{i=0} c^{-1}_i/\eta$}\big).$$ Then, since $\sum^{\infty}_{i=0}c^{-1}_i\leq\frac{1}{2}$ we get
$\prob(X^k(s,a)\geq\varepsilon)\leq \exp\big(\mbox{$-\lambda\,\varepsilon+\frac{1}{4}\lambda^2\varepsilon^2/\eta$}\big)$ and 
taking $\lambda=2\eta/\varepsilon$ we deduce
$$\prob(X^k(s,a)\geq\varepsilon)\leq \exp(-\eta)=\delta/(8|\cR||\cA|(n\!+\!1)).$$
A symmetric argument yields the same bound for $\prob(X^k(s,a)\leq-\varepsilon)$ so that 
$\prob(|X^k(s,a)|\geq\varepsilon)\leq \delta/(4|\cR||\cA|(n\!+\!1))$.
Applying a union bound over all $(s,a)\in\cR\times\cA$ we get $\prob(\|X^k\|_\infty\geq\varepsilon)\leq \delta/(4(n\!+\!1))$, and then a second union bound over $k$ and taking the complementary event gives $\prob(\bigcap_{k=0}^n\{\|X^k\|_\infty\leq\varepsilon\})\geq 1- \delta/4$.

Next, by the second statement of Lemma \ref{lem:sam_bd} and union bound over $k$,\[\prob\left(\bigcap_{0 \le k \le n} \{I^k_{s,a}=m_k \, \forall (s,a) \in \cR \times \cA\} \right)\geq 1- \delta/4.\]
 In this event, since $h^{-1}=0$, by telescoping $\T Q^k=r+\cP h^k=r+\sum_{i=0}^k\cP d^i$ and we get 
\[(T^k-\T Q^k)(s,a)=\sum_{i=0}^k(D^i-\cP d^i)(s,a)=\sum_{i=0}^kY^i(s,a)=X^k(s,a) \quad \forall (s,a)\in \cR\times \cA.\] Then, intersection of $\bigcap_{0 \le k \le n}\{\|X^k\|_\infty\leq\varepsilon\}$ and $\bigcap_{0 \le k \le n} \{I^k_{s,a}=m_k \, \forall(s,a) \in \cR \times \cA\}$ gives
 \[\prob\left(\max_{(s,a)\in \cR \times \cA} |T^k-\T(Q^k)|(s,a) \le  \epsilon \,\, \forall 0 \le k \le n \right) \ge 1-\frac{\delta}{2}\]
The conclusion follows by union bound and conditioning on first sampling process.
\end{proof}

\subsection{Proof of Theorem~\ref{thm:savic}}

We first introduce following lemma. 
\begin{lemma}\label{lem:rec}
    Consider a weakly communicating $MDP(\cS,\cA, P, r)$, and restricted recurrent $MDP(\cR,\cA, P_\cR, r_{\cR \times \cA})$ where transition probability $P_\cR: \cR\times \cA \rightarrow \cM(\cR)$ is defined by $P_\cR(s' \,|\, s,a) = P(s' \,|\, s,a)$ for all $s,s' \in \cR, a \in \cA$ and $ r_{\cR \times \cA}$ is defined by $r_\cR(s,a) = r(s,a)$ for all $s \in \cR, a \in \cA$. Let $\pi^\star_\cR$ and $Q_{\cR}^\star$ be optimal policy and corresponding value function in restricted MDP. Then, there exist $\pi^\star$ and corresponding value function $Q^\star(s,a)$ for $MDP(\cS,\cA, P, r)$ such that $\pi^\star_\cR(a \,|\,s)=\pi^\star(a \,|\,s)$ and $Q^\star_{\cR}(s,a)=Q^\star(s,a)$ for all $s, a \in\cR \times \cA$.
\end{lemma}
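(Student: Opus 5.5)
The plan is to build $\pi^\star$ and $Q^\star$ by keeping the restricted solution on $\cR$ and extending it to $\cT$ through a transient (stochastic-shortest-path type) fixed-point problem. Everything rests on two structural facts, which I will establish first.

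First, $\cR$ is closed under every action. Take any $\pi\in\Pi_+$. By Fact~\ref{fact:rec}, $\cR$ is the recurrent class of $P^{\pi}$, and a finite recurrent class is closed. Since $\pi(a\,|\,s)>0$ for all $a$, closedness forces $P(s'\,|\,s,a)=0$ for every $s\in\cR$, $a\in\cA$, $s'\notin\cR$. Consequently $P_\cR$ is a genuine stochastic kernel, and any trajectory of the full MDP started in $\cR$, under any policy, coincides in law with a trajectory of the restricted MDP.

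Second, the optimal gains agree. The restricted MDP is communicating, because $\cR$ is a single recurrent class under $\Pi_+$. Hence its optimal gain is a constant $g_\cR$, and there is a pair $(g_\cR,Q^\star_\cR)$ solving
\[
Q^\star_\cR + g_\cR = r_{\cR\times\cA} + P_\cR \max_{a'} Q^\star_\cR(\cdot,a').
\]
By Fact~\ref{fact:rec}, $\cT$ is transient under every $\pi\in\Pi$, so every trajectory enters $\cR$ in finite time and never leaves. Therefore $J^\pi(s,a)$ is a convex combination of gains realized inside $\cR$, each at most $g_\cR$, which gives $J^\star\le g_\cR$. Conversely, the policy that equals $\pi^\star_\cR$ on $\cR$ and is arbitrary on $\cT$ attains $g_\cR$ from every start. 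Hence $J^\star\equiv g_\cR$ is constant.

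Next comes the extension to $\cT$. Set $V_\cR(s)=\max_a Q^\star_\cR(s,a)$ on $\cR$ and look for $V_\cT\in\real^{\cT}$ solving
\[
V_\cT(s)=\max_{a}\Big[r(s,a)-g_\cR+\sum_{s'\in\cT}P(s'\,|\,s,a)V_\cT(s')+\sum_{s'\in\cR}P(s'\,|\,s,a)V_\cR(s')\Big],\qquad s\in\cT.
\]
Because $\cT$ is transient under every (including deterministic) policy, every stationary policy is proper for this SSP on $\cT$ with absorption into $\cR$. By standard SSP theory (e.g.\ Bertsekas--Tsitsiklis), the operator on the right is a contraction in a weighted sup-norm and has a unique fixed point $V_\cT$. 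I then define
\[
Q^\star(s,a)=
\begin{cases}
Q^\star_\cR(s,a) & \text{on } \cR\times\cA,\\
r(s,a)-g_\cR+P(\cdot\,|\,s,a)^\top V & \text{on } \cT\times\cA,
\end{cases}
\]
where $V=(V_\cT,V_\cR)$. Finally I let $\pi^\star=\pi^\star_\cR$ on $\cR$ and take $\pi^\star$ greedy for $Q^\star$ on $\cT$. By construction, together with closedness on $\cR$, the pair $(g_\cR,Q^\star)$ satisfies the average-reward Bellman optimality equation $Q^\star+g_\cR=\T Q^\star$ on all of $\cS\times\cA$. A greedy policy for a solution of the optimality equation with constant gain is optimal, so $\pi^\star$ is optimal, and it agrees with $\pi^\star_\cR$ on $\cR$.

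The step I expect to be the main obstacle is matching the paper's normalization of "corresponding value function." Here $Q^{\pi}=(I-P^\pi+P^\pi_\star)^{-1}(I-P^\pi_\star)r$, which is the solution of the Bellman equation pinned down by $P^\pi_\star Q^\pi=0$. I would show that $Q^{\pi^\star}$ restricted to $\cR\times\cA$ equals $Q^{\pi^\star_\cR}_\cR$, using three observations:
\begin{itemize}
\item $P^{\pi^\star}$ is block upper-triangular, with the $\cR\times\cA$ block equal to $P_\cR^{\pi^\star_\cR}$, because $\cR$ is closed.
\item Its Ces\`aro limit $P^{\pi^\star}_\star$ has zero columns on $\cT\times\cA$, and its $\cR$ block coincides with the restricted Ces\`aro limit.
\item Since the matrix being inverted is then block triangular with the same $\cR$ block, the $\cR$ rows of $Q^{\pi^\star}$ depend only on restricted quantities.
\end{itemize}
If $Q^\star_\cR$ is taken to be $Q^{\pi^\star_\cR}_\cR$, this yields equality on $\cR\times\cA$. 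The values on $\cT$ then coincide with those of the construction above, up to the unique solution of the transient linear system. If instead an arbitrary solution of the optimality equation is meant, I will note that the statement only asks for existence of some optimal $Q^\star$, and the Bellman solution built above already serves that purpose.
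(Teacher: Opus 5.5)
Your proposal is correct, and its backbone coincides with the paper's. Both arguments rest on closedness of $\cR$ under every action, on $J^\star\equiv g_\cR$ because $\cT$ is transient under every policy, and on the block-triangular structure of $P^{\pi^\star}$, $P^{\pi^\star}_\star$ and $(I-P^{\pi^\star}+P^{\pi^\star}_\star)^{-1}$, which gives $Q^{\pi^\star}=Q^{\pi^\star_\cR}_\cR$ on $\cR\times\cA$.

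Where you differ is the stochastic-shortest-path step. The paper never constructs anything on $\cT$. It extends $\pi^\star_\cR$ arbitrarily, notes (exactly as in your second paragraph) that this extension attains the constant optimal gain from every start, and then reads off the $\cR$-block of $Q^{\pi^\star}$. For the paper, ``corresponding value function'' means $Q^{\pi^\star}$.

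Your SSP fixed point instead produces a genuine solution of $Q^\star+J^\star=\T Q^\star$ on all of $\cS\times\cA$ that extends a restricted solution. That costs a contraction argument, but it is the stronger conclusion if $Q^\star$ is read as a fixed point of the optimality operator. That is the object appearing in the Anc-QI rate through $\sp{Q^0-Q^\star}$, which is transferred to the full MDP in Theorem~\ref{thm:savic}. With an arbitrary extension on $\cT$, $Q^{\pi^\star}$ generally fails the optimality equation on $\cT\times\cA$.

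Two minor points. First, your appeal to ``a greedy policy for a solution of the optimality equation is optimal'' does not literally apply, since $\pi^\star_\cR$ need not be greedy for $Q^\star_\cR$: a gain-optimal policy is only forced to be conserving on states recurrent under it. This is harmless, because optimality of $\pi^\star$ already follows from your earlier observation that any extension of $\pi^\star_\cR$ attains $g_\cR$ everywhere. Second, for the same reason, your remark that the $\cT$-values of $Q^{\pi^\star}$ coincide with your SSP values can fail when $\pi^\star_\cR$ is not greedy for its own $Q$-function. The lemma does not need that claim, since it only asks for agreement on $\cR\times\cA$.
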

\begin{proof}
    First, we note that $P_\cR$ is indeed transition probability since  $P(s' \,|\, s,a) =0$ for $s' \in \cR^\complement, s\in \cR $ as showed in the proof of Lemma~\ref{lem:inv}. This implies that $P^\pi((s',a'),(s,a))=P_{\cR}^{\pi_\cR}((s',a'),(s,a))$ and  $P^\pi_\star((s',a'),(s,a))=P_{\cR,\star}^{\pi_\cR}((s',a'),(s,a))$ for all $s,s'\in \cR, a,a'\in \cA$  by property of product and sum of block matrix $\begin{bmatrix}
       A & \mathbf{0}
        \\B& C
    \end{bmatrix}$ where $A,C$ are square matrices. So, $J_\cR^{\pi_\cR}(s,a)=J^{\pi}(s,a)$ for any $\pi$ satisfying  $\pi_\cR(a \,|\,s)=\pi(a \,|\,s)$  for all $s, a \in\cR \times \cA$ and this implies $J^\star(s,a)=J^{\pi^\star_\cR}(s,a)=J^{\pi^\star}(s,a)$ for policy $\pi^\star$ satisfying $\pi^\star_\cR(a \,|\,s)=\pi^\star(a \,|\,s)$  for all $s, a \in\cR \times \cA$. Since  for any policy $\pi\in \Pi$ (not $\Pi_+$), $\cT\times \cS$ for $P^\pi$ continues to be transient set through the same argument in Lemma~\ref{lem:rec} with Fact~\ref{fact:rec} and  using the fact that optimal average reward is constant in weakly communicating MDP and, we can show that $J^{\pi^\star}(s,a)$ is constant vector and $\pi^\star$ is optimal policy in oringal MDP. Lastly, by property of inverse and product of block matrix $\begin{bmatrix}
       A & \mathbf{0}
        \\B& C
    \end{bmatrix}$ where $A,C$ are square matrices, we also have $Q^\star_{\cR}(s,a)=Q^\star(s,a)$ for all $s, a \in\cR \times \cA$.  
\end{proof}
We now introduce following key lemma. 
\begin{lemma}\label{lem:sam}
With the coefficients of Proposition~\ref{prop:savic}, let $(Q^n,T^n,\pi_{n})$ be the iterates generated  by  \mbox{SAVIC}$( n_{\mathrm{esc}},n_{\mathrm{rec}}, n, \{t_k\}^n_{k=0}, \varepsilon, \delta)$. Then, with probability at least $1-\delta$, we have
\vspace{-1ex}
\[  \sp{Q^n-\T Q^n}\leq \frac{8\sp{Q^0-Q_\cR^\star}}{n+2}+4\varepsilon.\]
\end{lemma}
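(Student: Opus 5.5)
Our plan is to condition on the high-probability event of Proposition~\ref{prop:savic}, on which $R=\cR$ and $\infn{T^k-\T Q^k}\le\varepsilon$ for all $k=0,\dots,n$. We then run a deterministic inexact-Halpern analysis of Anc-QI on the restricted recurrent MDP $(\cR,\cA,P_\cR,r_{\cR\times\cA})$. By Lemma~\ref{lem:rec} its optimal value $Q^\star_\cR$ coincides with $Q^\star$ on $\cR\times\cA$. Since $P(\cdot\,|\,s,a)$ is supported on $\cR$ for $s\in\cR$, the operator $\T$ restricted to $\cR\times\cA$ is exactly the Bellman operator $\T_\cR$ of this MDP. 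So on the event we may write
\[
Q^k=(1-\beta_k)Q^0+\beta_k(\T Q^{k-1}+e^{k-1}),\qquad \infn{e^{k-1}}\le\varepsilon,
\]
with $\beta_k=k/(k+2)$ and the convention $T^{-1}=r=\T Q^{-1}$ handled by $Q^0=0$.

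The key structural facts are the following. $\T$ is nonexpansive in $\infn{\cdot}$, and $Q^\star$ is a fixed point up to the constant, i.e.\ $\T Q^\star=Q^\star+J^\star\mathbf{1}$ with $J^\star$ constant on $\cR$ in the weakly communicating case. Constant shifts commute with $\T$ and are invisible to $\sp{\cdot}$. We would therefore follow the standard anchored-VI argument of \cite{sabach2017first,lee2025multi} for the exact rate $\sp{Q^k-\T Q^k}\le \frac{4}{k+1}\sp{Q^0-Q^\star}$, now tracking the errors. Concretely, we would proceed in three steps.
\begin{enumerate}
\item Bound the iterates: $\sp{Q^k-Q^\star}\le\sp{Q^0-Q^\star}+O(k\varepsilon)$. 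A sharper weighted version is needed; see below.
\item Bound consecutive differences. Writing $Q^k-Q^{k-1}=(\beta_k-\beta_{k-1})(\T Q^{k-1}-Q^0)+\beta_{k}(\T Q^{k-1}-\T Q^{k-2})+\beta_k e^{k-1}-\beta_{k-1}e^{k-2}$, we multiply by $(k+1)(k+2)$. The anchoring weights then make the recursion for $a_k:=(k+1)(k+2)\infn{Q^k-Q^{k-1}}$ telescope to $a_k\le a_{k-1}+O(k)\sp{\T Q^{k-1}-Q^0}+O(k^2)\varepsilon$. This gives $\infn{Q^k-Q^{k-1}}\lesssim \sp{Q^0-Q^\star}/(k+2)+\varepsilon$ after working modulo constants.
\item Conclude using the identity $Q^n-\T Q^n=(1-\beta_n)(Q^0-\T Q^{n-1})+\beta_n(\T Q^{n-1}-\T Q^n)+\beta_n e^{n-1}$. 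Nonexpansiveness bounds the middle term by $\infn{Q^n-Q^{n-1}}$, the first term is $\frac{2}{n+2}\sp{Q^0-\T Q^{n-1}}$, and the error term contributes at most $2\varepsilon$ in span. Collecting constants should produce $\frac{8\sp{Q^0-Q^\star_\cR}}{n+2}+4\varepsilon$.
\end{enumerate}

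The main obstacle is preventing the per-step errors from accumulating linearly in $k$. A naive bound gives $\sp{Q^k-Q^\star}\le\sp{Q^0-Q^\star}+k\varepsilon$, which would ruin the $O(\varepsilon)$ term. To avoid this, we would exploit the anchoring weights. Unrolling shows that $Q^k-Q^\star$ is a combination in which the error $e^{j}$ is weighted by $\prod_{i>j}\beta_i\cdot\beta_{j+1}\approx\frac{(j+1)(j+2)}{(k+1)(k+2)}$. Summing over $j$ gives a total error weight of order $k$ in $Q^k$, but only $O(1)$ after multiplying by the $(1-\beta_k)=\frac{2}{k+2}$ factor that appears in the residual. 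Hence we would carry the analysis on the weighted quantity $(k+1)(k+2)(Q^k-\T Q^k)$ and show that its error contribution is at most $\sum_{j\le k}O(j)\varepsilon=O(k^2)\varepsilon$, yielding the constant $4\varepsilon$. Span rather than sup norm is used throughout so that the drift $J^\star\mathbf{1}$ from $\T Q^\star=Q^\star+J^\star\mathbf{1}$ cancels. The probability $1-\delta$ is inherited directly from Proposition~\ref{prop:savic}.
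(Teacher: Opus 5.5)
Your outer structure agrees with the paper's: work on the event of Proposition~\ref{prop:savic}, pass to the restricted recurrent MDP via Lemma~\ref{lem:rec}, then run a deterministic anchored analysis, which the paper imports from the proof of Theorem 3.2 of \cite{lee2025near}. The deterministic part, however, has a genuine gap. You treat SAVIC as a generic inexact Halpern iteration, using only $\infn{e^k}\le\varepsilon$ and nonexpansiveness of $\T$, and that information cannot produce an additive $O(\varepsilon)$ term.

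In your Step 2, the fresh error $\beta_k e^{k-1}-\beta_{k-1}e^{k-2}$ enters at every step with size $O(\varepsilon)$. The contraction factor on the previous difference is only $\frac{k-1}{k+1}$. Your own recursion $a_k\le a_{k-1}+\dots+O(k^2)\varepsilon$ therefore sums to $O(k^3)\varepsilon$, i.e.\ $\infn{Q^k-Q^{k-1}}=O(k\varepsilon)$, not $O(\varepsilon)$. This term enters the residual with coefficient one; the correct identity is $Q^n-\T Q^n=\frac{2}{n+2}(Q^0-\T Q^{n-1})+(\T Q^{n-1}-\T Q^n)+\beta_n e^{n-1}$. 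So the damping by $1-\beta_n$ that you invoke in your last paragraph does not reach it, and $I-\T$ does not damp accumulated errors in general.

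In fact the generic claim is false. Take two states, one action, $P$ the swap, $r=0$, $Q^0=Q^\star=0$, and $e^j=(-1)^j\varepsilon v$ with $v=(1,-1)^\top$. Then $Q^k=(-1)^{k-1}\frac{k\varepsilon}{3}v$ and $\sp{Q^n-\T Q^n}=\frac{4n}{3}\varepsilon$, which exceeds the claimed $4\varepsilon$ once $n>3$.

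The missing ingredient is the recursive-sampling structure, which gives an \emph{error-free} span bound on the increments. On the event $I^k_{s,a}=m_k$, every entry of $D^k=T^k-T^{k-1}$ is an empirical average of values of $d^k$. Hence $\sp{T^k-T^{k-1}}=\sp{D^k}\le\sp{d^k}\le\sp{Q^k-Q^{k-1}}$, exactly as in \eqref{Eq:spk}; the errors in the example above violate this. You should therefore write the recursions in terms of $T$ rather than $\T Q$:
\[
Q^k-Q^{k-1}=\tfrac{2}{(k+1)(k+2)}(T^{k-1}-Q^0)+\tfrac{k-1}{k+1}(T^{k-1}-T^{k-2}),\qquad Q^n-T^n=\tfrac{2}{n+2}(Q^0-T^{n-1})-D^n .
\]
Now $\varepsilon$ enters in only two places. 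The first is $\sp{T^{k-1}-Q^0}\le 2\sp{Q^0-Q^\star}+O(k\varepsilon)$, using your Step 1, and this is damped by $\frac{2}{(k+1)(k+2)}$ (respectively $\frac{2}{n+2}$). The second is a single term $\sp{T^n-\T Q^n}\le 2\varepsilon$ at the end. This yields $\sp{Q^k-Q^{k-1}}=O(\sp{Q^0-Q^\star}/k)+O(\varepsilon)$ and hence the stated bound. This is the argument of \cite{lee2025near} that the paper relies on, mirrored in its Lemmas~\ref{Le:uno} and~\ref{Le:dos}.
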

\begin{proof}
As proof in Proposition~\ref{prop:savic} shows, with probability at least $1-\delta$, $R=\cR$ and $I^k_{s,a}=m_k$ for all $(s,a) \in \cR \times \cA $. On the event of Proposition~\ref{prop:savic}, combining these with the proof of the Theorem 3.2 of \citep{lee2025near}, we have upper bound of Bellman error in restricted recurrent $MDP(\cR,\cA, P_\cR, r_{\cR \times \cA})$ where transition probability $P_\cR: \cR\times \cA \rightarrow \cM(\cR)$ is defined by $P_\cR(s' \,|\, s,a) = P(s' \,|\, s,a)$ for all $s,s' \in \cR, a \in \cA$ and $ r_{\cR \times \cA}$ is defined by $r_\cR(s,a) = r(s,a)$ for all $s \in \cR, a \in \cA$.
\end{proof}
Now we are ready to prove Theorem~\ref{thm:savic}
\begin{proof}
With coefficients in Theorem~\ref{thm:savic}, we have $\sp{Q^n-\T(Q^n)}\le \frac{\varepsilon}{2}$ by Lemma~\ref{lem:rec} and ~\ref{lem:sam}, and we get  $\infn{J^\star-J^{\pi^n_{ \varepsilon}}} \le \varepsilon $ by Lemma~\ref{lem:pur}.

Now, to estimate the total number of samples $\sum_{k=0}^n t_k$ 
we recall that $m_k=\max\{\lceil\eta c_k\sp{d^{k}}^2/\varepsilon^{2}\rceil,1\}$ which can be bounded as $t_k\leq 1+e^2( \ln|\cR||\cA|+e)( t_{\mathrm{cov}}+(m_k-1)/d^{\pi_b}_{\mathrm{min}})\log \left(\frac{4(n+1)}{\delta }\right)$. Then 
\begin{align*}
    \sum^{n}_{k=0} t_k &\le  n+1+\log \left(\frac{4(n+1)}{\delta }\right) \left(e^2( \ln|\cR||\cA|+e)(n+1)t_{\mathrm{cov}}  +1/d^{\pi_b}_{\mathrm{min}}\sum^n_{k=0}m_k \right)\\
    &= \widetilde{O}\left(t_{\mathrm{cov}}\sp{Q^\star}/\varepsilon+ 1/d^{\pi_b}_{\mathrm{min}} \sp{Q^\star}^2/\varepsilon^{2}\right)
\end{align*}
since $ \sum^n_{k=0}m_k=O(\mbox{$\eta\ln^3(n+2)\sp{Q_\cR^\star}^2/\varepsilon^{2}+\eta \,n^2\ln^2(n+2)$})$ from the proof of the Theorem 3.2 of \citep{lee2025near}, and the fact that $1/d^{\pi_b}_{\mathrm{min}}, t_{\mathrm{cov}} \ge |\cR||\cA|$.  Lastly, by adding $n_{\mathrm{hit}}+n_{\mathrm{rec}}$, we obtain desired result.

\end{proof}
\subsection{Proof of Theorem~\ref{thm:savic+}}
\begin{proof}
For the proof, we again follow the proof framework of \cite{lee2025near}.

The stopping time of \mbox{\rm SAVIC\textbf{+}} is the random variable 
\begin{equation*}
 I =\inf\{ i \in \mathbb N \,:\, \sp{Q^{n_i}-T^{n_i}} \le 14\,\varepsilon\}
\end{equation*}
with $T^{n_i}$ and $Q^{n_i}$'s the iterates generated in each loop of \mbox{\rm SAVIC\textbf{+}}$(Q^0,\varepsilon,\delta)$..

    We let $i_0 \in \NN$ be the smallest integer satisfying $n_{i_0}\ge\sp{Q^{*}}/\varepsilon$, so that 
either $i_0=0$ and $n_{i_0}=1$ or $n_{i_0-1}=n_{i_0}/2<\sp{Q^{*}}/\varepsilon$, which combined imply $n_{i_0}\le 2(1+\sp{Q^{*}}/\varepsilon)$.
\begin{lemma}\label{lem:savic+}
Consider a communicating MDP. Let $(Q^{N},T^{N},\pi^{N})$ be the output of $\mbox{\sc \mbox{\rm SAVIC\textbf{+}}}(\varepsilon/16,\delta/2)$ with $\varepsilon \le 1 \le \sp{Q^\star}$. Then, with probability at least $1-\delta/2$ we have $I \le i_0$ and
\vspace{-1ex}
\begin{equation*}
\infn{J^\star-J^{\pi_{N}}}\! \leq \sp{Q^N-\T(Q^N)}\leq \varepsilon.
\end{equation*}
\end{lemma}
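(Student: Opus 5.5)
The plan follows the template of \cite{lee2025near} for the doubling trick and replaces their generative-model sampling step with our single-trajectory machinery. In a communicating MDP, Fact~\ref{fact:com} gives $\cS=\cR$. So {\sc recur} is not needed ($n_{\mathrm{esc}}=n_{\mathrm{rec}}=0$), $R=\cS$, the perturbation constant is $C_{\mathrm{pur}}=0$, and $\pi^{n_i}$ is simply the greedy policy. Also, {\sc SAMPLE+} never truncates: it runs until every pair $(s,a)$ has been visited $m_k$ times. By Lemma~\ref{lem:inv} and Fact~\ref{fact:kcov}, this happens almost surely in finite time. Hence the random count $I^k_{s,a}$ of Lemma~\ref{lem:sam_bd} equals $m_k$ deterministically, and the stopping-time argument does not need the extra $\delta/4$ coverage event.

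\textbf{Step 1 (concentration for each doubling round).} Fix a round $i$ and run the proof of Proposition~\ref{prop:savic} with confidence $\delta_i$. The filtration is taken at the (random, a.s.\ finite) end times of the {\sc SAMPLE+} calls. For each $(s,a)$, the samples $s_{\tau_j+1}$ are i.i.d.\ from $P(\cdot\,|\,s,a)$ conditionally on the past, by Fact~\ref{fact:iid} and the strong Markov property. The exponential-moment bound \eqref{eq:a2} therefore still holds, and the Azuma-type induction gives $\infn{T^k-\T Q^k}\le\varepsilon/16$ for all $k\le n_i$ with probability at least $1-\delta_i$.

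Since $\sum_i\delta_i\le(\delta/2)\sum_i 1/(5(i+2)\ln^2(i+2))\le\delta/2$, a union bound over rounds shows that this event $E$ holds simultaneously for all rounds with probability at least $1-\delta/2$. Each round starts from whatever state the trajectory is in, and concentration only requires recurrence of every state, so restarting causes no problem.

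\textbf{Step 2 (bounding the stopping index).} On $E$, apply Lemma~\ref{lem:sam} (the restricted MDP is the whole MDP, with the error parameter $\varepsilon/16$) to obtain
\[
\sp{Q^{n_i}-\T Q^{n_i}}\le \frac{8\sp{Q^\star}}{n_i+2}+\frac{\varepsilon}{4}.
\]
Combine this with $\sp{T^{n_i}-\T Q^{n_i}}\le 2\infn{T^{n_i}-\T Q^{n_i}}\le\varepsilon/8$. At $i=i_0$ we have $n_{i_0}\ge\sp{Q^\star}/\varepsilon$, so the empirical residual satisfies $\sp{Q^{n_{i_0}}-T^{n_{i_0}}}\le 8\varepsilon+\varepsilon/4+\varepsilon/8$. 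In terms of the inner accuracy $\varepsilon/16$ fed to SAVIC, this is below the threshold $14\cdot(\varepsilon/16)$ once the constants are matched as in \cite{lee2025near}. Hence the loop stops by round $i_0$, i.e.\ $I\le i_0$.

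\textbf{Step 3 (converting the stopping rule into a true residual bound).} At the stopping round $N=n_I$ the stopping rule gives $\sp{Q^N-T^N}\le 14\varepsilon/16$. On $E$, the triangle inequality yields
\[
\sp{Q^N-\T Q^N}\le 14\varepsilon/16+2\varepsilon/16=\varepsilon.
\]
Lemma~\ref{lem:pur} with $\epsilon=0$ (the policy is greedy and $\cS=\cR$) then gives $\infn{J^\star-J^{\pi_N}}\le\sp{\T Q^N-Q^N}$, which completes the claim.

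The main obstacle is Step 2's constant bookkeeping. Specifically, one must verify that the bound from Lemma~\ref{lem:sam}, expressed in the inner scale $\varepsilon/16$, actually falls below $14\cdot(\varepsilon/16)$ at $n_{i_0}$. If the constants as stated do not close, I would redefine $i_0$ via $n_{i_0}\ge c\,\sp{Q^\star}/\varepsilon$ for a suitable absolute constant $c$, as \cite{lee2025near} effectively does. This changes only constants in the complexity bound.

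A secondary subtlety is justifying \eqref{eq:a2} when $m_k$ and $d^k$ are measurable with respect to a filtration indexed by random times. This is handled by the strong Markov property at those stopping times.
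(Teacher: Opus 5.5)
Your proof takes the same route as the paper's. The paper only observes that, since every state is recurrent (Fact~\ref{fact:com}), \textsc{SAMPLE+} almost surely returns $m_k$ i.i.d.\ next-state samples for every pair. It then reruns the proof of Corollary~3.5 of \cite{lee2025near}, which is exactly your sequence: per-round concentration, union bound over rounds, stopping-index bound, and conversion of the stopping rule into a residual bound.

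However, Step~2 as written asserts something false. The quantity $8\varepsilon+\varepsilon/4+\varepsilon/8$ is not below $14\varepsilon/16$, and appealing to \cite{lee2025near} cannot change that. With $n_{i_0}$ only of order $\sp{Q^\star}/\varepsilon$, the only available bound on the bias term, $8\sp{Q^\star}/(n_{i_0}+2)$, is about $8\varepsilon$. The fix is to measure $i_0$ in the accuracy actually passed to SAVIC+; the paper defines $i_0$ while writing SAVIC+ with a generic input $\varepsilon$. With $\varepsilon'=\varepsilon/16$, on your event $E$,
\[
\sp{Q^{n_i}-T^{n_i}}\le \frac{8\sp{Q^\star}}{n_i+2}+4\varepsilon'+2\varepsilon'.
\]
So the test $\sp{Q^{n_i}-T^{n_i}}\le 14\varepsilon'$ is guaranteed to pass once $n_i\ge\sp{Q^\star}/\varepsilon'=16\sp{Q^\star}/\varepsilon$; the threshold $14=8+4+2$ is chosen exactly for this. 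Your fallback therefore works with $c=16$. Downstream, only the constant in $n_{i_0}\le 2(1+16\sp{Q^\star}/\varepsilon)$ changes.

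There is a smaller point in Step~3. Lemma~\ref{lem:pur} is stated for $\pi_\epsilon\in\Pi_+$, but with $C_{\mathrm{pur}}=0$ the output $\pi_N$ is the deterministic greedy policy, so you should not cite the lemma as is. Its argument still goes through here, because $\cS=\cR$ and $Q^N$ is defined on all of $\cS\times\cA$. For any policy $\pi$, $J^\pi=P^\pi_\star(\T^\pi Q^N-Q^N)$ with $P^\pi_\star$ stochastic. Taking $\pi=\pi_N$, for which $\T^{\pi_N}Q^N=\T Q^N$, gives $J^{\pi_N}\ge\min(\T Q^N-Q^N)$. Taking $\pi$ optimal gives $J^\star\le\max(\T Q^N-Q^N)$. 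Hence $\infn{J^\star-J^{\pi_N}}\le\sp{Q^N-\T Q^N}$. With these two repairs your proof is complete.
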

\begin{proof}
    First, by Fact~\ref{fact:com}, every state is recurrent in communicating MDPs, and thus $\mbox{\sc SAMPLE+}( m_k, d^k)$ almost surely provide i.i.d. $m_k$ samples for each state and action. Then, the proof of Corollary 3.5 of ~\cite{lee2025near}  can be directly applied for  $\mbox{\sc \mbox{\rm SAVIC\textbf{+}}}(\varepsilon/16,\delta/2)$ and desired result follows. 
\end{proof}
    Due to Lemma~\ref{lem:savic+}, we directly obtain convergence result in Theorem~\ref{thm:savic+}. 
    
 We denote the number of  samples used during  the execution of \mbox{\sc\mbox{SAVIC}}$( n_{\mathrm{esc}},n_{\mathrm{rec}}, n_i, \emptyset, \varepsilon, \delta_i)$ in the $i$-th loop of  \mbox{\rm SAVIC\textbf{+}}  by $M_i$, so that the total sample complexity is $M=\sum^{I}_{i=0}M_i$.

Note that by Lemma~\ref{lem:sam_bd}, with at least $1-\delta_i$ probability,
length of trajectory sampled from $\mbox{\sc SAMPLE+}( m_k, d^k)$ in the $i$-th loop of  \mbox{\rm SAVIC\textbf{+}} is smaller than $ t_k^i=1+e^2( \ln|\cS||\cA|+e)( t_{\mathrm{cov}}+(m_k-1)/d^{\pi_b}_{\mathrm{min}})\log \left(\frac{4(n_i+1)}{\delta_i }\right)$ for $0\le k\le n_i$ and $s,a \in \cS \times \cA$.  Then, based on sample complexity bound analysis in the proof of the Theorem~\ref{thm:savic} and Corollary 3.5 of ~\cite{lee2025near} and union bound,  with at least $1-\delta/2$ probability, 
\begin{align*}
M&\leq
\sum^{i_0}_{i=0}\sum^{n_i}_{k=0} t_k^i.
\\&=\sum^{i_0}_{i=0}(i+1)+\log \left(\frac{4(n_i+1)}{\delta_i }\right) \bigg(\sum^{i_0}_{i=0}e^2( \ln|\cS||\cA|+e)(n+1)t_{\mathrm{cov}}\\&+ 1/d^{\pi_b}_{\mathrm{min}}\sum^{i_0}_{i=0}\left( \mbox{$\eta_i\ln^3(n_i+2)\sp{Q^\star}^2/\varepsilon^{2}+\eta_i\,n_i^2\ln^2(n_i+2)$}\right)\bigg)
\\&=\widetilde{O}\left(t_{\mathrm{cov}}\sp{Q^{*}}/\varepsilon + 1/d^{\pi_b}_{\mathrm{min}}\sp{Q^\star}^2/\varepsilon^2\right)
\end{align*}
where $\eta_i= \ln(8|\cS||\cA|(n_i\!+\!1)c_i/\delta),$ 
is the parameter used in the $i$-th internal cycle of \mbox{\rm SAVIC\textbf{+}}. Lastly, by union bound of sample complexity bound and convergence results, we have Theorem~\ref{thm:savic+}.

\end{proof}


\section{Omitted proofs in Section~\ref{sec:pol}}\label{omit:sec:4}
\subsection{Proof of Proposition~\ref{prop:SAVIPE}}
In this section, we suppose that $\pi \in \Pi_+$ and MDP is weakly communicating. We consider restricted recurrent $MDP(\cR,\cA, P_\cR, r_{\cR \times \cA})$ where transition probability $P_\cR: \cR\times \cA \rightarrow \cM(\cR)$ is defined by $P_\cR(s' \,|\, s,a) = P(s' \,|\, s,a)$ for all $s,s' \in \cR, a \in \cA$ and $ r_{\cR \times \cA}$ is defined by $r_\cR(s,a) = r(s,a)$ for all $s \in \cR, a \in \cA$. Let $\pi_{\cR}$ and $Q^\pi_{\cR}$ be policy and corresponding state-action value function, respectively. Let $d^\pi_\cR(=P^\pi_{\cR \star})$ be the unique stationary distribution of $P^\pi_\cR$.

We first introduce lemma related to target time. 
\begin{lemma}\label{lem:tar} Consider a weakly communicating. Then, we have 
\[
\left\|
\frac{1}{k+1}\sum_{j=0}^{k}\bigl(P_{\cR}^{\pi}\bigr)^{j}(\cdot, (s,a))\;-\;d^{\pi}(\cdot)
\right\|_{1}
\;\le\;
\frac{2 t^{\pi}_{\mathrm{tar}}}{k+1} \qquad \forall s,a \in \cR\times\cA.
\]
\end{lemma}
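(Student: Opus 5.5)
The plan is to express the Cesàro deviation through the deviation (fundamental) matrix of the irreducible chain $P:=P^{\pi}_{\cR}$ on $\cR\times\cA$. The same matrix then gives exact formulas in terms of expected hitting times.

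By Lemma~\ref{lem:inv} and Fact~\ref{fact:rec}, for $\pi\in\Pi_+$ the restricted chain $P^\pi_\cR$ on $\cR\times\cA$ is a single closed irreducible class. Hence $P_\star=\mathbf{1}(d^{\pi})^\top$. Define
\[
Z=(I-P+P_\star)^{-1}-P_\star .
\]
From $PP_\star=P_\star P=P_\star$ one gets $(I-P)Z=I-P_\star$. Multiplying on the left by $P^j$ and summing over $j=0,\dots,k$ telescopes to
\[
\sum_{j=0}^k\bigl(P^j-P_\star\bigr)=(I-P^{k+1})Z .
\]
So for each starting pair $x=(s,a)$, the row of interest satisfies
\[
\Bigl\|\tfrac{1}{k+1}\sum_{j=0}^k P^j(x,\cdot)-d^\pi\Bigr\|_1\le \frac{1}{k+1}\Bigl(\|Z(x,\cdot)\|_1+\sum_{x'}P^{k+1}(x,x')\|Z(x',\cdot)\|_1\Bigr)\le \frac{2\max_{x'}\|Z(x',\cdot)\|_1}{k+1}.
\]
It therefore suffices to show $\max_{x}\|Z(x,\cdot)\|_1\le t^\pi_{\mathrm{tar}}$.

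To bound $\|Z(x,\cdot)\|_1$ I will use the classical hitting-time representation for irreducible finite chains, with $t_y$ the hitting time of $y$ including time $0$ (as in the definition of $t_{s,a}$):
\[
Z(x,y)=d^\pi(y)\bigl(\expec_{d^\pi}[t_y]-\expec_x[t_y]\bigr).
\]
I would verify this by checking that the right-hand side satisfies the defining relations $(I-P)Z=I-P_\star$ and $d^{\pi\top}Z=0$. The first relation follows from the first-step equations $\expec_x[t_y]=1+\sum_{x'}P(x,x')\expec_{x'}[t_y]$ for $x\ne y$, together with Kac's formula $1+\sum_{x'}P(y,x')\expec_{x'}[t_y]=1/d^\pi(y)$ for $x=y$.

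Next I use the random target lemma. The quantity $K:=\sum_y d^\pi(y)\expec_x[t_y]$ does not depend on $x$ (Kemeny's constant). Consequently, since the sum in $t^\pi_{\mathrm{tar}}$ runs over all starting pairs, $t^\pi_{\mathrm{tar}}=|\cR||\cA|\,K$. The rows of $Z$ sum to zero, so
\[
\|Z(x,\cdot)\|_1=2\sum_y d^\pi(y)\bigl(\expec_{d^\pi}[t_y]-\expec_x[t_y]\bigr)^+\le 2\sum_y d^\pi(y)\expec_{d^\pi}[t_y]=2K .
\]
If $|\cR||\cA|\ge 2$, this gives $2K\le |\cR||\cA|K=t^\pi_{\mathrm{tar}}$. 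If $|\cR||\cA|=1$, then $P=P_\star=1$ and the left-hand side of the lemma is zero, so the claim holds trivially.

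I expect the main obstacle to be pinning down the constant 2 rather than 4. The naive bound $|\expec_{d}t_y-\expec_x t_y|\le \expec_d t_y+\expec_x t_y$ only yields a constant of 4. The fix is the row-sum-zero trick combined with the fact that $t^\pi_{\mathrm{tar}}$ sums over all $|\cR||\cA|$ starting pairs. A secondary point is the indexing convention: the statement writes $(P^\pi_\cR)^j(\cdot,(s,a))$, which I interpret as the distribution at time $j$ started from $(s,a)$. The argument above handles any fixed starting row uniformly.
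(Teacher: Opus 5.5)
Your proof is correct, and it takes a different route from the paper.

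\textbf{The paper's route.} The paper's proof is a one-line reduction. It uses the block-triangular structure from the proof of Lemma~\ref{lem:rec} to pass to the irreducible restricted chain $P^\pi_\cR$ on $\cR\times\cA$. It then imports Corollary~3 of \cite{roberts1997shift}, a shift-coupling inequality. That result bounds the distance of Ces\`aro averages from stationarity by expected shift-coupling times. Here the coupling times are hitting times of a common target, for one copy started at $(s,a)$ and one stationary copy.

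\textbf{Your route.} You prove everything with a few lines of linear algebra:
the exact identity $\sum_{j=0}^k(P^j-P_\star)=(I-P^{k+1})Z$;
the hitting-time representation of $Z$, checked against the relations that determine $Z$ uniquely;
and the random target lemma.
This makes the proof self-contained and the constant transparent. There is no coupling construction, and no total-variation versus $\ell_1$ normalization to verify when importing an external bound.

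\textbf{One inefficiency, and a fix.} Bounding the rows of $Z$ and $P^{k+1}Z$ separately gives $4K$. You then recover the constant $2$ only through the factor $|\cR||\cA|$ that appears in the paper's literal, summed-over-starts definition of $t^\pi_{\mathrm{tar}}$, plus a trivial case. You can remove this dependence. Row $x$ of $(I-P^{k+1})Z$ is $d^\pi(y)\bigl(\expec_{\mu}[t_y]-\expec_x[t_y]\bigr)$ with $\mu=P^{k+1}(x,\cdot)$. Its entries sum to $K-K=0$, so its $\ell_1$ norm is twice the sum of its negative parts. That sum is at most $\sum_y d^\pi(y)\expec_x[t_y]=K$, so the norm is at most $2K$. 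This gives $\frac{2K}{k+1}\le\frac{2t^\pi_{\mathrm{tar}}}{k+1}$ directly, with no case split. The bound also survives if $t^\pi_{\mathrm{tar}}$ is read as the per-start quantity $K$.
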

\begin{proof}
Combining the proof of Lemma~\ref{lem:rec}   and Corollary~3 of \cite{roberts1997shift}, we have desired result.
\end{proof}

Consider Anc-QI with Bellman consistency operator :
\[Q^k = \beta_k\T^\pi Q^{k-1}+(1-\beta_k) Q^0.\] 

We now present convergence result of Anc-QI.
\begin{lemma}~\label{lem:sub}
    Anc-QI with Bellman consistency operator, $\beta_k=\frac{k}{k+1}$, and $Q^0=0$ exhibits the rate
    \[\sp{Q^k-Q_{\cR}^\pi} \le \frac{4 t^\pi_{\mathrm{tar}}\sp{Q_{\cR}^\pi}}{k+1}.\]
\end{lemma}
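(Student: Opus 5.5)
The plan is to solve the linear anchored recursion in closed form and then express the error through the Ces\`aro averages of $P_\cR^\pi$, which Lemma~\ref{lem:tar} controls. Throughout I write $P=P^\pi_\cR$ and $Q=Q^\pi_\cR$, and I work on the restricted recurrent MDP $(\cR,\cA,P_\cR,r_{\cR\times\cA})$. By Lemma~\ref{lem:inv} and the block-triangular argument of Lemma~\ref{lem:rec}, this is a genuine MDP whose chain under $\pi\in\Pi_+$ has the single recurrent class $\cR\times\cA$. Consequently $J^\pi_\cR=P_\star r$ is a constant vector $c\mathbf{1}$.

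\textbf{Step 1 (unrolling).} Set $W^k=(k+1)Q^k$. Since $\beta_k=k/(k+1)$ and $Q^0=0$, the recursion becomes
\[
W^k=k\,r+k\,PQ^{k-1}=k\,r+PW^{k-1},\qquad W^0=0 .
\]
Hence
\[
W^k=\sum_{j=0}^{k-1}(k-j)P^j r .
\]

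\textbf{Step 2 (substituting the Bellman equation).} Use $r=c\mathbf{1}+Q-PQ$. The constant part contributes $c\,k(k+1)/2\cdot\mathbf{1}$, which is invisible to the span seminorm. The remaining part telescopes:
\[
\sum_{j=0}^{k-1}(k-j)(P^jQ-P^{j+1}Q)=kQ-\sum_{j=1}^{k}P^jQ .
\]
Dividing by $k+1$ gives
\[
Q^k-Q=-\frac{1}{k+1}\sum_{j=0}^{k}P^jQ+\text{const}\cdot\mathbf{1}.
\]

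\textbf{Step 3 (centering).} From $Q=(I-P+P_\star)^{-1}(I-P_\star)r$ we get $P_\star Q=0$, and every row of $P_\star$ equals $d^\pi$ on the single class. Therefore
\[
\frac{1}{k+1}\sum_{j=0}^k P^jQ=\Big(\frac{1}{k+1}\sum_{j=0}^kP^j-P_\star\Big)Q .
\]
Row $(s,a)$ of the matrix in parentheses is a difference of two probability vectors. Since that row sums to zero, we may replace $Q$ by $Q-m\mathbf{1}$, with $m$ the midrange of $Q$. Each entry is then bounded by $\tfrac12\|\cdot\|_1\sp{Q}$, and Lemma~\ref{lem:tar} bounds it by $t^\pi_{\mathrm{tar}}\sp{Q}/(k+1)$. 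The span is at most twice the sup-norm, so
\[
\sp{Q^k-Q}\le \frac{2t^\pi_{\mathrm{tar}}\sp{Q}}{k+1}\le \frac{4t^\pi_{\mathrm{tar}}\sp{Q}}{k+1}.
\]

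\textbf{Main obstacle.} Steps 1 and 2 are mechanical. The point needing care is Step 3, specifically the bookkeeping in Lemma~\ref{lem:tar}. Its notation $(P^\pi_\cR)^j(\cdot,(s,a))$ must be read as the $j$-step distribution of the chain started from $(s,a)$, i.e.\ a row. This is the reading obtained from Corollary~3 of \cite{roberts1997shift} applied to the restricted chain. One must also confirm that $d^\pi$ there is the stationary distribution on $\cR\times\cA$, which coincides with the rows of $P_\star$. If the lemma instead controls only a different norm or the transposed quantity, I would re-derive the row bound directly. The plan for that is to couple the Ces\`aro average from $(s,a)$ with stationarity through the target-time identity $\sum_{(s',a')}d^\pi(s',a')\,\expec[t_{s',a'}]$, which yields the same $O(t^\pi_{\mathrm{tar}}/(k+1))$ rate. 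The factor-of-two slack in the stated constant absorbs any loss in this step.
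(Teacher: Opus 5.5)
Your proposal is correct and follows the paper's proof essentially line by line. The paper likewise writes $Q^k=\frac{1}{k+1}\sum_{i=0}^k(\T^\pi)^iQ^0$ and substitutes $r_\cR=J^\pi+(I-P^\pi_\cR)Q^\pi_\cR$. Using $P^\pi_{\cR\star}Q^\pi_\cR=0$, it obtains $Q^k-Q^\pi_\cR-\frac{k}{2}J^\pi=\bigl(P^\pi_{\cR\star}-\frac{1}{k+1}\sum_{i=0}^k(P^\pi_\cR)^i\bigr)Q^\pi_\cR$, and it concludes with Lemma~\ref{lem:tar} and $\sp{\cdot}\le 2\infn{\cdot}$. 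Your explicit zero-row-sum centering of $Q^\pi_\cR$ is what puts $\sp{Q^\pi_\cR}$ rather than $\infn{Q^\pi_\cR}$ in that last step, which the paper leaves implicit, and it even gives the sharper constant $2$.
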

\begin{proof}
    By definition of Anc-QI, $Q^k=\frac{1}{k+1} \sum^k_{i=0}(\T^\pi)^i Q^0$, and
    \begin{align*}
        (\T^\pi)^i Q^0 &= \sum^{i-1}_{j=0}(P_{\cR}^\pi)^j r_{\cR} 
        \\&= \sum^{i-1}_{j=0}(P_{\cR}^\pi)^j(J^\pi+(I-P_{\cR}^\pi)Q_{\cR}^\pi)
        \\&=iJ^\pi +Q_{\cR}^\pi-(P_{\cR}^\pi)^iQ_{\cR}^\pi,
    \end{align*}
where second and third equalities come from Bellman equations. Utilizing the fact that $P^\pi_{{\cR},\star} Q_{\cR}^\pi=0$ from Section A.5 of \citep{Puterman2014} and the proof of Lemma~\ref{lem:rec}, we have
        \[Q^k-Q^\pi_{\cR}-\frac{k}{2} J^\pi= \left( P^\pi_{{\cR}\star}- \frac{\sum^k_{i=0}(P_{\cR}^\pi)^i}{k+1}\right)Q_{\cR}^\pi.\]
         By taking $\sp{\cdot}$ and using properties of span seminorm $\sp{v+c\mathbf{1}}=\sp{v}$ for $c \in \real$ and $\sp{v} \le 2\infn{v}$m Lemma~\ref
        {lem:tar}, the fact that $J^\pi$ is constant vector, we have desired result.
\end{proof}

We introduce the following lemma, which is analogous to Proposition~\ref{prop:savic}. Its proof follows directly from that of Proposition~\ref{prop:savic} after appropriately adapting the coefficients (note that unlike SAVIC, $m_k= \max\{\lceil \eta\,c_k\sp{d^k}^2(n/\varepsilon)^{2}\rceil,1\}$ in SAVIPE).

\begin{lemma}\label{lem:err}
Consider a weakly communicating MDP. Let $n_{\mathrm{esc}} \ge\lceil e t_{\mathrm{hit}}\rceil \lceil\ln (\frac{4}{\delta})\rceil, n_{\mathrm{rec}}\ge$
$ \lceil  e t_{\mathrm{cov}}\rceil \lceil \ln (\frac{4}{\delta})\rceil$, $n=\left\lceil\!\frac{8 t^{\pi_k}_{\mathrm{tar}}\sp{Q^\pi}}{\varepsilon}\right\rceil$,
and let $T^k, Q^k$ be the iterates generated by 
{\sc \mbox{SAVIPE}}$( \pi, n,  \varepsilon/2,\delta, \eta, R).$
Then, with probability at least $1-\delta$, $R=\cR$, $I^k_{s,a}=m_k$, and $
\infn{T^k-\T^\pi(Q^k) } \le  \varepsilon/(2n) $
for all $k=0,\ldots,n-1$ and $(s,a) \in \cR \times \cA$. 
\end{lemma}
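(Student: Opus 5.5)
We follow the proof of Proposition~\ref{prop:savic} step by step, replacing the optimality operator $\T$ by the consistency operator $\T^\pi$ and the target accuracy $\varepsilon$ by $\varepsilon/(2n)$.

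\textbf{Step 1: the recurrent set.} Apply Lemma~\ref{lem:hit} and Fact~\ref{fact:cov} to $P^{\pi_b}_{\cS\cA}$; this is legitimate by Lemma~\ref{lem:inv}. A union bound then shows that with probability at least $1-\delta/2$ we have $s_{n_{\mathrm{esc}}+n_{\mathrm{rec}}}\in\cR$ and $R=\cR$. On this event we relabel $s_{n_{\mathrm{esc}}+n_{\mathrm{rec}}}$ as $s_0$ and work only on $\cR\times\cA$. The chain now stays in the single recurrent class $\cR\times\cA$, since $P(s'\,|\,s,a)=0$ for $s\in\cR$, $s'\notin\cR$.

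\textbf{Step 2: set up the martingale.} Use the same filtration as before: $L_k=\sum_{i<k}(t_i+1)$ and $\cF_k$ generated by the trajectory up to $L_{k+1}$. Then $h^k(s)=Q^k(s,\pi(\cdot\,|\,s))$, $d^k$ and $m_k$ are $\cF_{k-1}$-measurable, because $Q^k$ depends only on $T^{k-1}$. Define the stopping times $\tau^k_j(s,a)$, the truncated counts $I^k_{s,a}$, and
\[
Y^k(s,a)=\frac{1}{m_k}\sum_{j=1}^{I^k_{s,a}}\bigl(d^k(s_{\tau^k_j(s,a)+1})-Pd^k(s,a)\bigr),\qquad X^k=\sum_{i\le k}Y^i .
\]
The first part of Lemma~\ref{lem:sam_bd}, applied conditionally on $\cF_{k-1}$, gives
\[
\expec[e^{\lambda Y^k(s,a)}\,|\,\cF_{k-1}]\le \exp\bigl(\lambda^2\sp{d^k}^2/(2m_k)\bigr).
\]
The constant here is loose because $|d^k(s')-Pd^k(s,a)|\le\sp{d^k}$.

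\textbf{Step 3: Chernoff bound with the new $m_k$.} In SAVIPE, $m_k\ge \eta\,n^2c_k\sp{d^k}^2/\varepsilon^2$ with $\varepsilon$ replaced by $\varepsilon/2$. Induction over $k$ gives
\[
\expec[e^{\lambda X^k(s,a)}]\le\exp\Bigl(\tfrac12\lambda^2(\varepsilon/(2n))^2\textstyle\sum_i c_i^{-1}/\eta\Bigr).
\]
Using $\sum_i c_i^{-1}\le 1/2$ and choosing $\lambda=2\eta\cdot 2n/\varepsilon$, we get $\prob(|X^k(s,a)|\ge\varepsilon/(2n))\le 2e^{-\eta}$. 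Then take a union bound over $(s,a)\in\cR\times\cA$ and over $k<n$.

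The logarithmic factor needs care. In SCPMA, $\eta=\ln(8K|R||\cA|(n+1)/\delta)$ is computed with the outer $\delta$, while SAVIPE is called with confidence $\delta/(2K)$. I will check that $e^{-\eta}$ is still small enough to absorb both union bounds and give total failure probability at most $\delta/4$. If the constants do not close, I would pass $\eta$ appropriately, for example as $\ln(8|R||\cA|n/\delta)$ for the local $\delta$. This bookkeeping is the step I expect to be most fragile.

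\textbf{Step 4: all counts are full.} The second part of Lemma~\ref{lem:sam_bd}, with $t_k$ as in SAVIPE (log factor $\lceil\ln(4n/\delta)\rceil$), plus a union bound over $k$, gives $I^k_{s,a}=m_k$ for all $k$ and $(s,a)$ with probability at least $1-\delta/4$.

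\textbf{Step 5: conclude.} On the intersection of the events, $h^{-1}=0$, so telescoping gives
\[
\T^\pi Q^k=r+Ph^k=r+\sum_{i\le k}Pd^i .
\]
Since $T^k=r+\sum_{i\le k}D^i$ and $D^i-Pd^i=Y^i$ on this event, we have $T^k-\T^\pi Q^k=X^k$ on $\cR\times\cA$. Hence $\infn{T^k-\T^\pi Q^k}\le\varepsilon/(2n)$ for all $k$. A final union bound with Step 1, conditioning on the first sampling phase, gives total probability at least $1-\delta$.
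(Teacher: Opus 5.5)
Your proposal is correct and takes the same route as the paper, whose proof simply says the argument of Proposition~\ref{prop:savic} carries over once $\T$ is replaced by $\T^\pi$ and the extra $n^2$ in $m_k$ is used to shrink the target accuracy to $\varepsilon/(2n)$. The $\eta$ bookkeeping issue you flag is real, because the lemma never specifies $\eta$; your choice $\eta=\ln(8|R||\cA|n/\delta)$ for the local $\delta$ closes the failure budget exactly as $\delta/4+\delta/4+\delta/2$ (concentration, full counts, and $R=\cR$).
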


Now, consider inexact Anc-QI with Bellman consistency operator:  
\[Q^k = \frac{k}{k+1}(\T^\pi Q^{k-1}+\epsilon_k) + \frac{1}{k+1}Q^0\]
where $\epsilon_k \in \real^{\cR\times \cA}$ represents approximation error of Bellman operator. Then, by direct calculation, 
\[Q^k=\frac{1}{k+1}\sum^k_{i=0}(\T^\pi)^i Q^0+\frac{k+1-i}{k+1}\sum^k_{i=1}(P^\pi)^{i-1}\epsilon_{k-i+1}.\]
Under the condition of Lemma~\ref{lem:err},$\infn{T^k-\T^\pi(Q^{k})} =\infn{\epsilon_k}\le \varepsilon/(2n)$ for all $k$. This implies 
\begin{equation}\label{Eq:q}
\sp{Q^k-\frac{1}{k+1}\sum^k_{i=0}(\T^\pi)^i Q^0}\le  \frac{\varepsilon}{2}.
\end{equation}

Now we are ready to prove Proposition~\ref{prop:SAVIPE}. 

\begin{proof}
With coefficients in Proposition~\ref{prop:SAVIPE}, by \eqref{Eq:q}, Lemma~\ref{lem:err} and ~\ref{lem:sub}, and union bound over $0 \le k \le K-1$ ,  with at least $1-\delta$ probability,
\begin{align*}
    \sp{\hat{Q}^{\pi_k}-Q_\cR^\pi} &\le \sp{\hat{Q}^{\pi_k}-\frac{1}{k+1}\sum^k_{i=0}(\T^\pi)^i Q^0}+ \sp{Q_\cR^\pi-\frac{1}{k+1}\sum^k_{i=0}(\T^\pi)^i Q^0}
    \\ &\le \frac{\varepsilon}{2}+\frac{\varepsilon}{2}
    \\ &\le \varepsilon
\end{align*}
for all $0 \le k \le K-1$.
\end{proof}

Additionally, following result will be needed for computing sample complexity in next subsection.
 
 We follow the proof framework of \citep{lee2025near}. We keep the notation for all the sequences
generated by SAVIC and
we assume throughout that
\begin{equation}\label{Eq:Ek}
\infn{T^k-\T^\pi(Q^{k})} =\infn{\epsilon_k}\le \varepsilon\qquad \text{for all} \,\, k=0,\ldots,n-1,
\end{equation}
which, in view of Lemma~\ref{lem:err}, holds with probability at least $(1-\delta)$.
Also, we have 
\begin{equation}\label{Eq:spk}\sp{T^k-T^{k-1}}=\sp{D^k} \le \sp{d^k}=\sp{h^k-h^{k-1}}\le \sp{Q^{k}-Q^{k-1}}.
\end{equation}

We first establish two preliminary technical lemmas. 
\begin{lemma}\label{Le:uno}Assuming \eqref{Eq:Ek}, we have  $$\sp{Q^k-Q^\pi_\cR}\leq \sp{Q^0-Q^\pi_\cR}+ k \varepsilon \quad \text{for all} \,\, k=0,\ldots,n.$$ 
\end{lemma}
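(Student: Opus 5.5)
The argument is an induction on $k$, using the anchored recursion of SAVIPE and the nonexpansiveness of $\T^\pi$ in the span seminorm.

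The recursion produces $Q^{k}=(1-\beta_k)Q^0+\beta_k T^{k-1}$ with $\beta_k=k/(k+1)$. Under \eqref{Eq:Ek}, write $T^{k-1}=\T^\pi(Q^{k-1})+\epsilon_{k-1}$ with $\infn{\epsilon_{k-1}}\le\varepsilon$. The fixed-point property of $Q^\pi_\cR$ is the Bellman equation $r_\cR+P^\pi_\cR Q^\pi_\cR=J^\pi+Q^\pi_\cR$ on the restricted recurrent MDP; this is valid by the block structure used in the proof of Lemma~\ref{lem:rec}. Since $J^\pi$ is a constant vector on $\cR\times\cA$, it gives $\T^\pi Q^\pi_\cR = Q^\pi_\cR + c\mathbf{1}$ for some constant $c$. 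Constant shifts are invisible to $\sp{\cdot}$, so we may compare against $Q^\pi_\cR$ freely.

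The base case $k=0$ is trivial. For the inductive step, write
\[
Q^k-Q^\pi_\cR-\beta_k c\mathbf{1}=(1-\beta_k)(Q^0-Q^\pi_\cR)+\beta_k\bigl(\T^\pi Q^{k-1}-\T^\pi Q^\pi_\cR\bigr)+\beta_k\epsilon_{k-1}.
\]
Apply the triangle inequality for $\sp{\cdot}$ to this identity. Then use span nonexpansiveness, $\sp{\T^\pi Q-\T^\pi Q'}=\sp{P^\pi_\cR(Q-Q')}\le\sp{Q-Q'}$, which holds because $P^\pi_\cR$ is a stochastic matrix. Together with $\sp{\epsilon_{k-1}}\le 2\varepsilon$, this yields
\[
\sp{Q^k-Q^\pi_\cR}\le(1-\beta_k)\sp{Q^0-Q^\pi_\cR}+\beta_k\bigl(\sp{Q^{k-1}-Q^\pi_\cR}+2\varepsilon\bigr).
\]
Inserting the inductive hypothesis $\sp{Q^{k-1}-Q^\pi_\cR}\le\sp{Q^0-Q^\pi_\cR}+(k-1)\varepsilon$ gives
\[
\sp{Q^k-Q^\pi_\cR}\le\sp{Q^0-Q^\pi_\cR}+\tfrac{k}{k+1}(k+1)\varepsilon=\sp{Q^0-Q^\pi_\cR}+k\varepsilon,
\]
which is exactly the claim.

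The main obstacle is keeping the error at $\varepsilon$ rather than $2\varepsilon$ per step. The computation above closes only because the factor $\beta_k(k-1+2)=k$ works out; if the index bookkeeping were off by one, it would fail. If that happens, the fallback is to use $\sp{\epsilon}\le 2\infn{\epsilon}$ only for $\epsilon_{k-1}$ while bounding the remaining terms pointwise with $\infn{\cdot}$ on shifted differences.

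A secondary point is the indexing of $T^{k-1}$ versus $Q^{k-1}$ in SAVIPE. There $Q^{k+1}$ is built from $T^k$, and $T^k\approx\T^\pi(Q^k)$ is exactly what \eqref{Eq:Ek} asserts, so this is consistent.
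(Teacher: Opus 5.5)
Your proof is correct and essentially the paper's own argument: you split $Q^k-Q^\pi_\cR$ along the anchored convex combination, bound the $T^{k-1}$ term by $\sp{Q^{k-1}-Q^\pi_\cR}+2\varepsilon$ using span nonexpansiveness of $\T^\pi$ and $\sp{\cdot}\le 2\infn{\cdot}$, and close the induction with $\frac{k}{k+1}(k+1)\varepsilon=k\varepsilon$. The index bookkeeping works out exactly as you computed, so the fallback you describe is unnecessary.
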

\begin{proof} From the iteration $Q^{k}=(1-\beta_k)Q^0+\beta_k\,T^{k-1}$ with $\beta_k=\frac{k}{k+1}$ we get
\begin{align}\label{eq:jjj}
    \sp{Q^{k}-Q^\pi_\cR}&\le \mbox{$\frac{1}{k+1}\sp{Q^0-Q^\pi_\cR}+\frac{k}{k+1}\sp{T^{k-1}-Q^\pi_\cR}$}.
\end{align}
Using the invariance of $\sp{\cdot}$ by addition of constants and the nonexpansivity of $\T^\pi$ for this seminorm, a triangle inequality together with $\sp{\cdot}\leq 2\|\cdot\|_\infty$ and the bound \eqref{Eq:Ek} imply
\begin{equation*}\label{Eq:cit}
    \sp{T^{k-1}-Q^\pi_\cR}=\sp{T^{k-1}-\T^\pi(Q^\pi_\cR)}\le 2\varepsilon+\sp{Q^{k-1}-Q^\pi_\cR}
\end{equation*}
which plugged back into \eqref{eq:jjj} yields
$$
\sp{Q^{k}-Q^\pi_\cR}\le \mbox{$\frac{1}{k+1}\sp{Q^0-Q^\pi_\cR}+\frac{2}{k+1}k\varepsilon+\frac{k}{k+1}\sp{Q^{k-1}-Q^\pi_\cR}$}.$$
By induction, the colusion follows.
\end{proof}

\vspace{2ex}

\begin{lemma}\label{Le:dos} Assume \eqref{Eq:Ek} and let $\rho_k=2\sp{Q^0\!-Q^\pi_\cR}+k\varepsilon$. Then 
$$\mbox{$\sp{Q^{k}\!-Q^{k-1}} \le \frac{1}{k}\sum_{i=1}^{k}\frac{\rho_{i+1}}{i+1}\quad \text{for all} \,\, k=1,\ldots,n.$}$$ 
\end{lemma}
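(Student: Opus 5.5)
The plan is a one-step recursion on $x_k := \sp{Q^k-Q^{k-1}}$, which we then unroll.

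\textbf{Step 1: algebraic identity.} Since $\beta_k=\frac{k}{k+1}$, we have $Q^k=\frac{1}{k+1}Q^0+\frac{k}{k+1}T^{k-1}$. Subtracting the same expression at $k-1$ and using $\frac{k}{k+1}-\frac{k-1}{k}=\frac{1}{k(k+1)}$ gives, for $k\ge 2$,
\[
Q^k-Q^{k-1}=\frac{k}{k+1}\bigl(T^{k-1}-T^{k-2}\bigr)+\frac{1}{k(k+1)}\bigl(T^{k-2}-Q^0\bigr).
\]

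\textbf{Step 2: bound the two terms.} For the first term, \eqref{Eq:spk} gives $\sp{T^{k-1}-T^{k-2}}\le \sp{Q^{k-1}-Q^{k-2}}=x_{k-1}$. For the second term, the triangle inequality gives $\sp{T^{k-2}-Q^0}\le \sp{T^{k-2}-Q^\pi_\cR}+\sp{Q^\pi_\cR-Q^0}$. We handle the first piece exactly as in the proof of Lemma~\ref{Le:uno}: since $Q^\pi_\cR=\T^\pi Q^\pi_\cR$ up to the constant $J^\pi$, and $\T^\pi$ is span-nonexpansive, \eqref{Eq:Ek} yields $\sp{T^{k-2}-Q^\pi_\cR}\le 2\varepsilon+\sp{Q^{k-2}-Q^\pi_\cR}$. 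Applying Lemma~\ref{Le:uno} to $\sp{Q^{k-2}-Q^\pi_\cR}$ then gives
\[
\sp{T^{k-2}-Q^0}\le 2\sp{Q^0-Q^\pi_\cR}+k\varepsilon=\rho_k\le\rho_{k+1}.
\]
Combining the two bounds,
\[
(k+1)\,x_k\le k\,x_{k-1}+\frac{\rho_k}{k}.
\]

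\textbf{Step 3: base case and unrolling.} For $k=1$, we have $Q^0=\mathbf{0}$ and $Q^1=\tfrac12 T^0$, so $x_1=\tfrac12\sp{T^0-Q^0}$. Bounding $\sp{T^0-Q^0}$ as in Step~2 gives $\sp{T^0-Q^0}\le 2\varepsilon+2\sp{Q^0-Q^\pi_\cR}\le\rho_2$, hence $x_1\le\rho_2/2$, which is exactly the claim for $k=1$. For general $k$, the quantity $(k+1)x_k$ telescopes through the recursion to
\[
(k+1)x_k\le 2x_1+\sum_{i=2}^k\frac{\rho_i}{i}\le \rho_2+\sum_{i=2}^k\frac{\rho_{i}}{i}.
\]
We divide by $k+1$ and compare with $\frac1k\sum_{i=1}^k\frac{\rho_{i+1}}{i+1}$.

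\textbf{Main obstacle: the index bookkeeping in this comparison.} Monotonicity of $\rho_i$ gives $\rho_i/i\le 2\rho_{i+1}/(i+1)$, so the leading term $\rho_2$ and each subsequent term fit into the target sum only up to a factor. I would first try to absorb these factors using the slack between $\frac1{k+1}$ and $\frac1k$. If that slack is insufficient, I would sharpen Step~2 instead: keep the exact iterate count, $\sp{Q^{k-2}-Q^\pi_\cR}\le\sp{Q^0-Q^\pi_\cR}+(k-2)\varepsilon$, so that the recursion carries $\rho_{k}$ with matching weights. If necessary, I would rescale the recursion by the weights $k(k+1)$ rather than $k+1$ so that it telescopes directly into $\frac1k\sum_{i\le k}\rho_{i+1}/(i+1)$.
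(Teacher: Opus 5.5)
There is a genuine gap. Your Step 1 identity and your individual Step 2 bounds are correct. The problem is that the recursion they produce is strictly too weak to yield the stated inequality, so this is not an index-bookkeeping issue.

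Write $x_k=\sp{Q^k-Q^{k-1}}$ as in your proposal, $c=2\sp{Q^0-Q^\pi_\cR}$ (so $\rho_i=c+i\varepsilon$), and $H_k=\sum_{i=1}^k 1/i$. Your unrolled bound $(k+1)x_k\le\rho_2+\sum_{i=2}^k\rho_i/i$ then reads $x_k\le \frac{cH_k}{k+1}+\varepsilon$. The claim is $x_k\le \frac{c(H_{k+1}-1)}{k}+\varepsilon$. Your bound exceeds the claim by $\frac{c(k-H_k)}{k(k+1)}>0$ for every $k\ge 2$; at $k=2$ you get $c/2+\varepsilon$ against the required $5c/12+\varepsilon$.

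Nothing derived from your recursion alone can close this. The recursion $(k+1)x_k\le kx_{k-1}+\rho_k/k$ with $x_1\le\rho_2/2$ can hold with equality at every step, and that extremal sequence violates the claim. Your fallbacks fail for concrete reasons:
\begin{itemize}
\item The $\varepsilon$-parts of the two bounds already coincide, so sharper iterate counting is irrelevant. You also already used the exact count $(k-2)\varepsilon$ in Step 2.
\item Multiplying by $k(k+1)$ gives $k(k+1)x_k\le k^2x_{k-1}+\rho_k$, which does not telescope.
\end{itemize}

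The missing idea is in Step 1: group the anchor with the newer iterate,
\[
Q^k-Q^{k-1}=\frac{k-1}{k}\bigl(T^{k-1}-T^{k-2}\bigr)+\frac{1}{k(k+1)}\bigl(T^{k-1}-Q^0\bigr).
\]
Bound the first term by $\frac{k-1}{k}x_{k-1}$ using \eqref{Eq:spk}. For the second term, your Step 2 argument with Lemma~\ref{Le:uno} applied at index $k-1$ gives
\[
\sp{T^{k-1}-Q^0}\le 2\varepsilon+\sp{Q^0-Q^\pi_\cR}+(k-1)\varepsilon+\sp{Q^\pi_\cR-Q^0}=\rho_{k+1}.
\]
Hence $kx_k\le(k-1)x_{k-1}+\rho_{k+1}/(k+1)$. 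This telescopes with weight $k$ from $x_1\le\rho_2/2$; at $k=1$ the increment term has coefficient zero, so the base case comes straight from the identity. The result is exactly $kx_k\le\sum_{i=1}^k\rho_{i+1}/(i+1)$.

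This is the paper's proof. Its forcing term $\rho_{k+1}$ is slightly larger than your $\rho_k$, but the smaller contraction factor $\frac{k-1}{k}$, in place of your $\frac{k}{k+1}$, more than pays for it.
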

\begin{proof} From 
$Q^{k}=\frac{1}{k+1}Q^0+\frac{k}{k+1}T^{k-1}$ and
$Q^{k-1}=\frac{1}{k}Q^0+\frac{k-1}{k}T^{k-2}$
we derive
\begin{equation*} \label{eq:aux}
    Q^{k}-Q^{k-1}
    =\mbox{$ \frac{1}{k(k+1)}(T^{k-1}-Q^0) + \frac{k-1}{k}(T^{k-1}-T^{k-2})$}.
\end{equation*}
Now, from previous proof of Lemma~\ref{Le:uno}, we have $\sp{T^{k-1}-Q^0}\leq \sp{T^{k-1}-Q^\pi_\cR} +\sp{Q^\pi_\cR-Q^0}\leq\rho_{k+1}$,  
and
\begin{align*}
    \sp{Q^{k}-Q^{k-1}} 
    &\le \mbox{$\frac{1}{k(k+1)}\rho_{k+1} +  \frac{k-1}{k}\sp{Q^{k-1}- Q^{k-2}}$}.
\end{align*}
Conclusion follows from induction.
\end{proof}

By Lemma~\ref{Le:dos},  we have
\begin{align*}
    \mbox{$\sp{d^k}$}&\leq\mbox{$\frac{1}{k}\sum_{i=1}^{k}\frac{\rho_{i+1}}{i+1}$}\\
    &\leq \mbox{$\frac{2\ln(k+1)}{k}\sp{Q^0-Q^\pi_\cR}+\varepsilon$}.
\end{align*}

\subsection{Proof of Theorem~\ref{sam:SCPMA}}


Following the framework of \citep{lee2025multi}, consider \emph{inexact policy mirror ascent}
\begin{align*}
\pi_{k+1}(\cdot \,|\, s)
=
\argmax_{p \in \cM_\alpha(\mathcal{A})}
\left\{
b_k \sum_{a\in \cA} \hat{Q}^{\pi_k}(s,a) p(a) 
- D(p(\cdot),\pi_k(\cdot\,|\, s))
\right\},
\qquad \forall s \in \mathcal{S},
\end{align*}
where $\hat{Q}^{\pi_k}$ is an inexact evaluation of $Q^{\pi_k}$. We analyze the convergence of inexact policy mirror ascent under following assumption.
\begin{assump}\label{assum:inexact}
The inexact evaluations $\hat{Q}^{\pi_k}$ satisfy
\begin{align*}
\sp{\hat{Q}^{\pi_k} - Q^{\pi_k}}
\le \varepsilon,
\qquad \forall  k \in \NN.
\end{align*}
\end{assump}
For $\rho \in \cM(\cS)$, define the weighted divergence
\[
D_\rho(\pi,\pi') \;=\; \sum_{s\in\cS}\rho(s)\,D(\pi(\cdot\,|\,s),\pi'(\cdot\,|\,s)).
\] 
Then, following fact holds. We briefly note that although original assumption of Fact~\ref{fact:inexact} requires $\|\cdot\|_\infty$ error bound,  $c\mathbf{1}+\hat{Q}^\pi$ and  $\hat{Q}^\pi$ generate same $\pi_k$ in policy mirror ascent framework. 
\begin{fact}[Appendix D.3, \citep{lee2025multi}]\label{fact:inexact}
Consider a weakly communicating MDP. Under Assumption~\ref{assum:inexact}, for $\pi_0 \in \Pi_\alpha$, the inexact $\alpha$-clipped policy mirror ascent with constant step size $b>0$ generates a sequence of policies $\{\pi_{k}\}^\infty_{k=1}$ satisfying
\[\infn{J^{\pi_\alpha}-J^{\pi_k}} 
\;\le\; \frac{1}{k+1}\!\left( \frac{D_{d^{\pi_\alpha}}({\pi_\alpha}, \pi_0)}{b}
+ C_
\alpha \infn{J^{\pi_\alpha}-J^{\pi_0}} \right)+(2C_\alpha+k+2) \varepsilon.\]
and with  adaptive stepsize $b_{k+1}(C_\alpha - 1) \ge b_k C_\alpha>0$
generates a sequence of policies $\{\pi_{k}\}^\infty_{k=1}$ satisfying
\[\infn{J^{\pi_\alpha}-J^{\pi_k}}
\le
\left(1 - \frac{1}{C_\alpha}\right)^k
\left(
\infn{J^{\pi_\alpha}-J^{\pi_0}}
+
\frac{1}{b_0(C_\alpha - 1)} D_{d^{\pi_\alpha}}({\pi_\alpha}, \pi_0)
\right)
+
4C_\alpha \varepsilon.\]
\end{fact}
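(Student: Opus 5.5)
The plan is to reproduce the standard mirror-ascent analysis of \citep{lee2025multi} on the recurrent class $\cR$, tracking the evaluation error through the span seminorm. Two structural facts make this possible. First, by Fact~\ref{fact:rec}, every $\pi\in\Pi_\alpha\subset\Pi_+$ induces a single recurrent class $\cR$. Hence $J^\pi$ is a constant vector on all of $\cS$, and $\infn{J^{\pi_\alpha}-J^{\pi_k}}$ is just the scalar gap $\Delta_k:=J^{\pi_\alpha}-J^{\pi_k}$. Second, I will use the performance-difference identity that generalizes Fact~\ref{fact:per} (same proof, from the Bellman equations and $P^{\pi'}_\star P^{\pi'}=P^{\pi'}_\star$): for $\pi,\pi'\in\Pi_+$,
\[
J^{\pi'}-J^{\pi}=\sum_{s\in\cR}d^{\pi'}(s)\,\big\langle Q^{\pi}(s,\cdot),\,\pi'(\cdot\,|\,s)-\pi(\cdot\,|\,s)\big\rangle .
\]
Since the update only matters on $\cR$, the policy on $\cT$ is irrelevant.

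\textbf{Error handling.} For distributions $p,q$ and any vector $e$, $\langle e,p-q\rangle=\langle e-c\mathbf 1,p-q\rangle$ for every constant $c$. Taking $c$ to be the midpoint of the range of $e$ gives $|\langle e,p-q\rangle|\le \tfrac12\sp{e}\,\|p-q\|_1\le\sp{e}$. Applied with $e=\hat Q^{\pi_k}(s,\cdot)-Q^{\pi_k}(s,\cdot)$, Assumption~\ref{assum:inexact} bounds every such inner-product error by $\varepsilon$. This is why span accuracy suffices.

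\textbf{Three-point inequality.} Let $\pi_{k+1}$ be the maximizer over the convex set $\cM_\alpha(\cA)$. The optimality condition together with the Bregman three-point identity gives, for every $p\in\cM_\alpha(\cA)$ and $s\in\cR$,
\[
b_k\langle \hat Q^{\pi_k}(s,\cdot),\,p-\pi_{k+1}(\cdot|s)\rangle\le D(p,\pi_k(\cdot|s))-D(p,\pi_{k+1}(\cdot|s))-D(\pi_{k+1}(\cdot|s),\pi_k(\cdot|s)).
\]
I will use it in two ways.
\begin{itemize}
\item With $p=\pi_k(\cdot|s)$, it gives $\langle \hat Q^{\pi_k},\pi_{k+1}-\pi_k\rangle\ge0$, hence $x_k(s):=\langle Q^{\pi_k}(s,\cdot),\pi_{k+1}-\pi_k\rangle\ge-\varepsilon$. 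The identity above then yields approximate monotonicity: $J^{\pi_{k+1}}-J^{\pi_k}=\sum_s d^{\pi_{k+1}}(s)x_k(s)\ge-\varepsilon$, so $\Delta_{k+1}\le\Delta_k+\varepsilon$.
\item With $p=\pi_\alpha(\cdot|s)$, which is allowed because $\pi_\alpha\in\Pi_\alpha$, I write
\[
\Delta_k=\sum_s d^{\pi_\alpha}(s)\big[\langle Q^{\pi_k},\pi_\alpha-\pi_{k+1}\rangle+x_k(s)\big].
\]
The first bracket is at most $\frac1{b_k}\big(D(\pi_\alpha,\pi_k)-D(\pi_\alpha,\pi_{k+1})\big)+\varepsilon$. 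For the second I use $x_k+\varepsilon\ge0$ and the ratio bound $d^{\pi_\alpha}\le C_\alpha d^{\pi_{k+1}}$ from Fact~\ref{fact:bound}:
\[
\sum_s d^{\pi_\alpha}(s)x_k(s)\le C_\alpha\sum_s d^{\pi_{k+1}}(s)(x_k(s)+\varepsilon)\le C_\alpha(\Delta_k-\Delta_{k+1})+2C_\alpha\varepsilon .
\]
\end{itemize}
Writing $D_k:=D_{d^{\pi_\alpha}}(\pi_\alpha,\pi_k)$, the key recursion is
\[
\Delta_k\le \tfrac1{b_k}(D_k-D_{k+1})+C_\alpha(\Delta_k-\Delta_{k+1})+(2C_\alpha+1)\varepsilon .
\]

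\textbf{Constant step size.} With $b_k=b$, I sum the recursion over $j=0,\dots,k$. The divergence terms telescope to at most $D_0/b$, and the gap terms telescope to $C_\alpha(\Delta_0-\Delta_{k+1})$. By approximate monotonicity, $\Delta_{k}\le\Delta_j+(k-j)\varepsilon$ for each $j\le k$, so $(k+1)\Delta_k\le\sum_{j\le k}\Delta_j+\tfrac{k(k+1)}2\varepsilon$. Dividing by $k+1$ gives the first claimed bound, with error coefficient $(2C_\alpha+k+2)\varepsilon$. I would absorb the $C_\alpha\Delta_{k+1}$ term by using $\Delta_{k+1}\ge\Delta_k-\dots$ or simply its sign. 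Bookkeeping the exact constant here is the main place where care is needed.

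\textbf{Adaptive step size.} I rearrange the recursion as
\[
C_\alpha\Delta_{k+1}+\tfrac{D_{k+1}}{b_k}\le (C_\alpha-1)\Delta_k+\tfrac{D_k}{b_k}+(2C_\alpha+1)\varepsilon .
\]
Dividing by $C_\alpha$ and using $\tfrac{1}{C_\alpha b_k}\ge\tfrac{1}{b_{k+1}(C_\alpha-1)}$ (the step-size condition), the potential $\Phi_k=\Delta_k+\frac{D_k}{b_k(C_\alpha-1)}$ satisfies
\[
\Phi_{k+1}\le(1-1/C_\alpha)\Phi_k+(2+1/C_\alpha)\varepsilon .
\]
Unrolling gives $\Phi_k\le (1-1/C_\alpha)^k\Phi_0+C_\alpha(2+1/C_\alpha)\varepsilon\le(1-1/C_\alpha)^k\Phi_0+4C_\alpha\varepsilon$. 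Finally $\Delta_k\le\Phi_k$, since $D_k\ge0$.

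The step I expect to be the main obstacle is the change of measure from $d^{\pi_\alpha}$ to $d^{\pi_{k+1}}$. It requires the nonnegativity of $x_k+\varepsilon$, which is exactly why the $\varepsilon$ enters with the factor $C_\alpha$. A second point to check is that this argument is valid on $\cR$ uniformly over $\Pi_\alpha$, which rests on Fact~\ref{fact:bound} and Fact~\ref{fact:rec}.
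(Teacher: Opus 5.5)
Your proof is correct, but it takes a different, self-contained route. The paper proves nothing here itself. It imports the inexact $\alpha$-clipped mirror-ascent bounds from Appendix D.3 of \citep{lee2025multi}, which are stated under a sup-norm error assumption. It then bridges to the span assumption with one remark: $\hat Q^{\pi_k}$ and $\hat Q^{\pi_k}+c\mathbf{1}$ produce the same $\pi_{k+1}$ (since $\sum_a p(a)=1$), so after subtracting a suitable constant the estimate is sup-norm accurate.

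You instead rebuild the analysis from the three-point inequality, the performance-difference identity on $\cR$, approximate monotonicity, the change of measure through $C_\alpha$, and a telescoping (resp.\ potential) argument. You use the same shift invariance, but locally inside each inner product, via $|\langle e,p-q\rangle|\le\tfrac12\sp{e}\|p-q\|_1\le\sp{e}$. The paper's route is shorter but rests entirely on the external theorem. Yours makes explicit where span accuracy enters and gives slightly sharper constants, $(2C_\alpha+1+k/2)\varepsilon$ and $(2C_\alpha+1)\varepsilon$. These are dominated by the stated $(2C_\alpha+k+2)\varepsilon$ and $4C_\alpha\varepsilon$; the stated ones are essentially what your argument gives if each inner-product error is only bounded by $2\varepsilon$, as under a sup-norm assumption.

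The loose ends you flagged close immediately:
\begin{itemize}
\item $\Delta_j\ge 0$ for every $j$, because $\pi_j\in\Pi_\alpha$ and $\pi_\alpha$ maximizes $J$ over $\Pi_\alpha$. This lets you drop $-C_\alpha\Delta_{k+1}$ after summing, and it gives $\infn{J^{\pi_\alpha}-J^{\pi_k}}=\Delta_k$.
\item With that, the constant-step bookkeeping gives exactly $(2C_\alpha+1+k/2)\varepsilon$.
\item In the adaptive case, the condition $b_{k+1}(C_\alpha-1)\ge b_kC_\alpha>0$ forces $C_\alpha>1$. So $\Phi_k$ is well defined and $2C_\alpha+1\le 4C_\alpha$.
\end{itemize}
Finally, your change-of-measure term is actually $C_\alpha\varepsilon$ rather than $2C_\alpha\varepsilon$, which only helps.
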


Now, we prove  Theorem~\ref{sam:SCPMA}

\begin{proof}
By Fact~\ref{fact:per},
   there exist $\pi \in \Pi_{\alpha}$ such that $\infn{\pi^\star-\pi} \le \alpha$, and this implies $J^{\star} - J^{\pi_\alpha}
\le \frac{\varepsilon}{2}$. 

First, for constant step size, with coefficients in Theorem~\ref{sam:SCPMA},  Assumption~\ref{assum:inexact} is satisfied with $ \frac{\varepsilon}{4(2C_\alpha+K+2)}$. Then, by Proposition~\ref{prop:SAVIPE},    $\mbox{$\sp{d^k}$}\le  \mbox{$\frac{2\ln(k+1)}{k}\sp{Q^0-Q^\pi_\cR}+\varepsilon$}$,  and Fact~\ref{fact:inexact} with sample complexity 
 \begin{align*}
   & \sum^{K-1}_{k=0}\sum^{n_k-1}_{i=0}{t_i^{\pi_k}}\\&=\sum^{K-1}_{k=0}\widetilde{O}\left((2C_\alpha+K+2)^2t_{\mathrm{cov}}t^{\pi_k}_{\mathrm{tar}}\sp{Q^{\pi_k}}^2/\varepsilon^2+ 1/d^{\pi_b}_{\mathrm{min}} (t^{\pi_k}_{\mathrm{tar}})^2\sp{Q^{\pi_k}}^4(2C_\alpha+K+2)^4/\varepsilon^{4}\right)
\\&\le\widetilde{O}\left(gt_{\mathrm{cov}}t_{\alpha}C_\alpha^2Q_\alpha^3/\varepsilon^2+(t_{\mathrm{cov}}t_{\alpha}g^3Q_\alpha^2+1/d^{\pi_b}_{\mathrm{min}}g C^4_\alpha t_{\alpha}^2 Q_\alpha^4)/\varepsilon^5+ 1/d^{\pi_b}_{\mathrm{min}} t_{\alpha}^4Q_\alpha^4g^5/\varepsilon^{9}\right).
\end{align*} 
Lastly, by adding $n_{\mathrm{hit}}+n_{\mathrm{rec}}$, we have total sample complexity.

For adaptive step size $\eta_{k+1}(C_\alpha  - 1) \ge b_k C_\alpha $, with coefficients in Theorem~\ref{sam:SCPMA}, Assumption~\ref{assum:inexact} is satisfied with $ \frac{\varepsilon}{16C_\alpha}$. Then, by Proposition~\ref{prop:SAVIPE}, $\mbox{$\sp{d^k}$}\le  \mbox{$\frac{2\ln(k+1)}{k}\sp{Q^0-Q^\pi_\cR}+\varepsilon$}$,  and Fact~\ref{fact:inexact}, with at least $1-\delta$ probability, $ J^{\pi_\alpha}-J^{\pi_k}_\mu \le \frac{\varepsilon}{2}$ with sample complexity  
 \begin{align*}
   & n_{\mathrm{hit}}+n_{\mathrm{rec}}+\sum^{K-1}_{k=0}\sum^{n_k-1}_{i=0}{t_i^{\pi_k}}\\&=\widetilde{O}\left(t_{\mathrm{hit}}+K(t_\alpha t_{\mathrm{cov}} Q_\alpha^2C_\alpha^2/\varepsilon^{2}+ 1/d^{\pi_b}_{\mathrm{min}} t_\alpha^2Q_\alpha^4C_\alpha^4/\varepsilon^{4})\right).
\end{align*}  
 Lastly, by combining results, we have $J^\star-J^{\pi_{k}}\le \varepsilon$.

\end{proof}

\newpage
\input{checklist.tex}

\end{document}

%% file: checklist.tex
\section*{NeurIPS Paper Checklist}

\begin{enumerate}

\item {\bf Claims}
    \item[] Question: Do the main claims made in the abstract and introduction accurately reflect the paper's contributions and scope?
    \item[] Answer: \answerYes{} 
    \item[] Justification: Yes main claims made in the abstract and introduction accurately reflect the paper's contributions and scope.  
    \item[] Guidelines:
    \begin{itemize}
        \item The answer \answerNA{} means that the abstract and introduction do not include the claims made in the paper.
        \item The abstract and/or introduction should clearly state the claims made, including the contributions made in the paper and important assumptions and limitations. A \answerNo{} or \answerNA{} answer to this question will not be perceived well by the reviewers. 
        \item The claims made should match theoretical and experimental results, and reflect how much the results can be expected to generalize to other settings. 
        \item It is fine to include aspirational goals as motivation as long as it is clear that these goals are not attained by the paper. 
    \end{itemize}

\item {\bf Limitations}
    \item[] Question: Does the paper discuss the limitations of the work performed by the authors?
    \item[] Answer: \answerYes{} 
    \item[] Justification: Yes, authors  discuss the limitations of the work in conclusion section.
    \item[] Guidelines:
    \begin{itemize}
        \item The answer \answerNA{} means that the paper has no limitation while the answer \answerNo{} means that the paper has limitations, but those are not discussed in the paper. 
        \item The authors are encouraged to create a separate ``Limitations'' section in their paper.
        \item The paper should point out any strong assumptions and how robust the results are to violations of these assumptions (e.g., independence assumptions, noiseless settings, model well-specification, asymptotic approximations only holding locally). The authors should reflect on how these assumptions might be violated in practice and what the implications would be.
        \item The authors should reflect on the scope of the claims made, e.g., if the approach was only tested on a few datasets or with a few runs. In general, empirical results often depend on implicit assumptions, which should be articulated.
        \item The authors should reflect on the factors that influence the performance of the approach. For example, a facial recognition algorithm may perform poorly when image resolution is low or images are taken in low lighting. Or a speech-to-text system might not be used reliably to provide closed captions for online lectures because it fails to handle technical jargon.
        \item The authors should discuss the computational efficiency of the proposed algorithms and how they scale with dataset size.
        \item If applicable, the authors should discuss possible limitations of their approach to address problems of privacy and fairness.
        \item While the authors might fear that complete honesty about limitations might be used by reviewers as grounds for rejection, a worse outcome might be that reviewers discover limitations that aren't acknowledged in the paper. The authors should use their best judgment and recognize that individual actions in favor of transparency play an important role in developing norms that preserve the integrity of the community. Reviewers will be specifically instructed to not penalize honesty concerning limitations.
    \end{itemize}

\item {\bf Theory assumptions and proofs}
    \item[] Question: For each theoretical result, does the paper provide the full set of assumptions and a complete (and correct) proof?
    \item[] Answer: \answerYes{} 
    \item[] Justification: Yes,  the paper provide the full set of assumptions and a complete (and correct) proof.
    \item[] Guidelines:
    \begin{itemize}
        \item The answer \answerNA{} means that the paper does not include theoretical results. 
        \item All the theorems, formulas, and proofs in the paper should be numbered and cross-referenced.
        \item All assumptions should be clearly stated or referenced in the statement of any theorems.
        \item The proofs can either appear in the main paper or the supplemental material, but if they appear in the supplemental material, the authors are encouraged to provide a short proof sketch to provide intuition. 
        \item Inversely, any informal proof provided in the core of the paper should be complemented by formal proofs provided in appendix or supplemental material.
        \item Theorems and Lemmas that the proof relies upon should be properly referenced. 
    \end{itemize}

    \item {\bf Experimental result reproducibility}
    \item[] Question: Does the paper fully disclose all the information needed to reproduce the main experimental results of the paper to the extent that it affects the main claims and/or conclusions of the paper (regardless of whether the code and data are provided or not)?
    \item[] Answer: \answerNA{} 
    \item[] Justification: This paper does not include the  experimental results.
    \item[] Guidelines:
    \begin{itemize}
        \item The answer \answerNA{} means that the paper does not include experiments.
        \item If the paper includes experiments, a \answerNo{} answer to this question will not be perceived well by the reviewers: Making the paper reproducible is important, regardless of whether the code and data are provided or not.
        \item If the contribution is a dataset and\slash or model, the authors should describe the steps taken to make their results reproducible or verifiable. 
        \item Depending on the contribution, reproducibility can be accomplished in various ways. For example, if the contribution is a novel architecture, describing the architecture fully might suffice, or if the contribution is a specific model and empirical evaluation, it may be necessary to either make it possible for others to replicate the model with the same dataset, or provide access to the model. In general. releasing code and data is often one good way to accomplish this, but reproducibility can also be provided via detailed instructions for how to replicate the results, access to a hosted model (e.g., in the case of a large language model), releasing of a model checkpoint, or other means that are appropriate to the research performed.
        \item While NeurIPS does not require releasing code, the conference does require all submissions to provide some reasonable avenue for reproducibility, which may depend on the nature of the contribution. For example
        \begin{enumerate}
            \item If the contribution is primarily a new algorithm, the paper should make it clear how to reproduce that algorithm.
            \item If the contribution is primarily a new model architecture, the paper should describe the architecture clearly and fully.
            \item If the contribution is a new model (e.g., a large language model), then there should either be a way to access this model for reproducing the results or a way to reproduce the model (e.g., with an open-source dataset or instructions for how to construct the dataset).
            \item We recognize that reproducibility may be tricky in some cases, in which case authors are welcome to describe the particular way they provide for reproducibility. In the case of closed-source models, it may be that access to the model is limited in some way (e.g., to registered users), but it should be possible for other researchers to have some path to reproducing or verifying the results.
        \end{enumerate}
    \end{itemize}

\item {\bf Open access to data and code}
    \item[] Question: Does the paper provide open access to the data and code, with sufficient instructions to faithfully reproduce the main experimental results, as described in supplemental material?
    \item[] Answer: \answerNA{} 
    \item[] Justification: This paper does not include any experimental result.
    \item[] Guidelines:
    \begin{itemize}
        \item The answer \answerNA{} means that paper does not include experiments requiring code.
        \item Please see the NeurIPS code and data submission guidelines (\url{https://neurips.cc/public/guides/CodeSubmissionPolicy}) for more details.
        \item While we encourage the release of code and data, we understand that this might not be possible, so \answerNo{} is an acceptable answer. Papers cannot be rejected simply for not including code, unless this is central to the contribution (e.g., for a new open-source benchmark).
        \item The instructions should contain the exact command and environment needed to run to reproduce the results. See the NeurIPS code and data submission guidelines (\url{https://neurips.cc/public/guides/CodeSubmissionPolicy}) for more details.
        \item The authors should provide instructions on data access and preparation, including how to access the raw data, preprocessed data, intermediate data, and generated data, etc.
        \item The authors should provide scripts to reproduce all experimental results for the new proposed method and baselines. If only a subset of experiments are reproducible, they should state which ones are omitted from the script and why.
        \item At submission time, to preserve anonymity, the authors should release anonymized versions (if applicable).
        \item Providing as much information as possible in supplemental material (appended to the paper) is recommended, but including URLs to data and code is permitted.
    \end{itemize}

\item {\bf Experimental setting/details}
    \item[] Question: Does the paper specify all the training and test details (e.g., data splits, hyperparameters, how they were chosen, type of optimizer) necessary to understand the results?
    \item[] Answer:  \answerNA{} 
    \item[] Justification: This paper does not include any experimental result.
    \item[] Guidelines:
    \begin{itemize}
        \item The answer \answerNA{} means that the paper does not include experiments.
        \item The experimental setting should be presented in the core of the paper to a level of detail that is necessary to appreciate the results and make sense of them.
        \item The full details can be provided either with the code, in appendix, or as supplemental material.
    \end{itemize}

\item {\bf Experiment statistical significance}
    \item[] Question: Does the paper report error bars suitably and correctly defined or other appropriate information about the statistical significance of the experiments?
    \item[] Answer: \answerNA{} 
    \item[] Justification: This paper does not include any experiment.
    \item[] Guidelines:
    \begin{itemize}
        \item The answer \answerNA{} means that the paper does not include experiments.
        \item The authors should answer \answerYes{} if the results are accompanied by error bars, confidence intervals, or statistical significance tests, at least for the experiments that support the main claims of the paper.
        \item The factors of variability that the error bars are capturing should be clearly stated (for example, train/test split, initialization, random drawing of some parameter, or overall run with given experimental conditions).
        \item The method for calculating the error bars should be explained (closed form formula, call to a library function, bootstrap, etc.)
        \item The assumptions made should be given (e.g., Normally distributed errors).
        \item It should be clear whether the error bar is the standard deviation or the standard error of the mean.
        \item It is OK to report 1-sigma error bars, but one should state it. The authors should preferably report a 2-sigma error bar than state that they have a 96\% CI, if the hypothesis of Normality of errors is not verified.
        \item For asymmetric distributions, the authors should be careful not to show in tables or figures symmetric error bars that would yield results that are out of range (e.g., negative error rates).
        \item If error bars are reported in tables or plots, the authors should explain in the text how they were calculated and reference the corresponding figures or tables in the text.
    \end{itemize}

\item {\bf Experiments compute resources}
    \item[] Question: For each experiment, does the paper provide sufficient information on the computer resources (type of compute workers, memory, time of execution) needed to reproduce the experiments?
    \item[] Answer:  \answerNA{} 
    \item[] Justification: This paper does not include any expereiments.
    \item[] Guidelines:
    \begin{itemize}
        \item The answer \answerNA{} means that the paper does not include experiments.
        \item The paper should indicate the type of compute workers CPU or GPU, internal cluster, or cloud provider, including relevant memory and storage.
        \item The paper should provide the amount of compute required for each of the individual experimental runs as well as estimate the total compute. 
        \item The paper should disclose whether the full research project required more compute than the experiments reported in the paper (e.g., preliminary or failed experiments that didn't make it into the paper). 
    \end{itemize}
    
\item {\bf Code of ethics}
    \item[] Question: Does the research conducted in the paper conform, in every respect, with the NeurIPS Code of Ethics \url{https://neurips.cc/public/EthicsGuidelines}?
    \item[] Answer: \answerYes{} 
    \item[] Justification: This research conducted in the paper conform, in every respect, with the NeurIPS Code of Ethics.
    \item[] Guidelines:
    \begin{itemize}
        \item The answer \answerNA{} means that the authors have not reviewed the NeurIPS Code of Ethics.
        \item If the authors answer \answerNo, they should explain the special circumstances that require a deviation from the Code of Ethics.
        \item The authors should make sure to preserve anonymity (e.g., if there is a special consideration due to laws or regulations in their jurisdiction).
    \end{itemize}

\item {\bf Broader impacts}
    \item[] Question: Does the paper discuss both potential positive societal impacts and negative societal impacts of the work performed?
    \item[] Answer: \answerNA{} 
    \item[] Justification: This paper is theory paper and has no potential negative or positive societal impacts. 
    \item[] Guidelines:
    \begin{itemize}
        \item The answer \answerNA{} means that there is no societal impact of the work performed.
        \item If the authors answer \answerNA{} or \answerNo, they should explain why their work has no societal impact or why the paper does not address societal impact.
        \item Examples of negative societal impacts include potential malicious or unintended uses (e.g., disinformation, generating fake profiles, surveillance), fairness considerations (e.g., deployment of technologies that could make decisions that unfairly impact specific groups), privacy considerations, and security considerations.
        \item The conference expects that many papers will be foundational research and not tied to particular applications, let alone deployments. However, if there is a direct path to any negative applications, the authors should point it out. For example, it is legitimate to point out that an improvement in the quality of generative models could be used to generate Deepfakes for disinformation. On the other hand, it is not needed to point out that a generic algorithm for optimizing neural networks could enable people to train models that generate Deepfakes faster.
        \item The authors should consider possible harms that could arise when the technology is being used as intended and functioning correctly, harms that could arise when the technology is being used as intended but gives incorrect results, and harms following from (intentional or unintentional) misuse of the technology.
        \item If there are negative societal impacts, the authors could also discuss possible mitigation strategies (e.g., gated release of models, providing defenses in addition to attacks, mechanisms for monitoring misuse, mechanisms to monitor how a system learns from feedback over time, improving the efficiency and accessibility of ML).
    \end{itemize}
    
\item {\bf Safeguards}
    \item[] Question: Does the paper describe safeguards that have been put in place for responsible release of data or models that have a high risk for misuse (e.g., pre-trained language models, image generators, or scraped datasets)?
    \item[] Answer: \answerNA{} 
    \item[] Justification: This paper does not include any experiments.
    \item[] Guidelines:
    \begin{itemize}
        \item The answer \answerNA{} means that the paper poses no such risks.
        \item Released models that have a high risk for misuse or dual-use should be released with necessary safeguards to allow for controlled use of the model, for example by requiring that users adhere to usage guidelines or restrictions to access the model or implementing safety filters. 
        \item Datasets that have been scraped from the Internet could pose safety risks. The authors should describe how they avoided releasing unsafe images.
        \item We recognize that providing effective safeguards is challenging, and many papers do not require this, but we encourage authors to take this into account and make a best faith effort.
    \end{itemize}

\item {\bf Licenses for existing assets}
    \item[] Question: Are the creators or original owners of assets (e.g., code, data, models), used in the paper, properly credited and are the license and terms of use explicitly mentioned and properly respected?
    \item[] Answer: \answerNA{}
    \item[] Justification: This paper does not include any experiments.
    \item[] Guidelines:
    \begin{itemize}
        \item The answer \answerNA{} means that the paper does not use existing assets.
        \item The authors should cite the original paper that produced the code package or dataset.
        \item The authors should state which version of the asset is used and, if possible, include a URL.
        \item The name of the license (e.g., CC-BY 4.0) should be included for each asset.
        \item For scraped data from a particular source (e.g., website), the copyright and terms of service of that source should be provided.
        \item If assets are released, the license, copyright information, and terms of use in the package should be provided. For popular datasets, \url{paperswithcode.com/datasets} has curated licenses for some datasets. Their licensing guide can help determine the license of a dataset.
        \item For existing datasets that are re-packaged, both the original license and the license of the derived asset (if it has changed) should be provided.
        \item If this information is not available online, the authors are encouraged to reach out to the asset's creators.
    \end{itemize}

\item {\bf New assets}
    \item[] Question: Are new assets introduced in the paper well documented and is the documentation provided alongside the assets?
    \item[] Answer: \answerNA{}
    \item[] Justification: This paper does not include any experiments.
    \item[] Guidelines:
    \begin{itemize}
        \item The answer \answerNA{} means that the paper does not release new assets.
        \item Researchers should communicate the details of the dataset\slash code\slash model as part of their submissions via structured templates. This includes details about training, license, limitations, etc. 
        \item The paper should discuss whether and how consent was obtained from people whose asset is used.
        \item At submission time, remember to anonymize your assets (if applicable). You can either create an anonymized URL or include an anonymized zip file.
    \end{itemize}

\item {\bf Crowdsourcing and research with human subjects}
    \item[] Question: For crowdsourcing experiments and research with human subjects, does the paper include the full text of instructions given to participants and screenshots, if applicable, as well as details about compensation (if any)? 
    \item[] Answer: \answerNA{} 
    \item[] Justification: This paper does not include any experiments.
    \item[] Guidelines:
    \begin{itemize}
        \item The answer \answerNA{} means that the paper does not involve crowdsourcing nor research with human subjects.
        \item Including this information in the supplemental material is fine, but if the main contribution of the paper involves human subjects, then as much detail as possible should be included in the main paper. 
        \item According to the NeurIPS Code of Ethics, workers involved in data collection, curation, or other labor should be paid at least the minimum wage in the country of the data collector. 
    \end{itemize}

\item {\bf Institutional review board (IRB) approvals or equivalent for research with human subjects}
    \item[] Question: Does the paper describe potential risks incurred by study participants, whether such risks were disclosed to the subjects, and whether Institutional Review Board (IRB) approvals (or an equivalent approval/review based on the requirements of your country or institution) were obtained?
    \item[] Answer: \answerNA{} 
    \item[] Justification: There is no potential risks incurred by study participants, whether
such risks were disclosed to the subjects, and whether Institutional Review Board (IRB)
approvals 
with human subjects.
    \item[] Guidelines:
    \begin{itemize}
        \item The answer \answerNA{} means that the paper does not involve crowdsourcing nor research with human subjects.
        \item Depending on the country in which research is conducted, IRB approval (or equivalent) may be required for any human subjects research. If you obtained IRB approval, you should clearly state this in the paper. 
        \item We recognize that the procedures for this may vary significantly between institutions and locations, and we expect authors to adhere to the NeurIPS Code of Ethics and the guidelines for their institution. 
        \item For initial submissions, do not include any information that would break anonymity (if applicable), such as the institution conducting the review.
    \end{itemize}

\item {\bf Declaration of LLM usage}
    \item[] Question: Does the paper describe the usage of LLMs if it is an important, original, or non-standard component of the core methods in this research? Note that if the LLM is used only for writing, editing, or formatting purposes and does \emph{not} impact the core methodology, scientific rigor, or originality of the research, declaration is not required.
    \item[] Answer: \answerNA{} 
    \item[] Justification: We use LLM to fix grammar error.
    \item[] Guidelines:
    \begin{itemize}
        \item The answer \answerNA{} means that the core method development in this research does not involve LLMs as any important, original, or non-standard components.
        \item Please refer to our LLM policy in the NeurIPS handbook for what should or should not be described.
    \end{itemize}

\end{enumerate}